\theoremstyle{plain}
\newtheorem{theorem}{Theorem}[section]
\newtheorem{lemma}[theorem]{Lemma}
\theoremstyle{definition}
\newtheorem{definition}[theorem]{Definition}
\theoremstyle{remark}
\newcommand{\icol}[1]{
  \left(\begin{smallmatrix}#1\end{smallmatrix}\right)%
}
\newcommand\smallO{
  \mathchoice
    {{\scriptstyle\mathcal{O}}}
    {{\scriptstyle\mathcal{O}}}
    {{\scriptscriptstyle\mathcal{O}}}
    {\scalebox{.7}{$\scriptscriptstyle\mathcal{O}$}}
}
\icmltitlerunning{Policy-Driven World Model Adaptation for Robust Offline Model-based Reinforcement Learning}
\begin{document}

\twocolumn[
  \icmltitle{Policy-Driven World Model Adaptation for\\ Robust Offline Model-based Reinforcement Learning}



  \icmlsetsymbol{equal}{*}

  \begin{icmlauthorlist}
    \icmlauthor{Jiayu Chen}{equal,hku,infi}
    \icmlauthor{Le Xu}{equal,thu}
    \icmlauthor{Aravind Venugopal}{cmu}
    \icmlauthor{Jeff Schneider}{cmu}
  \end{icmlauthorlist}

  \icmlaffiliation{hku}{The University of Hong Kong, Hong Kong SAR}
  \icmlaffiliation{infi}{INFIFORCE Intelligent Technology Co., Ltd., Hangzhou, China}
  \icmlaffiliation{thu}{Tsinghua University, Beijing, China}
  \icmlaffiliation{cmu}{Carnegie Mellon University, Pittsburgh, PA 15213, USA}

  \icmlcorrespondingauthor{Jiayu Chen}{jiayuc@hku.hk}

  \icmlkeywords{Machine Learning, ICML}

  \vskip 0.3in
]



\printAffiliationsAndNotice{}  

\begin{abstract}
  Offline reinforcement learning (RL) offers a powerful paradigm for data-driven control. Compared to model-free approaches, offline model-based RL (MBRL) explicitly learns a world model from a static dataset and uses it as a surrogate simulator, improving data efficiency and enabling potential generalization beyond the dataset support. However, most existing offline MBRL methods follow a two-stage training procedure: first learning a world model by maximizing the likelihood of the observed transitions, then optimizing a policy to maximize its expected return under the learned model. This objective mismatch results in a world model that is not necessarily optimized for effective policy learning. Moreover, we observe that policies learned via offline MBRL often lack robustness during deployment, and small adversarial noise in the environment can lead to significant performance degradation. To address these, we propose a framework that dynamically adapts the world model alongside the policy under a unified learning objective aimed at improving robustness. At the core of our method is a maximin optimization problem, which we solve by innovatively utilizing Stackelberg learning dynamics. We provide theoretical analysis to support our design and introduce computationally efficient implementations. We benchmark our algorithm on twelve noisy D4RL MuJoCo tasks and three stochastic Tokamak Control tasks, demonstrating its state-of-the-art performance. 
\end{abstract}

\vspace{-0.25in}
\section{Introduction and Related Works}
\vspace{-0.05in}
Offline RL \citep{DBLP:journals/corr/abs-2005-01643} leverages offline datasets of transitions, collected by a behavior policy, to train a policy. To avoid overestimation of the expected return for out-of-distribution states, which can mislead policy learning, model-free offline RL methods \citep{DBLP:conf/nips/KumarZTL20, DBLP:conf/nips/AnMKS21} often constrain the learned policy to remain close to the behavior policy. However, collecting large demonstrations from a high-quality behavior policy, can be expensive. This motivates offline model-based reinforcement learning (MBRL) approaches, such as \citet{DBLP:conf/nips/YuTYEZLFM20, DBLP:conf/icml/SunZJLYY23, chen2024deep}. These methods train dynamics models from offline data and optimize policies using imaginary rollouts simulated by the models. Notably, the dynamics modeling is independent of the behavior policy, making it possible to learn effective policies from any behavior policy that reasonably covers the state-action spaces. 

As detailed in Section \ref{bg}, offline MBRL typically follows a two-stage framework: first, learning a world model by maximizing transition likelihood in the offline dataset; and second, using the learned model as a surrogate simulator to train RL policies. Unlike in online MBRL \citep{DBLP:conf/icml/RossB12, DBLP:conf/iclr/HafnerLB020, DBLP:conf/nips/ChenAL23}, the world model in offline settings is typically not adapted alongside the policy. Moreover, the model training objective (likelihood maximization) differs from policy optimization (maximizing expected return), causing objective mismatch \citep{DBLP:journals/tmlr/Wei0M0C24}. There has been extensive research addressing this mismatch in online MBRL (e.g., \citet{DBLP:conf/nips/Farahmand18, DBLP:conf/nips/GrimmBSS20, DBLP:conf/aaai/NikishinAAB22, DBLP:conf/nips/EysenbachKLS22, DBLP:conf/l4dc/MaSYBJ23, DBLP:conf/icml/Vemula0SBC23}), primarily by directly optimizing the model to increase the current policy's return in the environment. However, applying these strategies offline can be problematic. Since the real environment is inaccessible during offline training, optimizing the world model solely to increase returns in imagined rollouts can cause it to diverge from the true dynamics, thereby misleading the policy updates. Comparisons between \citet{DBLP:conf/nips/YangZFZ22} and \citet{DBLP:conf/icml/SunZJLYY23} show that, for offline MBRL, return-driven model adaptation can underperform approaches that do not dynamically update the world model. \textbf{How to adapt the world model alongside the policy under a unified objective remains an open challenge in offline MBRL.} In this paper, we propose such a unified training framework to learn robust RL policies from offline data. Specifically, the objective is formulated as a maximin problem: we maximize the worst-case performance of the policy, where the policy maximizes its expected return while the world model is updated adversarially to minimize it. This direction of model updating is opposite to online MBRL, as conservatism is critical in offline RL \citep{DBLP:journals/corr/abs-2005-01643}.

\vspace{-0.1in}
\begin{figure}[ht]
  \centering
  \includegraphics[width=0.9\columnwidth]{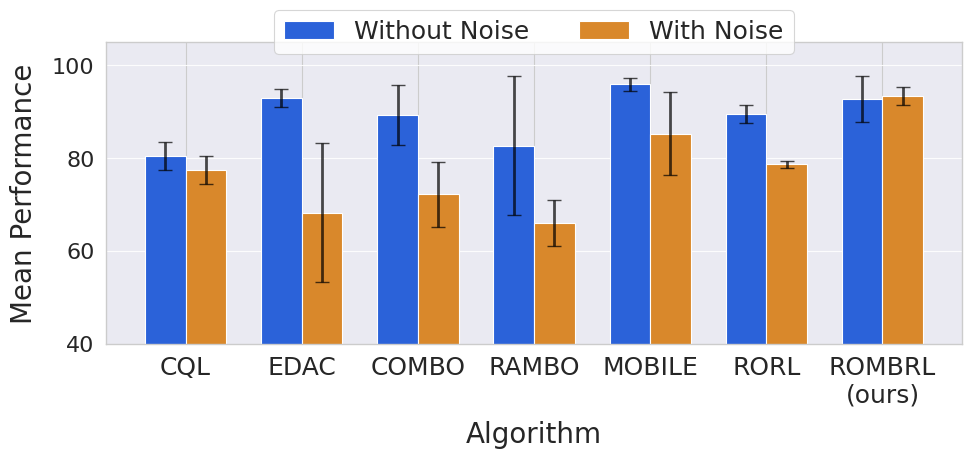}
  
  \vspace{-0.1in}
  \caption{Average scores of different offline RL algorithms on nine D4RL MuJoCo tasks (corresponding to the tasks in Table \ref{table:1} excluding those with the random data type), before and after applying random noise to state transitions. The noise is modeled as zero-mean Gaussian with a standard deviation equal to 5\% of the state change, simulating common measurement noise.}
  
  \label{fig:0}
  \vspace{-0.05in}
\end{figure}

Another motivation behind our algorithm design is that the state-of-the-art model-free and model-based offline RL algorithms lack deployment robustness (Figure \ref{fig:0}), as the learned policies often overfit to the static dataset or data-driven dynamics models. Maximizing the worst-case performance of the policy can potentially mitigate such issues. Recent practical robust algorithms like RFQI \citep{DBLP:conf/nips/PanagantiXKG22} and RORL \citep{yang2022rorlrobustofflinereinforcement} improve robustness via robust value estimation or conservative smoothing, while TRACER \citep{yang2024tracer} tackles data corruption through variational inference. \textbf{However, these methods predominantly operate in a model-free manner, lacking an explicit world model to generalize beyond the dataset support, or focus primarily on training data corruption rather than dynamics mismatch.} Closest to our work is RAMBO \citep{DBLP:conf/nips/RigterLH22}, which also targets robust offline MBRL. However, RAMBO uses alternating updates, implicitly treating the interaction as a symmetric zero-sum game. This approach overlooks the bi-level nature of the problem, where the worst-case world model is an implicit function of the policy. As a result, it lacks theoretical convergence guarantees and often leads to over-conservatism by hedging against all possible dynamics. To address this, we formulate the problem as a Stackelberg game, capturing the asymmetric leader-follower relationship. We employ Stackelberg learning dynamics \citep{DBLP:conf/icml/RajeswaranMK20, DBLP:conf/icml/FiezCR20, DBLP:journals/corr/abs-2309-16188, DBLP:journals/corr/abs-2305-17327}, which are essential for properly solving the inherent maximin optimization with convergence guarantees (see Section \ref{bg} for details). Additionally, since the world model is updated jointly with the policy, historical rollouts in the replay buffer would become outdated for policy training due to distributional shift. While RAMBO overlooks this issue, we introduce a novel gradient mask mechanism (detailed in Appendix \ref{algo_details}) to mitigate it while preserving training efficiency.

Our contributions are: (1) We propose ROMBRL, a novel offline MBRL algorithm that jointly optimizes the model and policy for robustness via constrained maximin optimization, providing novel theoretical guarantees on the policy's suboptimality gap. (2) We model this problem as a Stackelberg game and introduce novel Stackelberg learning dynamics for stochastic gradient updates. This is the first application of the Stackelberg framework to offline MBRL. (3) We develop practical implementations, including using the Woodbury matrix identity for efficient second-order gradients and a gradient-mask mechanism for data-efficient off-policy training. (4) ROMBRL achieves state-of-the-art performance and robustness on the D4RL MuJoCo benchmark under noise and the challenging Tokamak Control benchmark. In clean settings, it performs on par with SOTA algorithms and outperforms RAMBO, a robust baseline.
\vspace{-0.1in}
\section{Background} \label{bg}


\textbf{Offline Model-based Reinforcement Learning:} A Markov Decision Process (MDP) \citep{puterman2014markov} can be described as a tuple $M=\langle \mathcal{S}, \mathcal{A}, P, R, d_0, \gamma\rangle$. $\mathcal{S}$ and $\mathcal{A}$ are the state and action space. $P: \mathcal{S} \times \mathcal{A} \rightarrow \Delta_\mathcal{S}$ represents the transition dynamics function, $R: \mathcal{S} \times \mathcal{A} \rightarrow \Delta_{[0, 1]}$ defines the reward function, and $d_0: \mathcal{S} \rightarrow \Delta_{\mathcal{S}}$ specifies the initial state distribution, where $\Delta_\mathcal{X}$ is the set of probability distributions over the space $\mathcal{X}$. $\gamma \in [0, 1)$ is a discount factor. 

Given an offline dataset of transitions $\mathcal{D}_{\mu}=\{(s_i, a_i, r_i, s'_i)_{i=1}^N\}$  collected by a behavior policy $\mu$, typical offline MBRL methods first learn a world model ${P_\phi, R_\phi}$ through supervised learning:
\vspace{-0.1in}
\begin{equation} \label{mle}
    \max_{\phi} \mathbb{E}_{(s, a, r, s') \sim \mathcal{D}_{\mu}} \left[\log P_\phi(s' | s, a) + \log R_\phi(r | s, a)\right]
\end{equation}
Then, a policy $\pi_\theta: \mathcal{S} \rightarrow \Delta_{\mathcal{A}}$ is trained to maximize the expected return $J(\theta, \phi)$ in the MDP $M_\phi=\langle \mathcal{S}, \mathcal{A}, P_\phi, R_\phi, d_0, \gamma\rangle$: ($V(s';\theta, \phi) = \mathbb{E}_{a' \sim \pi_\theta(\cdot|s')} \left[Q(s', a'|\theta, \phi)\right]$ is the value function.)
\vspace{-0.1in}
\begin{equation} \label{J}
\begin{split}
    & \max_\theta \mathbb{E}_{s \sim d_0(\cdot), a \sim \pi_\theta(\cdot|s)} \left[Q(s, a; \theta, \phi)\right], \\
    & Q(s, a; \theta, \phi) = \mathbb{E}_{\substack{r \sim R_\phi(\cdot|s, a) \\ s' \sim P_\phi(\cdot|s, a)}}\left[r + \gamma V(s';\theta, \phi)\right]
\end{split}
\end{equation}

\vspace{-0.1in}

To address uncertainty of the learned world model, offline MBRL methods \citep{DBLP:journals/corr/abs-2410-11234} usually learn an ensemble of world models from $\mathcal{D}_\mu$ and apply an ensemble-based reward penalty to $r$ in Eq. (\ref{J}), discouraging the agent from exploring regions where \textbf{ensemble predictions} exhibit high variance. However, these reward penalty terms are heuristic-based, and as a result, these algorithms lack formal performance guarantees \citep{DBLP:conf/nips/YuKRRLF21}.

\textbf{Learning Dynamics in Stackelberg Games:} In a Stackelberg game, the leader (who plays first) and follower aim to solve the following optimization problems, respectively:
\begin{equation}
\begin{split}
    & \min_{x_1 \in \mathcal{X}_1} \left\{ f_1(x_1, x_2^*) \;\middle|\; 
    \begin{aligned}
        x_2^* \in \arg \min_{x_2 \in \mathcal{X}_2} f_2(x_1, x_2)
    \end{aligned}
    \right\}, \\
    & \min_{x_2 \in \mathcal{X}_2} f_2(x_1, x_2)
\end{split}
\end{equation}

\citet{DBLP:conf/icml/FiezCR20} propose update rules for the leader and follower in a class of two-player smooth games defined on continuous and unconstrained \(\mathcal{X}_1, \mathcal{X}_2\). Specifically, at each iteration $k$, $x_1$ and $x_2$ are updated as follows:
\begin{equation} \label{lds}
\begin{aligned}
    x_1^{k+1} &= x_1^{k} - \eta_1^k \omega^k_1, \\
    \omega^k_1 &= D_1f_1(x^k) \\
    &\quad - D_{21}f_2(x^k)^T (D_{22}f_2(x^k))^{-1} D_2f_1(x^k); \\
    x_2^{k+1} &= x_2^{k} - \eta_2^k \omega^k_2, \quad \omega^k_2 = D_2f_2(x^k).
\end{aligned}
\end{equation}
Here, $x^k$ denotes the value of $(x_1, x_2)$ at iteration $k$, while $(\eta_1^k, \eta_2^k)$ represent the learning rates; $D_1f_1(x^k)$ represents the partial derivative of $f_1(x^k)$ with respect to $x_1$, and similarly for other derivatives. The expression for \( \omega^k_1 \) is derived based on the total derivative of $f_1(x_1, x_2^*)$ with respect to \( x_1 \), of which the details are available in Appendix \ref{TD}.

The learning target for Stackelberg games is to reach Local Stackelberg Equilibrium (LSE \citep{bacsar1998dynamic}):
\begin{definition}[LSE]
    Consider $U_i \subset \mathcal{X}_i$ for each $i \in \{1, 2\}$. The strategy $x_1^* \in U_1$ is a local Stackelberg solution for the leader, if $\forall x_1 \in U_1$, $\sup_{x_2 \in R_{U_2}(x_1^*)} f_1(x_1^*, x_2) \leq \sup_{x_2 \in R_{U_2}(x_1)} f_1(x_1, x_2)$, where $R_{U_2}(x_1) = \{y \in U_2 \mid f_2(x_1, y) \leq f_2(x_1, x_2), \forall x_2 \in U_2\}$. Moreover, $(x_1^*, x_2^*)$ for any $x^*_2 \in R_{U_2}(x_1^*)$ is a Local Stackelberg Equilibrium on $U_1 \times U_2$.
\end{definition}
\vspace{-0.1in}
When using unbiased estimators $(\hat{\omega}_1^{k}, \hat{\omega}_2^{k})$ in place of $(\omega_1^{k}, \omega_2^{k})$ in the iterative updates, Theorem 7 in \citet{DBLP:conf/icml/FiezCR20} establishes that there exists a neighborhood $U = U_1 \times U_2$ of the LSE $x^* = (x_1^*, x_2^*)$ such that for any $x^0 \in U$, $x^k$ converges almost surely to $x^*$. This result holds for a smooth general-sum game $(f_1, f_2)$, where player 1 is the leader and $\eta_1^k = \smallO (\eta_2^k)$, under the standard stochastic approximation conditions: $\sum_{k} \eta_i^k = \infty, \sum_{k} (\eta_i^k)^2 < \infty, \forall i$.

\vspace{-0.1in}
\section{Theoretical Results} \label{theory}

Our method is based on an robust offline MBRL objective\footnote{CPPO-LR, proposed in \citet{uehara2023pessimisticmodelbasedofflinereinforcement}, adopts a similar objective function; however, its theoretical analysis -- specifically Lemma 6 and Lemma 7 in its Appendix B.2 -- is incorrect.}: (For simplicity, we merge $P_\phi$ and $R_\phi$ into a single notation, $P_\phi(r, s'|s, a)$, in the following discussion.)
\begin{equation} \label{core}
    \max_\theta J(\theta, \phi'),\ s.t.,\ \phi' \in \arg\min_{\phi \in \Phi} J(\theta, \phi)
\end{equation}
The uncertainty set $\Phi$ is defined as:
$\Phi = \{ \phi \in \mathcal{M} \mid \mathbb{E}_{(s, a) \sim \mathcal{D}_\mu} [ D_{\mathrm{KL}}(P_{\bar{\phi}}(\cdot|s, a) \,\|\, P_{\phi}(\cdot|s, a)) ] \leq \epsilon \}$, where $\bar{\phi}$ is an optimal solution to Eq. (\ref{mle}) (i.e., a maximum likelihood estimator). \textbf{Intuitively, $\pi_\theta$ is trained to maximize its worst-case performance within an uncertainty set of world models, ensuring robust deployment performance.}

Define $d_{\theta^*, \phi^*}(s, a) = (1-\gamma) \sum_{t=0}^{\infty}\gamma^t d_{\theta^*, \phi^*}^t(s, a)$, where $\theta^*$ and $\phi^*$ represent the parameters of a comparator policy and the true MDP, respectively, and $d_{\theta^*, \phi^*}^t(s, a)$ denotes the probability density of the agent reaching $(s, a)$ at time step $t$ when following $\pi_{\theta^*}$ in the environment $M_{\phi^*}$. Further, as in \citet{uehara2023pessimisticmodelbasedofflinereinforcement}, we can define a concentrability coefficient $C_{\phi^*, \theta^*} = \sup_{\phi \in \mathcal{M}} \frac{\mathbb{E}_{(s, a) \sim d_{\theta^*, \phi^*}(\cdot)\left[TV(P_\phi(\cdot|s, a), P_{\phi^*}(\cdot|s, a))^2\right]}}{\mathbb{E}_{(s, a) \sim d_{\mu, \phi^*}(\cdot)\left[TV(P_\phi(\cdot|s, a), P_{\phi^*}(\cdot|s, a))^2\right]}}$, where $TV(\cdot, \cdot)$ denotes the total variation distance between two distributions. 
Then, we have the following theorem\footnote{For a discounted finite-horizon MDP with horizon $h$, Theorem \ref{main} remains valid with the following modifications: (1) replacing $\frac{1}{(1 - \gamma)^2}$ in Eq. (\ref{main_core}) with $\frac{(1 - \gamma^h)^2}{(1 - \gamma)^2}$ and (2) replacing $d_{\theta^*, \phi^*}(s, a)$ with $d_{\theta^*, \phi^*}^h(s, a) = \frac{1-\gamma}{1-\gamma^h} \sum_{t=0}^{h}\gamma^t d_{\theta^*, \phi^*}^t(s, a)$. The derivation is similar with the one presented in Appendix \ref{p_main}.}: (Please refer to Appendix \ref{p_main} for the proof.)
\begin{theorem} \label{main}
    Assume $\phi^* \in \Phi$ with probability at least $1-\delta/2$. Then, for any comparator policy $\pi_{\theta^*}$, with probability at least $1 - \delta$, the performance gap in expected return between $\pi_{\theta^*}$ and $\pi_{\hat{\theta}}$ satisfies: 
    \begin{equation} \label{main_core}
\begin{split}
    & J(\theta^*, \phi^*) - J(\hat{\theta}, \phi^*)  \leq \frac{\sqrt{C_{\phi^*, \theta^*}}}{(1-\gamma)^2} \times \\
    & \quad \sqrt{4\epsilon + c\left(\sqrt{\frac{\log(2|\Phi|/\delta)}{N}} + \frac{\log(2|\Phi|/\delta)}{N}\right)},
\end{split}
\end{equation} where $N$ and $|\Phi|$ denote the size of $\mathcal{D}_\mu$ and $\Phi$, respectively, $\hat{\theta}$ is an optimal solution to Eq. (\ref{core}), and $c$ is a constant.
\end{theorem}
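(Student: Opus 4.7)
My plan is to follow the classical pessimistic MBRL suboptimality decomposition. First, I would exploit the construction of $\hat\theta$: because $\hat\theta$ maximizes the inner worst-case value $\min_{\phi \in \Phi} J(\cdot, \phi)$, and because $\phi^* \in \Phi$ occurs on an event of probability at least $1-\delta/2$, one has
\begin{equation*}
J(\hat\theta, \phi^*) \;\geq\; \min_{\phi \in \Phi} J(\hat\theta, \phi) \;\geq\; \min_{\phi \in \Phi} J(\theta^*, \phi) \;=\; J(\theta^*, \tilde\phi),
\end{equation*}
where $\tilde\phi \in \arg\min_{\phi \in \Phi} J(\theta^*, \phi)$. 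Rearranging converts the suboptimality into a purely model-dependent gap,
\begin{equation*}
J(\theta^*, \phi^*) - J(\hat\theta, \phi^*) \;\leq\; J(\theta^*, \phi^*) - J(\theta^*, \tilde\phi),
\end{equation*}
which decouples the estimation error from the optimization error.

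Next, I would apply a simulation-lemma argument to bound this value gap by a TV-distance term. Since rewards lie in $[0,1]$, the value function is bounded by $1/(1-\gamma)$, and a standard telescoping calculation comparing the roll-outs of $\pi_{\theta^*}$ in $M_{\phi^*}$ versus $M_{\tilde\phi}$ yields
\begin{equation*}
J(\theta^*, \phi^*) - J(\theta^*, \tilde\phi) \;\leq\; \frac{1}{(1-\gamma)^2}\, \mathbb{E}_{(s,a) \sim d_{\theta^*, \phi^*}}\bigl[TV(P_{\tilde\phi}(\cdot\mid s,a),\, P_{\phi^*}(\cdot\mid s,a))\bigr].
\end{equation*}
Jensen's inequality passes to the squared TV, and the definition of the concentrability coefficient supports a change of measure to the data distribution:
\begin{equation*}
\mathbb{E}_{d_{\theta^*, \phi^*}}[TV] \;\leq\; \sqrt{C_{\phi^*, \theta^*}\;\cdot\; \mathbb{E}_{d_{\mu, \phi^*}}\!\bigl[TV(P_{\tilde\phi}, P_{\phi^*})^2\bigr]}.
\end{equation*}

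The substantive statistical step is to control the inner quantity uniformly over $\Phi$. Using the chain $2\,TV^2 \leq H^2 \leq KL$, the plan is to show that with probability at least $1-\delta/2$, every $\phi \in \Phi$ satisfies
\begin{equation*}
\mathbb{E}_{(s,a)\sim d_{\mu,\phi^*}}\!\bigl[TV(P_\phi, P_{\phi^*})^2\bigr] \;\leq\; 4\epsilon + c\!\left(\sqrt{\tfrac{\log(2|\Phi|/\delta)}{N}} + \tfrac{\log(2|\Phi|/\delta)}{N}\right).
\end{equation*}
This proceeds in three substeps: (i) relate the population KL between $P_\phi$ and $P_{\phi^*}$ to the population KL between $P_\phi$ and the MLE $P_{\bar\phi}$ plus an MLE-consistency residual for $\bar\phi$ versus $\phi^*$; (ii) relate this population KL to the empirical $\bar\phi$-weighted KL that defines $\Phi$, via a Bernstein-type uniform concentration inequality over $\phi \in \Phi$ (producing the $\sqrt{\log|\Phi|/N}$ and $\log|\Phi|/N$ terms characteristic of variance-sensitive concentration); and (iii) union-bound with the event $\{\phi^* \in \Phi\}$ to reach total confidence $1-\delta$.

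The main obstacle will be step (ii). The constraint defining $\Phi$ is an empirical KL weighted by the MLE's own conditional distribution $P_{\bar\phi}$, rather than a population TV-squared quantity under $d_{\mu,\phi^*}$. Converting between the two requires using the variance structure of the log-likelihood ratios to obtain a Bernstein rate (not merely a Hoeffding rate), and carefully handling the $\bar\phi$-dependence so that the uniform-in-$\Phi$ argument is not invalidated. Assembling the two high-probability events (i.e., $\phi^* \in \Phi$ and the uniform concentration bound) then delivers the stated bound, with the $\sqrt{C_{\phi^*,\theta^*}}/(1-\gamma)^2$ prefactor inherited from the simulation-lemma and change-of-measure steps.
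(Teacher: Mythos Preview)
Your high-level decomposition---pessimism to reduce the suboptimality to $J(\theta^*,\phi^*)-J(\theta^*,\tilde\phi)$, simulation lemma, Jensen, and change of measure via $C_{\phi^*,\theta^*}$---matches the paper exactly. The gap is in your statistical step. Your substep (i) asks for a triangle-type splitting at the KL level, but KL divergence does not satisfy a triangle inequality, so relating $KL(P_{\phi^*}\|P_\phi)$ to $KL(P_{\bar\phi}\|P_\phi)$ plus a residual has no general justification. Your substep (ii) proposes Bernstein concentration on per-$(s,a)$ KL values (or log-likelihood ratios) to pass from population to the empirical $\bar\phi$-weighted KL defining $\Phi$; but these quantities are unbounded in general, so Bernstein does not apply without boundedness or sub-exponential assumptions that the theorem does not impose.

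The paper avoids both obstacles by carrying out the entire argument at the $TV^2$ level and converting to the KL-type constraint only at the very end. Concretely, it writes
\[
\mathbb{E}_{d_{\mu,\phi^*}}[f_\phi]=\mathbb{E}_{\mathcal{D}_\mu}[f_\phi]+\bigl(\mathbb{E}_{d_{\mu,\phi^*}}[f_\phi]-\mathbb{E}_{\mathcal{D}_\mu}[f_\phi]\bigr),\qquad f_\phi(s,a)=TV(P_{\phi^*}(\cdot|s,a),P_\phi(\cdot|s,a))^2\in[0,1].
\]
Bernstein plus a union bound over $\Phi$ is applied directly to the bounded $f_\phi$, yielding the $\sqrt{\log(2|\Phi|/\delta)/N}$ and $\log(2|\Phi|/\delta)/N$ terms. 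For the empirical piece, the triangle inequality is applied to $TV$ (where it does hold), giving $\mathbb{E}_{\mathcal{D}_\mu}[TV(P_{\phi^*},P_\phi)^2]\le 2\,\mathbb{E}_{\mathcal{D}_\mu}[TV(P_{\phi^*},P_{\bar\phi})^2]+2\,\mathbb{E}_{\mathcal{D}_\mu}[TV(P_\phi,P_{\bar\phi})^2]$; each term is then bounded by $\epsilon$ using the FLAMBE-type inequality $\mathbb{E}[TV^2]\le -2\log\mathbb{E}[\exp(-\tfrac12\log(P_{\bar\phi}/P_\cdot))]\le \mathbb{E}[\log(P_{\bar\phi}/P_\cdot)]$ (Jensen) together with $\phi\in\Phi$ and the event $\phi^*\in\Phi$. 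In short: triangle at the $TV$ level, concentrate the bounded $TV^2$, and invoke the $\Phi$ constraint only after these steps---the reverse order of what you propose.
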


Notably, if $\pi_{\theta^*}$ is the optimal policy on $M_{\phi^*}$ and $C_{\phi^*, \theta^*} < \infty$, Eq. (\ref{main_core}) represents the suboptimality gap of the learned policy $\pi_{\hat{\theta}}$ through Eq. (\ref{core}). Moreover, Theorem \ref{main} applies to a general function class of \( P_\phi \) and relies on the assumption that \( \phi^* \in \Phi \) with probability at least \( 1 - \delta/2 \). By specifying the function class of \( P_\phi \), we can determine the uncertainty range \( \epsilon \) to ensure that \( \Phi \) includes \( \phi^* \) with high probability, allowing us to remove this assumption.

Denote \( \mathcal{D}^{\mu}_{sa} \) as the number of unique state-action pairs in \( \mathcal{D}_\mu \) and \( N_{sa} \) as the number of transitions in \( \mathcal{D}_\mu \) that are sampled at \( (s, a) \). Further, we define $\widetilde{N}=\max\ \{N_{sa} \mid (s, a) \in \mathcal{D}_\mu\}$. For tabular MDPs, the world models (i.e., $P_\phi$) follow categorical distributions and we have the following theorem\footnote{As defined in \citet{mardia2020concentration}, $C_0 \approx 3.20$ and $C_1 \approx 2.93$. The expression for $\epsilon$ when $K$ falls into different ranges (e.g., $K \geq N_{sa}C_0+2$) can be derived similarly based on Theorem 3 from \citet{mardia2020concentration}.}: (Please refer to Appendix \ref{p_cate} for the proof and Appendix \ref{disc} for a discussion on $|\Phi|$.)
\begin{theorem} \label{cate}
    In tabular MDPs, suppose $K$ is the alphabet size of the world model and $3 \leq K \leq \frac{N_{sa}C_0}{e} + 2,\ \forall (s, a) \in \mathcal{D}_\mu$. Then, $\phi^* \in \Phi$ with probability at least $1 - \delta/2$ when $\epsilon = \frac{\mathcal{D}^{\mu}_{sa}}{N} \log \frac{2C_1K(C_0\widetilde{N}/K)^{0.5K}\mathcal{D}^{\mu}_{sa}}{\delta}$. 
\end{theorem}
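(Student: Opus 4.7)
The plan is to reduce the event $\phi^* \in \Phi$ to a bound on the expected KL divergence between the MLE $P_{\bar{\phi}}$ and the true model $P_{\phi^*}$, and then apply the specialized KL concentration inequality from \cite{mardia2020concentration}. First, I would observe that in tabular MDPs the MLE $P_{\bar{\phi}}(\cdot|s,a)$ is exactly the empirical categorical distribution $\hat{P}_{sa}$ over the $K$ possible $(r,s')$ outcomes estimated from the $N_{sa}$ samples at $(s,a)$. Hence the inner expectation in the definition of $\Phi$, evaluated at $\phi=\phi^*$, is $\mathrm{KL}(\hat{P}_{sa}\,\|\,P_{\phi^*}(\cdot|s,a))$, and the outer expectation over $(s,a) \sim \mathcal{D}_\mu$ reweights these by $N_{sa}/N$, yielding
\[
\mathbb{E}_{(s,a) \sim \mathcal{D}_\mu}\!\left[\mathrm{KL}\!\left(\hat{P}_{sa}\,\|\,P_{\phi^*}(\cdot|s,a)\right)\right] = \frac{1}{N}\sum_{(s,a) \in \mathcal{D}^{\mu}_{sa}} N_{sa}\,\mathrm{KL}\!\left(\hat{P}_{sa}\,\|\,P_{\phi^*}(\cdot|s,a)\right).
\]
It therefore suffices to show that this quantity is at most $\epsilon$ with probability at least $1-\delta/2$.

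Second, for each unique $(s,a)$ pair I would invoke Theorem 3 of \cite{mardia2020concentration}. In the regime $3 \leq K \leq N_{sa}C_0/e + 2$, this result yields a tail bound of the form $\Pr\!\left[\mathrm{KL}(\hat{P}_{sa}\,\|\,P_{\phi^*}(\cdot|s,a)) \geq t\right] \leq C_1 K(C_0 N_{sa}/K)^{0.5K} e^{-N_{sa} t}$. Setting the right-hand side equal to $\delta/(2\mathcal{D}^{\mu}_{sa})$ and solving for $t$ gives
\[
\mathrm{KL}\!\left(\hat{P}_{sa}\,\|\,P_{\phi^*}(\cdot|s,a)\right) \leq \frac{1}{N_{sa}}\log \frac{2 C_1 K(C_0 N_{sa}/K)^{0.5K}\mathcal{D}^{\mu}_{sa}}{\delta}
\]
per pair. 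A union bound over the $\mathcal{D}^{\mu}_{sa}$ distinct state-action pairs appearing in $\mathcal{D}_\mu$ makes all of these per-pair inequalities hold simultaneously with probability at least $1-\delta/2$.

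Third, I would aggregate into the weighted average above. The factors of $N_{sa}$ inside and outside the logarithm cancel, reducing the sum to $\frac{1}{N}\sum_{(s,a)} \log(\cdots)$. Using $N_{sa} \leq \widetilde{N}$ together with monotonicity of the log in the sample-count argument uniformizes each term to $\log\!\left(2C_1 K(C_0\widetilde{N}/K)^{0.5K}\mathcal{D}^{\mu}_{sa}/\delta\right)$, and there are exactly $\mathcal{D}^{\mu}_{sa}$ such terms. This delivers precisely the claimed $\epsilon = \frac{\mathcal{D}^{\mu}_{sa}}{N}\log\!\left(2C_1 K(C_0\widetilde{N}/K)^{0.5K}\mathcal{D}^{\mu}_{sa}/\delta\right)$, establishing the theorem.

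The main obstacle is aligning with the precise statement of Mardia et al.'s Theorem 3: their bound splits into several cases depending on the ratio $K/N_{sa}$, and the stated hypothesis $3 \leq K \leq N_{sa}C_0/e+2$ is exactly the regime in which the $(C_0 N_{sa}/K)^{0.5K}$ factor is active, so care is needed to instantiate the correct branch and track the constants $C_0, C_1$. Beyond that, the remaining steps -- identifying the MLE with the empirical distribution, rewriting the constraint as a weighted sum of per-pair KLs, applying a union bound, and using $\widetilde{N}$ to uniformize -- are routine.
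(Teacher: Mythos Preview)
Your proposal is correct and follows essentially the same route as the paper: identify the MLE with the empirical categorical distribution, apply the KL tail bound of Mardia et al.\ in the regime $3 \leq K \leq N_{sa}C_0/e + 2$, set the failure probability to $\delta/(2\mathcal{D}^{\mu}_{sa})$ per pair, take a union bound, and then aggregate using the $N_{sa}$ cancellation together with $N_{sa} \leq \widetilde{N}$. The only cosmetic difference is that the paper writes out the intermediate event $X_{sa}$ explicitly before aggregating.
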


Further, most recent offline MBRL methods \citep{DBLP:conf/nips/YuTYEZLFM20, DBLP:conf/iclr/LuBPOR22, DBLP:conf/icml/SunZJLYY23} utilize deep neural network-based world models and represent \( P_\phi(s', r \mid s, a) \) as a multivariate Gaussian distribution with a diagonal covariance matrix, \( \forall (s, a)\). In this case, we have the following theorem: (Please check Appendix \ref{p_diag} for a non-asymptotic representation of $\epsilon$ and the proof.)
\begin{theorem} \label{diag}
    For MDPs with continuous state and action spaces, where the world model follows a diagonal Gaussian distribution, let $d$ denote the dimension of the state space. Then, $\phi^* \in \Phi$ with probability at least $1 - \delta/2$, when $\epsilon = \mathcal{O}\left(\frac{\mathcal{D}_{sa}^\mu d^2}{N} \log\frac{\mathcal{D}_{sa}^\mu d}{\delta}\right)$ as $N_{sa} \rightarrow \infty, \forall (s,a) \in \mathcal{D}_\mu$.
\end{theorem}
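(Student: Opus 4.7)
The starting point is to notice that $\phi^{*}\in\Phi$ is equivalent, by definition of the uncertainty set, to
\begin{equation*}
\sum_{(s,a)\in\mathcal{D}_\mu}\frac{N_{sa}}{N}\,\mathrm{KL}\!\left(P_{\bar\phi}(\cdot|s,a)\,\|\,P_{\phi^{*}}(\cdot|s,a)\right)\le\epsilon,
\end{equation*}
since the inner expectation over $(r,s')\sim P_{\bar\phi}$ is exactly a KL divergence, and the outer expectation over $(s,a)\sim\mathcal{D}_\mu$ weights each unique pair by $N_{sa}/N$. The plan is therefore to give a high-probability upper bound on this weighted empirical KL and choose $\epsilon$ equal to it.

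For a fixed $(s,a)$ both $P_{\bar\phi}(\cdot|s,a)$ and $P_{\phi^{*}}(\cdot|s,a)$ are diagonal Gaussians on $\mathbb{R}^{d}$ (plus reward), so the KL has the coordinatewise closed form
\begin{equation*}
\mathrm{KL}(P_{\bar\phi}\|P_{\phi^{*}})=\tfrac{1}{2}\sum_{i=1}^{d}\left[\tfrac{\bar\sigma_i^{2}+(\bar\mu_i-\mu^{*}_i)^2}{\sigma^{*2}_i}-1-\log\tfrac{\bar\sigma_i^{2}}{\sigma^{*2}_i}\right].
\end{equation*}
Since the MLE at $(s,a)$ uses only the $N_{sa}$ samples drawn at $(s,a)$, standard 1D Gaussian MLE theory gives $(\bar\mu_i-\mu^{*}_i)\sqrt{N_{sa}}/\sigma^{*}_i\sim\mathcal{N}(0,1)$ and $N_{sa}\bar\sigma_i^{2}/\sigma^{*2}_i\sim\chi^{2}_{N_{sa}-1}$, which are independent across $i$ by diagonality. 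Sub-Gaussian tails for the mean error and Bernstein-type ($\chi^{2}$) tails for the variance ratio yield $(\bar\mu_i-\mu^{*}_i)^{2}/\sigma^{*2}_i\lesssim\log(1/\delta')/N_{sa}$ and $|\bar\sigma_i^{2}/\sigma^{*2}_i-1|\lesssim\sqrt{\log(1/\delta')/N_{sa}}$ with probability $1-\delta'$.

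The one genuinely delicate step is the $\log(\bar\sigma_i^{2}/\sigma^{*2}_i)$ term, which could explode if $\bar\sigma_i^{2}\to0$. The asymptotic assumption $N_{sa}\to\infty$ is what forces $\bar\sigma_i^{2}/\sigma^{*2}_i$ into a neighborhood of $1$ with high probability, so I would Taylor-expand $f(x)=x-1-\log x=(x-1)^{2}/2+O((x-1)^{3})$ and control the remainder using the concentration bound on $|\bar\sigma_i^{2}/\sigma^{*2}_i-1|$; this yields $f(\bar\sigma_i^{2}/\sigma^{*2}_i)\lesssim\log(1/\delta')/N_{sa}$. Taking a union bound over the $d$ coordinates and the $\mathcal{D}^{\mu}_{sa}$ unique state-action pairs with $\delta'=\delta/(c\,d\,\mathcal{D}^{\mu}_{sa})$, I would obtain $\mathrm{KL}(P_{\bar\phi}(\cdot|s,a)\|P_{\phi^{*}}(\cdot|s,a))=\mathcal{O}\!\bigl((d/N_{sa})\log(d\mathcal{D}^{\mu}_{sa}/\delta)\bigr)$ uniformly in $(s,a)$.

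Finally, multiplying by $N_{sa}/N$ and summing over state-action pairs cancels the $1/N_{sa}$ and produces a bound of the claimed order $\mathcal{O}\!\bigl(\mathcal{D}^{\mu}_{sa}\,d^{2}\log(\mathcal{D}^{\mu}_{sa}d/\delta)/N\bigr)$ (the coordinatewise analysis above actually yields $d\log(d\mathcal{D}^{\mu}_{sa}/\delta)$; the extra $d$ in the paper's statement most plausibly comes from replacing the independent-coordinate argument with a cruder vector-valued quadratic-form concentration bound, or from absorbing range-dependence of the chi-square tail into a single $d$-factor). The main obstacle is the logarithmic singularity at $\bar\sigma_i^{2}=0$: one must verify that the event on which the Taylor expansion is valid holds with overwhelming probability under the $N_{sa}\to\infty$ regime, and that the cubic remainder is subdominant; the rest of the argument is a routine union bound and weighted-sum calculation.
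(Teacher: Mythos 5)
Your proposal matches the paper's proof essentially step for step: the paper likewise decomposes the diagonal-Gaussian KL coordinatewise, applies Laurent--Massart chi-square tail bounds to the sample mean and sample variance, handles the $\log(\hat\sigma^2/\sigma^2)$ term by the monotonicity and Taylor expansion of $f(x)=x-\log x-1$ around $1$ in the $N_{sa}\to\infty$ regime, and finishes with a union bound over the $d+1$ coordinates and the $\mathcal{D}^{\mu}_{sa}$ state-action pairs followed by the weighted sum that cancels the $1/N_{sa}$. Your remark about the exponent of $d$ is also on target: the paper's per-coordinate bound carries an extra factor of $(d+1)$ beyond what its own univariate lemma delivers, which is exactly where the stated $d^2$ (rather than the $d$ your tighter accounting yields) comes from; since enlarging $\epsilon$ only enlarges $\Phi$, both versions establish the claim.
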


\vspace{-0.1in}
\section{Policy-guided World Model Adaptation for Enhanced Robustness} \label{method}

To obtain the theoretical guarantee shown in Section \ref{theory}, the maximin problem in Eq. \eqref{core} needs to be well solved. 
A straightforward approach, as employed in \citet{DBLP:conf/nips/RigterLH22}, is to update the policy \(\pi_\theta\) and world model \(P_\phi\) alternatively, treating the other as part of the environment during each update step. Specifically, the training process at iteration $k$ is as follows:
\begin{equation} \label{manner1}
\begin{aligned}
    &\theta^{k+1} = \theta^k + \eta_\theta^k \nabla_{\theta} J(\theta^k, \phi^k);\ \\&\phi^{k+1} = \phi^k - \eta_\phi^k \nabla_{\phi} \mathcal{L}(\theta^k, \phi^k).
\end{aligned}
\end{equation}
Here, the objective function is defined as:
    $\mathcal{L}(\theta^k, \phi^k) = J(\theta^k, \phi^k) + \lambda \mathbb{E}_{\mathcal{D}_\mu}\left[KL(P_{\bar{\phi}}(\cdot|s, a) || P_{\phi^k}(\cdot|s, a)) - \epsilon \right],$ where $\eta_\theta^k, \eta_\phi^k$ are learning rates at iteration $k$. In this case, the constrained optimization problem $\min_{\phi \in \Phi} J(\theta, \phi)$ is relaxed into an unconstrained optimization problem $\min_{\phi \in \mathcal{M}} \mathcal{L}(\theta, \phi)$, by employing the Lagrangian formulation and treating the Lagrange multiplier $\lambda > 0$ as a hyperparameter. This method is simple but lacks formal convergence guarantees. Alternating updates risk instability due to the non-stationarity introduced by treating the other model as part of the environment in separate updates.

By viewing the policy and world model as the leader and follower in a general-sum Stackelberg game, we can apply the Stackelberg learning dynamics (i.e., Eq. \eqref{lds}) to iteratively update the policy and world model. Such learning dynamics have proven effective in robust online RL \citep{DBLP:conf/ijcai/HuangXFZ22} and offline model-free RL \citep{DBLP:journals/corr/abs-2309-16188}. In particular, the update at iteration $k$ is as follows:
\begin{equation} \label{manner2}
\begin{aligned}
    \theta^{k+1} &= \theta^{k} + \eta_\theta^k \bigg[ \nabla_\theta J(\theta^k, \phi^k) \\
    &\quad - \nabla^2_{\phi\theta} \mathcal{L}(\theta^k, \phi^k)^T (\nabla^2_\phi \mathcal{L}(\theta^k, \phi^k))^{-1} \nabla_\phi J(\theta^k, \phi^k) \bigg]; \\
    \phi^{k+1} &= \phi^{k} - \eta_\phi^k \nabla_\phi \mathcal{L}(\theta^k, \phi^k).
\end{aligned}
\end{equation}
The formula above can be derived by substituting $(x_1, x_2)$ and $(f_1, f_2)$ in Eq. \eqref{lds} with $(\theta, \phi)$ and $(-J, \mathcal{L})$, respectively. Compared to Eq. \eqref{manner1}, policy updates integrate model gradients, as the best-response world model is inherently a function of the policy. 
This dependency allows the policy optimization process to account for the influence of the evolving world model, leading to potentially more stable learning dynamics. As noted in Section \ref{bg}, models updated using such learning dynamics are guaranteed to converge to a local Stackelberg equilibrium under mild conditions.

The second approach still solves an unconstrained problem $\min_{\phi \in \mathcal{M}} \mathcal{L}(\theta, \phi)$.  
As an improvement, we propose a novel learning dynamics, where the follower $\theta$ solves the constrained problem $\min_{\phi \in \Phi} J(\theta, \phi)$ responding to the leader:
\vspace{-0.1in}
\begin{equation} \label{manner3}
\begin{aligned}
    \theta^{k+1} &= \theta^{k} + \eta_\theta^k \bigg[\nabla_\theta J(\theta^k, \phi^k) \\
    &\quad - \nabla^2_{\phi\theta} \mathcal{L}(\theta^k, \phi^k, \lambda^k)^T H(\theta^k, \phi^k, \lambda^k) \nabla_\phi J(\theta^k, \phi^k)\bigg]; \\
    \phi^{k+1} &= \phi^{k} - \eta_\phi^k \nabla_\phi \mathcal{L}(\theta^k, \phi^k, \lambda^k); \\
    \lambda^{k+1} &= \big[\lambda^k + \eta_\lambda^k \nabla_\lambda \mathcal{L}(\theta^k, \phi^k, \lambda^k)\big]^+.
\end{aligned}
\end{equation}
Here, the matrix $H(\theta^k, \phi^k, \lambda^k)$ is defined as:
\begin{equation}
    H(\theta^k, \phi^k, \lambda^k) = A^{-1} + \lambda^k A^{-1}BS^{-1}B^TA^{-1},
\end{equation}
where $S = C - \lambda^k B^TA^{-1}B$. The terms $A, B,$ and $C$ denote the derivatives of the Lagrangian $\mathcal{L}$:
\begin{equation}
\begin{aligned}
    A &= \nabla^2_{\phi} \mathcal{L}, \quad B = \nabla^2_{\phi\lambda}\mathcal{L}, \quad C = \nabla_\lambda \mathcal{L}, \\
    \mathcal{L} &= J(\theta, \phi) + \lambda \mathbb{E}_{\mathcal{D}_\mu}\left[\mathrm{KL}(P_{\bar{\phi}} \| P_{\phi}) - \epsilon \right].
\end{aligned}
\end{equation}
Eq.~\eqref{manner3} updates the dual variable $\lambda$ for the constrained problem $\min_{\phi \in \Phi} J(\theta, \phi)$, solved alongside $\theta$ and $\phi$. Compared to Eq. \eqref{manner2}, the policy updates integrate gradient from the best-response dual variable, which is a function of $\pi_\theta$. This allows the policy optimization to account for the implicit influence of constraint satisfaction, leading to a more informed learning process.  The second line in Eq. \eqref{manner3} corresponds to a primal-dual method for solving $\min_{\phi \in \Phi} J(\theta, \phi)$, where $[\cdot]^+$ is a projection operation onto the non-negative real space. Widely used in constrained RL\footnote{Viewing $P_\phi$ and $\pi_\theta$ as the "policy" and "world model" respectively, $\min_{\phi \in \Phi} J(\theta, \phi)$ is converted into a typical constrained RL problem.}, this method can achieve low suboptimality if the world model’s parameterization has sufficient representational capacity, as detailed in \citet{DBLP:journals/tac/PaternainCCR23}. Additionally, the first line in Eq. \eqref{manner3} represents a gradient ascent step corresponding to the objective function $\max_{\theta} \{J(\theta, \phi^*(\theta)) 
\mid \phi^*(\theta) \in \arg \min_{\phi \in \Phi} J(\theta, \phi)\}$, of which the derivation is provided in Appendix \ref{d_manner3}. According to the learning dynamics in Stackelberg games and constrained optimization \citep{DBLP:conf/icml/FiezCR20, DBLP:journals/tac/PaternainCCR23}, the learning rates in Eq. \eqref{manner3} should satisfy $\eta_\phi^k \gg \eta_\lambda^k \gg \eta_\theta^k$ for convergence.
\section{Practical Algorithm: ROMBRL} \label{rombrl}

For a discounted MDP with finite horizon $h$, $J(\theta, \phi) = \mathbb{E}_{\tau \sim P(\cdot;\theta, \phi)}\left[\sum_{j=0}^{h-1} \gamma^j r_j \right]$, $\tau=(s_0, a_0, r_0, \cdots, s_h)$ and $P(\tau;\theta, \phi) = d_0(s_0) \prod_{j=0}^{h-1} \pi_\theta(a_j|s_j) P_\phi(r_j, s_{j+1}|s_j, a_j)$. To compute the first-order and second-order derivatives in Eqs. (\ref{manner1}) - (\ref{manner3}), we have the following theorem:
\begin{theorem} \label{derivatives}
    For an episodic MDP with horizon $h$, let $\Psi(\tau, \theta) = \sum_{i=0}^{h-1}\left(\sum_{j=i}^{h-1} \gamma^j r_j\right) \log \pi_\theta(a_i|s_i)$ and $\Psi(\tau, \phi) = \sum_{i=0}^{h-1}\left(\sum_{j=i}^{h-1} \gamma^j r_j\right) \log P_\phi(r_i, s_{i+1}|s_i, a_i)$. Then, we have:
    \vspace{-0.05in}
    \begin{equation} \label{j_dev}
\begin{aligned}
    \nabla_\theta J(\theta, \phi) &= \mathbb{E}_{\tau \sim P(\cdot;\theta, \phi)} \left[\nabla_\theta \Psi(\tau, \theta)\right], \\
    \nabla_\phi J(\theta, \phi) &= \mathbb{E}_{\tau \sim P(\cdot;\theta, \phi)} \left[\nabla_\phi \Psi(\tau, \phi)\right], \\
    \nabla^2_{\phi\theta} J(\theta, \phi) &= \mathbb{E}_{\tau \sim P(\cdot;\theta, \phi)} \left[\nabla_\phi \Psi(\tau, \phi) \nabla_\theta \log P(\tau; \theta, \phi)^T\right], \\
    \nabla_{\phi}^2 J(\theta, \phi) &= \mathbb{E}_{\tau \sim P(\cdot;\theta, \phi)} \bigg[\nabla_\phi \Psi(\tau, \phi) \nabla_\phi \log P(\tau; \theta, \phi)^T \\
    &\quad + \nabla_\phi^2 \Psi(\tau, \phi)\bigg].
\end{aligned}
\end{equation}
\end{theorem}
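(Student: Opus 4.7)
The plan is to derive all four identities from the single representation $J(\theta,\phi)=\int P(\tau;\theta,\phi)\,R(\tau)\,d\tau$ with $R(\tau)=\sum_{j=0}^{h-1} \gamma^j r_j$, by iterated use of the log-derivative identity $\nabla_x P = P\,\nabla_x\log P$ and, for the first-order forms, a causality (reward-to-go) argument. Standard smoothness and integrability assumptions on $\pi_\theta$ and $P_\phi$ justify exchanging differentiation with the $\tau$-integral throughout, and I take that exchange for granted.

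First, for $\nabla_\theta J$, the log-derivative trick gives $\mathbb{E}_\tau[R(\tau)\sum_i \nabla_\theta \log \pi_\theta(a_i|s_i)]$. Let $\mathcal{F}_i^- = \sigma(s_0,a_0,r_0,\ldots,s_i)$. Then $\mathbb{E}[\nabla_\theta \log \pi_\theta(a_i|s_i)\mid \mathcal{F}_i^-]=\int \nabla_\theta \pi_\theta(a|s_i)\,da=0$, while $r_j$ for $j<i$ is $\mathcal{F}_i^-$-measurable, so $\mathbb{E}[r_j\,\nabla_\theta \log \pi_\theta(a_i|s_i)]=0$ by tower property. Dropping those cross terms collapses the double sum to $\mathbb{E}_\tau[\nabla_\theta \Psi(\tau,\theta)]$. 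The same argument, now using $\sigma(s_0,a_0,\ldots,s_i,a_i)$ together with $\mathbb{E}[\nabla_\phi \log P_\phi(r_i,s_{i+1}|s_i,a_i)\mid s_i,a_i]=0$, yields $\nabla_\phi J = \mathbb{E}_\tau[\nabla_\phi \Psi(\tau,\phi)]$.

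For the two second-order identities I would differentiate the first-order form $\nabla_\phi J = \mathbb{E}_\tau[\nabla_\phi \Psi(\tau,\phi)]$ rather than the raw $R(\tau)$-form, since the causality reduction is already baked in. Differentiating in $\theta$: the integrand $\nabla_\phi \Psi$ has no $\theta$-dependence, so only the density moves, and the log-derivative trick gives $\int P\,\nabla_\theta \log P\,(\nabla_\phi \Psi)^T\,d\tau = \mathbb{E}_\tau[\nabla_\phi \Psi(\tau,\phi)\,\nabla_\theta \log P(\tau;\theta,\phi)^T]$. Differentiating in $\phi$ uses the product rule on the scalar-times-vector integrand $P\cdot \nabla_\phi \Psi$: the density contributes $P\,\nabla_\phi \Psi\,(\nabla_\phi \log P)^T$ and the integrand contributes $P\,\nabla_\phi^2 \Psi$; summing under the expectation gives the last identity. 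The only real obstacle is bookkeeping in the causality reduction, which is standard; a small point worth noting is that the mixed-partial formula is asymmetric (truncated $\nabla_\phi \Psi$ on one side, untruncated full-trajectory score $\nabla_\theta \log P$ on the other) because only the first differentiation invoked causality — the second merely brings out the density's score and needs no further truncation.
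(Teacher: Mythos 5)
Your proof is correct. For the two first-order identities you take a genuinely different route from the paper: the paper first marginalizes out every transition after step $j$, writing $J(\theta,\phi)=\sum_{j}\int_{T_j}\gamma^j r_j\,P(\tau_j;\theta,\phi)\,d\tau_j$ with $\tau_j=(s_0,a_0,r_0,\dots,s_{j+1})$, so that $\nabla\log P(\tau_j;\theta,\phi)$ only ever contains score terms with index $i\le j$, and then swaps the order of the double summation to arrive at $\Psi$; you instead differentiate the full-trajectory density, obtain the double sum $\sum_j\sum_i \gamma^j r_j\,\nabla_\theta\log\pi_\theta(a_i|s_i)$, and kill the acausal cross terms $j<i$ by conditioning on $\mathcal{F}_i^-$ and using $\mathbb{E}\left[\nabla_\theta\log\pi_\theta(a_i|s_i)\mid\mathcal{F}_i^-\right]=0$ together with the tower property (and analogously for the model score conditioned on $(s_i,a_i)$). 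The two mechanisms are equivalent: the paper's marginalization prevents the vanishing terms from ever appearing, while your martingale-type argument exhibits them and cancels them; yours makes the ``why reward-to-go'' reduction more transparent, while the paper's stays entirely at the level of densities without invoking conditional expectations. For the two second-order identities, your derivation --- differentiate the already-reduced first-order form $\mathbb{E}_\tau[\nabla_\phi\Psi(\tau,\phi)]$, apply the product rule and the log-derivative trick to the density, and observe that $\nabla_\phi\Psi(\tau,\phi)$ has no explicit $\theta$-dependence so the mixed term $\nabla^2_{\phi\theta}\Psi$ drops --- is exactly the paper's argument, including your (correct) remark on the asymmetry between the truncated $\nabla_\phi\Psi$ and the full-trajectory score.
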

\vspace{-0.1in}
Please refer to Appendix \ref{p_derivatives} for the proof of this theorem and the derivatives of $\mathcal{L}(\theta, \phi, \lambda)$. 

In a practical algorithm, the expectation terms in Eqs. (\ref{j_dev}) and (\ref{l_dev}) are replaced with corresponding unbiased estimators (the sample means). Estimators of the first-order derivatives can be efficiently computed using automatic differentiation, whose space and time complexity scale linearly with the number of parameters in the models, namely \( N_\theta \) and \( N_\phi \). However, computing the second-order derivatives, i.e., $\nabla^2_\phi \Psi(\pi, \phi)$ in $\nabla^2_\phi J (\theta, \phi)$ and $\nabla^2_\phi \log P_\phi$ in $\nabla^2_\phi \mathcal{L} (\theta, \phi, \lambda)$, can be costly. Instead of substituting the second-order terms with $cI$ as in \citet{wang2021learning}, where $I$ is the identity matrix, we propose the following approximations:
\vspace{-0.05in}
\begin{equation} \label{fim}
\begin{aligned}
    & \mathbb{E}_{\tau \sim P(\cdot;\theta, \phi)} \left[\nabla_\phi^2 \Psi(\tau, \phi)\right] \\
    & \quad \approx - \mathbb{E}_{\tau } \bigg[\sum_{i=0}^{h-1}\left(\sum_{j=i}^{h-1} \gamma^j r_j\right)  \times F(s_i, a_i, r_i, s_{i+1};\phi)\bigg], \\
    & \mathbb{E}_{(s, a, r, s') \sim P_{\bar{\phi}}\circ\mathcal{D}_\mu(\cdot)} \left[\nabla^2_\phi \log P_\phi(r, s'|s, a)\right] \\
    & \quad \approx - \mathbb{E}_{(s, a, r, s') \sim P_{\bar{\phi}}\circ\mathcal{D}_\mu(\cdot)} \left[F(s, a, r, s';\phi)\right],
\end{aligned}
\end{equation}
where $F(\cdot;\phi) = \nabla_\phi \log P_\phi \nabla_\phi \log P_\phi^T$ is the Fisher Information Matrix. Thus, we substitute $\nabla^2_\phi \log P_\phi$ with $-F$, based on the fact \citep{amari2016information} that $\mathbb{E}_{x \sim P_\phi(\cdot)} \left[\nabla^2_\phi \log P_\phi(x) + \nabla_\phi \log P_\phi(x) \nabla_\phi \log P_\phi(x)^T\right] = 0$. In Appendix \ref{app_err}, we discuss the approximation errors that arise when using Eq. \eqref{fim}.

The other computational bottleneck is computing the inverse matrix \( A^{-1} \) in Eq. \eqref{manner3}\footnote{Since \( S \) in Eq. \eqref{manner3} is a scalar, \( S^{-1} \) can be easily computed.}. We choose not to use an identity matrix in place of $A^{-1}$ as done in \citet{DBLP:conf/aaai/NikishinAAB22}. As aforementioned, $A=\nabla^2_{\phi} \mathcal{L}(\theta, \phi, \lambda)$ is substituted with its sample-based estimator $\hat{A}$, which is defined as follows:
\begin{equation} \label{A_hat}
    \hat{A} = UV^T - XY^T + ZZ^T,
\end{equation}
where $U, V \in \mathbb{R}^{N_\phi \times m}$; $X, Y, Z \in \mathbb{R}^{N_\phi \times M}$; $m$ and $M$ represent the number of sampled trajectories and sampled transitions, respectively. Specifically, the \( i \)-th columns of \( U \) and \( V \) are given by \( \nabla_\phi \Psi(\tau(i), \phi)/\sqrt{m} \) and \( \nabla_\phi \log P (\tau(i); \theta, \phi) / \sqrt{m} \), respectively, where \( \tau(i) \) denotes the \( i \)-th trajectory sampled from $P(\cdot;\theta, \phi)$. The \( i \)-th column of \( Z \) is given by \( \nabla_\phi \log P_\phi (r_i, s'_i | s_i, a_i) \cdot \sqrt{\lambda/M} \), based on a transition sampled from $P_{\bar{\phi}}\circ\mathcal{D}_\mu(\cdot)$. Additionally, by randomly sampling a trajectory \( \tau \) and a time step \( t \sim \text{Uniform}(0, h-1) \), we obtain the columns of \( X \) and \( Y \) as  
\(
\left(\sum_{j=t}^{h-1} \gamma^j r_j\right) \nabla_\phi \log P_\phi(s_{t+1}, r_t | s_t, a_t) \cdot \sqrt{h/M}
\)  
and  
\(
\nabla_\phi \log P_\phi(s_{t+1}, r_t | s_t, a_t) \cdot \sqrt{h/M},
\)  
respectively. \textbf{Given that $m, M \ll N_\theta, N_\phi$, each term in $\hat{A}$ is a low rank matrix and so we can apply Woodbury matrix identity \citep{DBLP:conf/icml/FiezCR20} to efficiently compute $\hat{A}^{-1}$.}

To summarize, by leveraging Fisher information matrices and the Woodbury matrix identity, we obtain a close approximation of the gradient update for \( \pi_\theta \) (i.e., the first line of Eq. \eqref{manner3}), of which the computational complexity scales linearly with the number of parameters in \( \pi_\theta \) and \( P_\phi \). In particular, the time complexity with our design and without it are $\mathcal{O}(mN_\theta + M^2 N_\phi)$ and $\mathcal{O}(mN_\theta + MN_\phi^2 + N_\phi^\omega)$, respectively, where $2 \leq \omega \leq 2.373$. \textbf{Please refer to Appendix \ref{ATC} for the justification of Eq. \eqref{A_hat} and a detailed complexity analysis. The pseudo code and implementation details of our algorithm (ROMBRL) are available in Appendix \ref{algo_details}.}

\vspace{-0.1in}
\section{Experimental Results} \label{exp_rlt}

\begin{table*}[tb]
\small 
\setlength{\tabcolsep}{2.5pt} 
\centering
\caption{Comparison with SOTA offline RL methods on the D4RL MuJoCo benchmark. The abbreviations `hc', `hp', and `wk' denote HalfCheetah, Hopper, and Walker2d, respectively. \textbf{To evaluate robustness, we add measurement noise to the real MuJoCo dynamics.} Each value represents the normalized score, as proposed in \citet{DBLP:journals/corr/abs-2004-07219}, of the policy trained by the corresponding algorithm. These scores are undiscounted returns normalized to approximately range between 0 and 100, where a score of 0 corresponds to a random policy and a score of 100 corresponds to an expert-level policy. For each algorithm, we report the average score of the final 100 policy learning epochs and its standard deviation across three random seeds. \textbf{The best and second-best results for each task are bolded and marked with * respectively.} The last row includes Cohen's $d$ to indicate the significance of our algorithm's improvement over other methods.}
\vspace{-0.05in}
\begin{tabular}{c|c|c|c|c|c|c|c|c|c|c}
\hline
{\makecell{Data\\Type}} & {\makecell{Agent\\Type}} & {\makecell{ROMBRL \\(ours)}} & {CQL} & {EDAC} & {COMBO} & {RAMBO} & {MOBILE} & {RORL} & {TRACER} & {RFQI} \\
\hline 
\hline
{random} & {hc} & {39.3* (4.0)} & {18.3 (1.2)} & {14.9 (7.8)} & {5.8 (2.1)} & {36.7 (3.4)} & {\textbf{40.3} (2.1)} & {29.0 (1.4)} & {10.8 (3.9)} & {2.8 (0.2)} \\
{random} & {hp} & {\textbf{31.3} (0.1)} & {10.1 (0.4)} & {14.1 (5.6)} & {10.3 (3.4)} & {25.7* (6.1)} & {22.7 (9.1)} & {10.2 (4.1)} & {5.7 (2.9)} & {3.7 (1.5)} \\
{random} & {wk} & {\textbf{21.7} (0.1)} & {2.3 (1.9)} & {0.7 (0.3)} & {0.0 (0.1)} & {13.1 (3.0)} & {17.2* (3.3)} & {0.0 (0.2)} & {8.9 (3.6)} & {7.0 (0.6)} \\
\hline
{medium} & {hc} & {77.5* (1.7)} & {48.5 (0.1)} & {62.1 (1.0)} & {49.6 (0.2)} & {\textbf{78.1} (1.2)} & {72.8 (0.5)} & {60.3 (1.3)} & {47.7 (0.6)} & {40.3 (0.3)} \\
{medium} & {hp} & {\textbf{102.6} (2.6)} & {56.1 (0.4)} & {54.8 (38.4)} & {71.1 (3.1)} & {71.7 (8.2)} & {89.9 (16.1)} & {100.1* (8.5)} & {62.1 (5.4)} & {41.0 (4.9)} \\
{medium} & {wk} & {79.4 (4.2)} & {73.9 (0.4)} & {83.6* (1.4)} & {72.9 (3.1)} & {13.7 (5.0)} & {71.5 (6.1)} & {\textbf{89.9} (4.8)} & {50.9 (6.3)} & {55.4 (1.9)} \\
\hline 
{med-rep} & {hc} & {\textbf{73.8} (0.4)} & {45.7 (0.3)} & {53.2 (0.7)} & {46.4 (0.2)} & {65.8 (0.4)} & {67.6* (4.2)} & {51.4 (1.5)} & {39.0 (2.3)} & {31.6 (2.6)} \\
{med-rep} & {hp} & {\textbf{105.3} (0.7)} & {94.1 (4.7)} & {101.3 (0.6)} & {99.5 (1.3)} & {97.6 (1.1)} & {104.9* (1.8)} & {102.0 (1.0)} & {63.6 (7.7)} & {14.6 (11.5)} \\
{med-rep} & {wk} & {\textbf{90.5} (2.4)} & {72.2 (17.3)} & {83.1 (0.3)} & {71.7 (1.9)} & {80.9 (1.0)} & {85.2 (2.3)} & {85.3* (2.1)} & {8.3 (5.5)} & {16.0 (4.5)} \\
\hline 
{med-exp} & {hc} & {88.9* (2.1)} & {88.3 (0.5)} & {61.2 (6.9)} & {\textbf{90.5} (0.0)} & {83.4 (1.4)} & {82.9 (5.2)} & {85.2 (2.4)} & {73.3 (5.0)} & {43.5 (1.9)} \\
{med-exp} & {hp} & {\textbf{111.9} (0.1)} & {109.1 (0.9)} & {55.2 (41.3)} & {110.1* (0.5)} & {83.5 (4.9)} & {79.6 (45.5)} & {24.9 (0.1)} & {82.9 (43.7)} & {46.6 (1.8)} \\
{med-exp} & {wk} & {110.7* (2.6)} & {109.9 (0.1)} & {60.6 (46.3)} & {37.8 (50.9)} & {19.5 (20.7)} & {\textbf{113.7} (2.5)} & {109.2 (9.1)} & {76.0 (37.6)} & {68.6 (5.0)} \\
\hline
\hline
\multicolumn{2}{c|}{Average Score} & {\textbf{77.7} (0.5)} & {60.7 (1.2)} & {53.7 (4.6)} & {55.5 (3.6)} & {55.8 (1.3)} & {70.7* (2.4)} & {62.3 (0.2)} & {44.1 (4.8)} & {30.9 (2.1)} \\
\hline
\multicolumn{2}{c|}{Cohen's $d$} & {-} & {18.9} & {7.4} & {8.5} & {22.9} & {4.1} & {40.4} & {9.8} & {30.7} \\
\hline 
\end{tabular}%
\label{table:1}
\end{table*}

We benchmark our algorithm (ROMBRL) against a range of SOTA offline RL baselines across two task sets comprising 15 continuous control tasks. First, the widely used D4RL MuJoCo suite \citep{DBLP:journals/corr/abs-2004-07219}, covering three types of robotic agents, each with offline datasets of four different quality levels. Given D4RL's deterministic dynamics, we further evaluate on the challenging, highly stochastic Tokamak control tasks. \textbf{To assess the robustness of the policies learned by different algorithms, we add noise to the environments' dynamics during deployment. Specifically, at each time step $t$, the state change (in each dimension) $\Delta s_t = s_{t+1} - s_t$ is perturbed with zero-mean Gaussian noise, where the standard deviation is proportional to $\Delta s_t$. In our experiments, we use 5\% measurement noise.}

For complete details on our experimental setup, including random seed selection and hyperparameter settings, please refer to Appendix~\ref{sec:appendix_setup_details}. \textbf{For additional results, including computational cost analysis, performance under varying noise levels and training curves, see Appendix~\ref{sec:appendix_additional_results}.}

\subsection{Results on D4RL MuJoCo}
\vspace{-0.1in}


Here, we compare ROMBRL with a comprehensive suite of baselines. First, we include several representative standard offline RL methods: two model-free algorithms, CQL \citep{DBLP:conf/nips/KumarZTL20} and EDAC \citep{DBLP:conf/nips/AnMKS21}, and two model-based algorithms, COMBO \citep{DBLP:conf/nips/YuKRRLF21} and MOBILE \citep{DBLP:conf/icml/SunZJLYY23}. These baselines are carefully selected: EDAC and MOBILE are recognized as top-performing model-free and model-based algorithms on the D4RL MuJoCo benchmark, respectively, according to \citet{DBLP:conf/icml/SunZJLYY23}; meanwhile, CQL and its model-based extension, COMBO, are widely adopted in real-life robotic tasks.

Second, to rigorously evaluate robustness, we compare against four specialized robust offline RL algorithms: RAMBO \citep{DBLP:conf/nips/RigterLH22}, RORL \citep{yang2022rorlrobustofflinereinforcement}, RFQI \citep{DBLP:conf/nips/PanagantiXKG22}, and TRACER \citep{yang2024tracer}. The inclusion of these baselines is vital for a comprehensive assessment: RAMBO is explicitly designed to enhance robustness in offline MBRL via adversarial dynamics; Tracer, RORL, RFQI represent state-of-the-art robust model-free Offline RL approaches. Comparing against this diverse set allows us to demonstrate ROMBRL's superiority over methods that rely on passive smoothing or value-based min-max optimization.

\begin{table*}[t]
\small
  \centering
  \caption{Comparison of our algorithm against SOTA offline RL methods on standard (noiseless) and noisy D4RL MuJoCo benchmarks. To ensure a fair and reproducible comparison, baseline results are sourced from the official OfflineRL-Kit repository, which reports scores on 9 specific tasks. Consequently, all mean scores presented here are averaged over these 9 tasks, which are composed of 3 environments (HalfCheetah, Hopper, Walker2d) across 3 dataset types (medium, medium-replay, and medium-expert). The 'Performance Drop' metric highlights the superior robustness of our method under deployment noise.}
  \vspace{-0.05in}
  \label{tab:robustness_summary}
  \begin{tabular}{l|c|c|c|c|c|c|c}
    \toprule
    \textbf{Metric} & {\makecell{ROMBRL \\(ours)}} & CQL & EDAC & COMBO & RAMBO & MOBILE & RORL \\
    \midrule
    Mean Score (Standard Env.) & 92.8 & 80.4 & 93.0 & 89.3 & 82.7 & \textbf{95.9} & 89.5 \\
    Mean Score (Noisy Env.)   & \textbf{93.4} & 77.5 & 68.3 & 72.2 & 66.0 & 85.3 & 78.7\\
    \textbf{Performance Drop (\%)} $\downarrow$ & \textbf{-0.6\%} & 3.6\% & 26.6\% & 19.1\% & 20.2\% & 11.1\% & 12.1\%\\
    \bottomrule
    
  \end{tabular}
  \vspace{-0.15in}
\end{table*}

In Table \ref{table:1}, we report the convergent performance of each algorithm on all tasks. Following the protocol in the D4RL benchmarking paper \citep{DBLP:journals/corr/abs-2004-07219}, convergent performance is defined as the mean evaluation score over the final 100 policy learning epochs. We present the mean (standard deviation) across runs with different random seeds. \textbf{ROMBRL ranks first on 7 out of 12 tasks and second on 4 tasks.} In terms of average performance, ROMBRL significantly outperforms all baselines. To quantify the improvement in average performance, we compute Cohen’s $d$ between ROMBRL and each baseline. According to Cohen’s rule of thumb \citep{sawilowsky2009new}, a value of $d \geq 2$ indicates a very large and statistically significant difference. These results demonstrate the superior performance of our algorithm in robust deployment. 

Table~\ref{tab:robustness_summary} quantitatively demonstrates the core trade-off and ultimate benefit of our robust design. While ROMBRL's performance on the standard (noiseless) benchmark is highly competitive, on par with the SOTA method MOBILE and significantly outperforming the robust baseline RAMBO, it uniquely achieves the top score under noisy deployment conditions where other baselines falter. This slight performance difference in the ideal, noiseless setting is an expected and theoretically justified consequence of our learning objective (Eq.~5), which explicitly optimizes for worst-case robustness across an uncertainty set of models rather than overfitting to a single, known environment. \textbf{Ultimately, these results validate that ROMBRL provides a powerful solution to the sim-to-real challenge by drastically improving deployment robustness---as shown by its minimal performance drop---without a significant compromise in standard benchmark performance.}

\begin{figure*}[hb]
    \centering
    \includegraphics[width=0.8\linewidth]{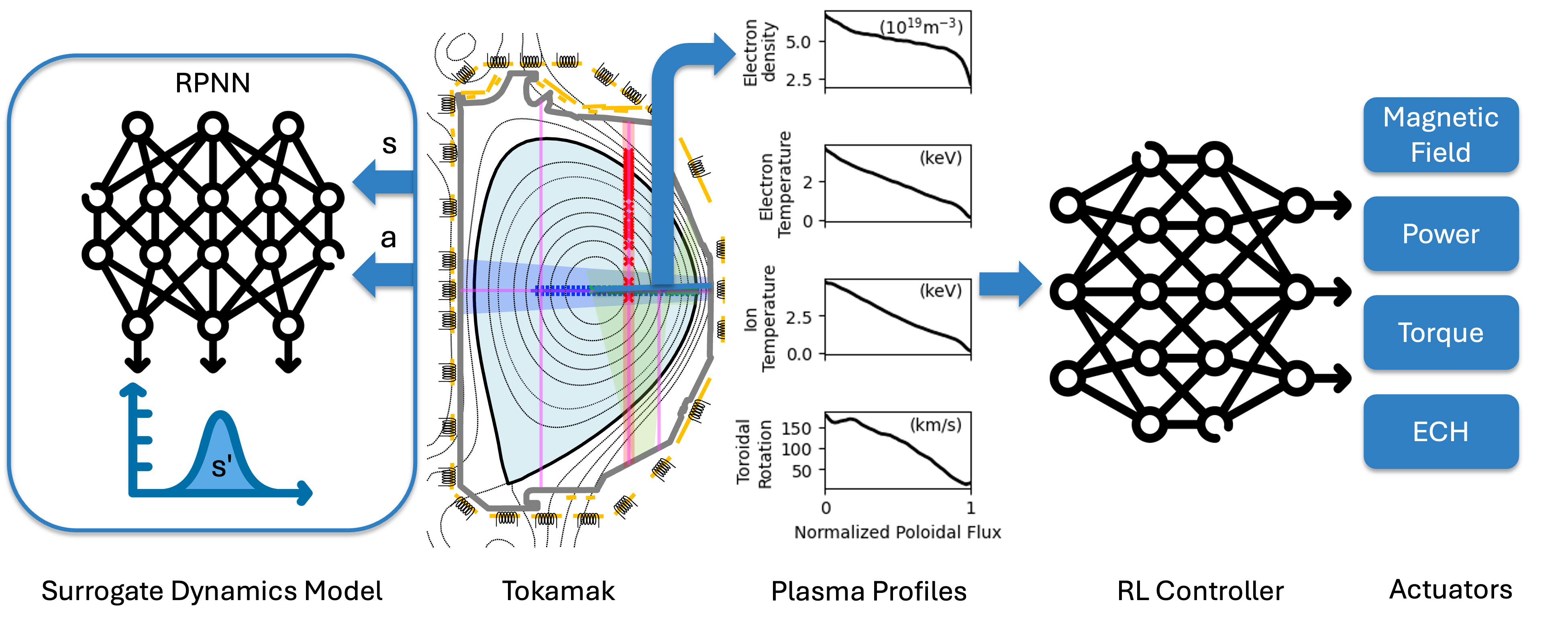}
    \vspace{-.15in}
    \caption{Illustration of the Tokamak Control tasks. The RL controller is trained to apply actuators, such as power and torque, based on the current plasma state, with the goal of driving the plasma toward a target profile. For practical reasons, the real tokamak is replaced with an ensemble of dynamics models trained on operational data from a real device -- DIII-D. These models are used to generate data for offline RL and evaluate the trained policies.}
    \label{fig:2}
    \vspace{-0.1in}
\end{figure*}

\vspace{-.1in}
\subsection{Results on Tokamak Control}
\vspace{-.1in}

We further evaluate our algorithm on three target tracking tasks for tokamak control. The tokamak is one of the most promising confinement devices for achieving controllable nuclear fusion, where the primary challengec is confining the plasma, i.e., an ionized gas of hydrogen isotopes, while heating it and increasing its pressure to initiate and sustain fusion reactions \citep{1512794}. Tokamak control involves applying a direct actuators (e.g., ECH power, magnetic field) and indirect actuators (e.g., setting targets for the plasma shape and density) to confine the plasma to achieve a desired state or track a given target. This sophisticated physical process is an ideal test bed for our algorithm. 

As shown in Figure \ref{fig:2}, we use a well-trained data-driven dynamics model provided by \citet{DBLP:journals/corr/abs-2404-12416} as a "ground truth" simulator for the nuclear fusion process during evaluation, and generate a dataset containing 111305 transitions for offline RL. We select reference shots (each of which represents a fusion process) from DIII-D\footnote{DIII-D is a tokamak device located in San Diego, California, operated by General Atomics.}, and use trajectories of Ion Rotation, Electron Density, and $\beta_N$ within them as targets for three tracking tasks. These are critical quantities in tokamak control, particularly $\beta_N$, which serves as an economic indicator of the efficiency of nuclear fusion. These continuous control tasks are \textbf{highly stochastic}, as the underlying dynamics model is an ensemble of recurrent probabilistic neural networks (RPNNs) and each state transition is a sample from this model. \textbf{For details about the simulator, and the design of the state/action spaces and reward functions, please refer to Appendix \ref{DetTCT}.}

\begin{table*}[t]
\center
\small
\caption{Comparison with SOTA offline RL methods on the Tokamak Control benchmark.
\textbf{We inject measurement noise to evaluate robustness.} Each value represents the negative episodic tracking error of a specific physical quantity in the reference shot. For each algorithm, we report the average evaluation performance of the final 100 policy learning epochs and its standard deviation across three random seeds. \textbf{The best and second-best results for each task are bolded and marked with * respectively.} Last row shows Cohen’s $d$ quantifying improvement significance. A value of $d \geq 2$ indicates a statistically significant improvement.}
\vspace{-0.05in}
\begin{tabular}{c|c|c|c|c|c|c|c}
\hline
{\makecell{Tracking Target}} & {\makecell{ROMBRL (ours)}} & {CQL} & {EDAC} & {COMBO} & {RAMBO} & {MOBILE} & {BAMCTS} \\
\hline 
\hline
{$\beta_N$} & {\makecell{-70.9* \\ (0.9)}} & {\makecell{-78.4 \\(3.1)}} & {\makecell{\textbf{-63.4} \\(1.7)}} & {\makecell{-84.3 \\(7.6)}} & {\makecell{-121.1 \\(19.9)}} & {\makecell{-133.9 \\(10.1)}} & {\makecell{-111.3 \\(24.3)}}  \\
\hline
{Density} & {\makecell{\textbf{-60.0} \\ (1.9)}} & {\makecell{-87.3 \\ (12.5)}} & {\makecell{-112.5 \\ (11.1)}} & {\makecell{-67.0* \\ (3.1)}} & {\makecell{-81.3 \\(15.7)}} & {\makecell{-75.0 \\(4.3)}} & {\makecell{-79.6 \\(13.8)}}  \\
\hline
{Rotation} & {\makecell{\textbf{-10.6} \\(3.7)}} & {\makecell{-39.2* \\(10.1)}} & {\makecell{-95.4 \\(44.3)}} & {\makecell{-69.6 \\(25.9)}} & {\makecell{-300.3 \\(260.5)}} & {\makecell{-257.6 \\(153.7)}} & {\makecell{-305.6 \\(242.6)}} \\
\hline
\hline
{\makecell{Average Return}} & {\makecell{\textbf{-47.1} \\(1.2)}} & {\makecell{-68.3* \\(6.8)}} & {\makecell{-90.4 \\(11.5)}} & {\makecell{-73.6 \\(5.8)}} & {\makecell{-167.6 \\(91.6)}} & {\makecell{-155.5 \\(47.7)}} & {\makecell{-165.5 \\(84.5)}} \\
\hline
Cohen's $d$ & {-} & {4.3} & {5.3} & {6.3} & {1.9} & {3.2} & {2.0} \\
\hline 
\end{tabular}
\vspace{-0.1in}
\label{table:2}
\end{table*}

In addition to the baselines presented in Table \ref{table:1}, we include a recent model-based offline RL method, BAMCTS \citep{DBLP:journals/corr/abs-2410-11234}, which has also been evaluated on Tokamak Control tasks. The benchmarking results are shown in Table \ref{table:2}. The evaluation metric is the negative episodic tracking error of the reference shots, computed as the sum of mean squared errors between the achieved and target quantities at each time step. As in previous experiments, we report the average score over the final 100 policy learning epochs as the policy's convergent performance. \textbf{The results show that ROMBRL ranks first in 2 out of 3 tasks and second in the remaining one.} Moreover, ROMBRL achieves the best average performance across all three tasks, with notably low variance. Cohen’s $d$ further confirms that ROMBRL’s improvement over each baseline is statistically significant. In contrast, the performance of several SOTA model-based methods: RAMBO, MOBILE, and BAMCTS, drops significantly in this stochastic and noisy benchmark, showing poor robustness and high variance across runs. The training curves of each algorithm for all tasks are shown as Figure \ref{fig:3} in Appendix \ref{DetTCT}.

\subsection{Ablation Study}
\label{sec:ablation}

In this section, we investigate the contribution of the proposed Stackelberg learning dynamics and the sensitivity of the algorithm to hyperparameter choices.

\vspace{-0.1in}
\paragraph{Effectiveness of Constrained Stackelberg Gradients.}
A core theoretical contribution of ROMBRL is the derivation of the update rule in Eq.~\eqref{manner3}, which solves the \textit{constrained} Stackelberg game. To validate its necessity, we designed an ablation study comparing three gradient update mechanisms:
\textbf{(1)Naive Alternating (Eq.~\eqref{manner1}):} Updates $\pi_\theta$ and $P_\phi$ alternately without lookahead, similar to the strategy used in RAMBO \citep{DBLP:conf/nips/RigterLH22}.
\textbf{(2)Unconstrained Stackelberg (Eq.~\eqref{manner2}):} Applies implicit differentiation to the \textit{relaxed} objective, accounting for model adaptation but ignoring the dynamics of the constraint enforcement (i.e., treating $\lambda$ as fixed).
\textbf{(3)Constrained Stackelberg (Ours, Eq.~\eqref{manner3}):} Fully accounts for the curvature of the constraint surface via the primal-dual dynamics of $\lambda$.

As illustrated in Figure~\ref{fig:grad_ablation}, the results are compelling. The Naive approach yields the lowest return. Crucially, simply applying Stackelberg dynamics to the unconstrained objective (Eq.~\eqref{manner2}) provides only a marginal improvement. However, our full method (Eq.~\eqref{manner3}) achieves a substantial performance leap. \textbf{This empirical evidence highlights a critical insight:} robustness in constrained RL stems not merely from anticipating the \textit{model parameters} ($\phi$) adaptation, but from anticipating the \textit{boundary adjustments} of the uncertainty set (governed by $\lambda$). Only by incorporating the second-order information of the constraint (via matrix $H$ in Eq.~\eqref{manner3}) can the policy effectively navigate the trade-off between reward maximization and robust conservatism.

\begin{figure}[ht]
    \centering
    \includegraphics[width=0.95\linewidth]{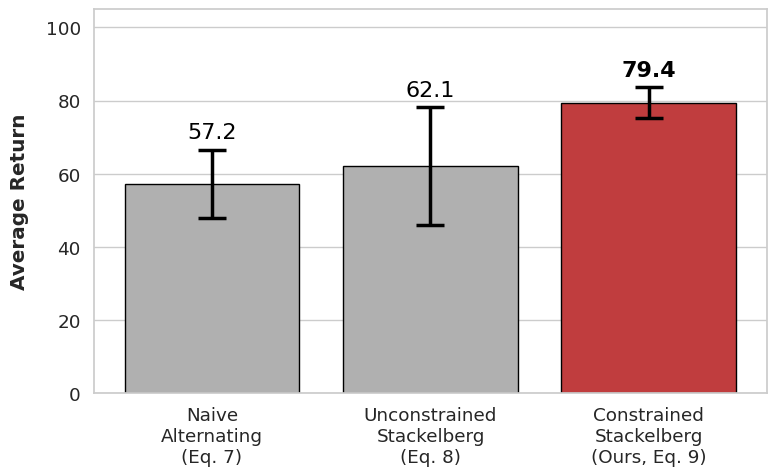} 
    \caption{Comparison of average returns on \texttt{walker2d-medium} using different update rules. The proposed Constrained Stackelberg update (Eq.~\eqref{manner3}) significantly outperforms both Naive Alternating (Eq.~\eqref{manner1}) and Unconstrained Stackelberg (Eq.~\eqref{manner2}) approaches, verifying the necessity of accounting for constraint dynamics.
    \vspace{-0.2in}}
    \label{fig:grad_ablation}
\end{figure}

\vspace{-0.1in}
\paragraph{Hyperparameter Sensitivity.}
We further examined the robustness of ROMBRL regarding the uncertainty radius $\epsilon$ and the learning rates. The results indicate that the choice of $\epsilon$ should correlate with the magnitude of the environmental disturbance, and the policy learning rate $\eta_\theta$ should be less than model learning rate $\eta_\phi$. Due to space constraints, the detailed quantitative results are provided in Appendix~\ref{app:sensitivity}.



\section{Conclusion}
\vspace{-0.1in}

In this paper, we propose ROMBRL, a novel offline model-based RL algorithm designed to address two key challenges in the field. First, we aim to jointly optimize the world model and policy under a unified objective, thereby resolving the common objective mismatch issue in model-based RL. Second, we focus on enhancing the robustness of the learned policy in adversarial environments -- the key to make offline RL applicable to real-world tasks. To this end, we formulate a constrained maximin optimization problem to maximize the worst-case performance of the policy. Specifically, the policy is optimized to maximize its expected return, while the world model is updated adversarially to minimize it. This optimization is carried out using Stackelberg learning dynamics, in which the policy acts as the leader and the world model as the follower, adapting alongside the policy. We provide both theoretical guarantees and efficient training techniques for our algorithm design. ROMBRL is evaluated on multiple adversarial environments, including 12 noisy D4RL MuJoCo tasks and 3 Tokamak Control tasks, covering both deterministic and stochastic dynamics. ROMBRL outperforms a range of SOTA offline RL baselines with statistical significance, demonstrating strong robustness and potential for real-world deployment.

\section*{Impact Statement}

This paper presents work whose goal is to advance the field of machine learning. There are many potential societal consequences of our work, none of which we feel must be specifically highlighted here.

\nocite{langley00}

\bibliography{example_paper}
\bibliographystyle{icml2026}

\newpage
\appendix
\onecolumn

\section{A Discussion on $\omega_1^k$ in Equation (\ref{lds})} \label{TD}

Consider the function $f_1(x_1, x^*_2)$, where $x_2^* \in \arg \min_{x_2 \in \mathcal{X}_2} f_2(x_1, x_2)$ is a function of $x_1$. Then, the total derivation of $f_1(x_1, x^*_2)$ with respect to $x_1$ is given by: 
\begin{equation} \label{td_0}
    Df_1(x_1, x_2^*(x_1)) = \nabla_{x_1} f_1(x_1, x_2^*) + \left(\frac{dx^*_2}{dx_1}\right)^T \nabla_{x_2} f_1(x_1, x_2^*)
\end{equation}
Since $x^*_2$ is a solution of an unconstrained optimization problem, it satisfies the necessary KKT condition: $\nabla_{x_2} f_2(x_1, x_2^*) = 0$. Thus, $\nabla_{x_2} f_2(x_1, x_2^*)$ is a constant function with respect to $x_1$, and we have: 
\begin{equation}
\begin{aligned}
    \frac{d}{dx_1} \nabla_{x_2} f_2(x_1, x_2^*) &= \frac{\partial}{\partial x_1} \nabla_{x_2} f_2(x_1, x_2^*) + \frac{\partial}{\partial x_2} \nabla_{x_2} f_2(x_1, x_2^*) \frac{dx_2^*}{dx_1}\\
    &=\nabla^2_{x_2x_1} f_2(x_1, x_2^*) + \nabla^2_{x_2} f_2(x_1, x_2^*) \frac{dx_2^*}{dx_1} = 0,
\end{aligned}
\end{equation}
which implies that $\frac{dx_2^*}{dx_1} = - (\nabla^2_{x_2} f_2(x_1, x_2^*))^{-1} \nabla^2_{x_2x_1} f_2(x_1, x_2^*)$. Plugging this in Eq. \eqref{td_0}, we obtain:
\begin{equation}
    Df_1(x_1, x_2^*(x_1)) = \nabla_{x_1} f_1(x_1, x_2^*) - \nabla^2_{x_2x_1} f_2(x_1, x_2^*)^T (\nabla^2_{x_2} f_2(x_1, x_2^*))^{-1} \nabla_{x_2} f_1(x_1, x_2^*)
\end{equation}
By relacing $(x_1, x^*_2)$ with $(x_1^k, x_2^k)$, we can get $\omega_1^k$ in Eq. (\ref{lds}).

\section{Proof of Theorem \ref{main}} \label{p_main}

We begin our proof by presenting key lemmas. The first one is Lemma 25 from \cite{DBLP:conf/nips/AgarwalKKS20}.
\begin{lemma} \label{flamb}
For any two conditional probability densities $f_1$, $f_2$ and any distribution $\mathcal{D} \in \Delta_{\mathcal{X}}$, the following inequality holds:
\begin{equation}
    \mathbb{E}_{x \sim \mathcal{D}}\left[TV(f_1(\cdot|x), f_2(\cdot|x))^2\right] \leq -2 \log \mathbb{E}_{x \sim \mathcal{D}, y\sim f_2(\cdot|x)}\left[\exp\left(-\frac{1}{2}\log(f_2(y|x)/f_1(y|x))\right)\right]
\end{equation}
\end{lemma}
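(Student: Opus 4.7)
The plan is to unwind the right-hand side so that it becomes a Hellinger (Bhattacharyya) affinity averaged over $x$, and then apply two classical inequalities: the bound relating total variation to Hellinger affinity, and the elementary bound $1-u \le -\log u$.

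First, I would rewrite the inner exponent: since $\exp\bigl(-\tfrac{1}{2}\log(f_2(y|x)/f_1(y|x))\bigr) = \sqrt{f_1(y|x)/f_2(y|x)}$, changing measure gives, for each fixed $x$,
\begin{equation}
    \mathbb{E}_{y \sim f_2(\cdot|x)}\Bigl[\sqrt{f_1(y|x)/f_2(y|x)}\Bigr] = \int \sqrt{f_1(y|x)\,f_2(y|x)}\,dy =: \mathrm{BC}(x),
\end{equation}
which is the Bhattacharyya coefficient between $f_1(\cdot|x)$ and $f_2(\cdot|x)$. Hence the RHS of the lemma equals $-2\log \mathbb{E}_{x\sim\mathcal{D}}[\mathrm{BC}(x)]$.

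Second, for each fixed $x$ I would invoke the standard inequality $TV(p,q)^2 \le 2\bigl(1 - \int\sqrt{pq}\bigr)$. This is immediate from $TV(p,q)^2 \le 2 H^2(p,q)$ (where $H^2 = \tfrac{1}{2}\int(\sqrt p-\sqrt q)^2 = 1-\int\sqrt{pq}$), which in turn follows from Cauchy--Schwarz applied to $|p-q|=|\sqrt p-\sqrt q|\,(\sqrt p+\sqrt q)$. Taking expectation over $x\sim\mathcal{D}$ yields
\begin{equation}
    \mathbb{E}_{x\sim\mathcal{D}}\bigl[TV(f_1(\cdot|x),f_2(\cdot|x))^2\bigr] \;\le\; 2\bigl(1 - \mathbb{E}_{x\sim\mathcal{D}}[\mathrm{BC}(x)]\bigr).
\end{equation}

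Third, since $\mathrm{BC}(x)\in(0,1]$, the average $u := \mathbb{E}_{x\sim\mathcal{D}}[\mathrm{BC}(x)]$ lies in $(0,1]$, so the elementary inequality $1-u \le -\log u$ gives $2(1-u) \le -2\log u$. Chaining the two displays produces exactly the bound claimed in the lemma.

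The ingredients are all textbook, so the "hard part" is really just bookkeeping: making sure the conditional densities are handled pointwise in $x$ before taking the outer expectation, and verifying that the change of measure in the first step is valid (this is where the usual caveat $f_2\ll f_1$ or the convention $0/0=0$ enters; under absolute continuity the identity $\mathbb{E}_{y\sim f_2}[\sqrt{f_1/f_2}] = \int\sqrt{f_1 f_2}$ holds without issue, and otherwise the Bhattacharyya integral is defined directly and the RHS can only be larger, so the inequality is preserved).
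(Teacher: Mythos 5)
Your proof is correct. The paper does not actually prove this lemma---it imports it verbatim as Lemma 25 of Agarwal et al.\ (2020)---and your argument is essentially the standard proof of that cited result: rewrite the inner expectation as the Bhattacharyya coefficient $\int\sqrt{f_1 f_2}\,dy$, bound $TV^2 \leq 2\bigl(1-\int\sqrt{f_1 f_2}\bigr)$ via Cauchy--Schwarz, and finish with $1-u \leq -\log u$ applied to the $x$-averaged coefficient. One small note: the order of operations matters and you got it right---taking the expectation over $x$ \emph{before} invoking $1-u\leq-\log u$ is what lets you avoid any Jensen step on the (convex) $-\log$; also, the change of measure $\mathbb{E}_{y\sim f_2}[\sqrt{f_1/f_2}]=\int\sqrt{f_1 f_2}\,dy$ holds as an identity without any absolute-continuity caveat, since the region $\{f_2=0\}$ contributes nothing to either side, so your final hedge is unnecessary.
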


The second lemma is analogous to the simulation lemma \cite{DBLP:conf/icml/RossB12} but does not assume knowledge of the reward function.
\begin{lemma} \label{sim}
\( J(\theta, \phi) \) denotes the expected return of an agent following policy \( \pi_\theta \) while interacting with the environment \( M_\phi \), as defined in Eq. (\ref{J}). Then, we have:
\begin{equation}
    J(\theta^*, \phi^*) - J(\theta^*, \phi) \leq \frac{1}{(1-\gamma)^2} \mathbb{E}_{(s, a) \sim d_{\theta^*, \phi^*}(\cdot)} \left[TV(P_{\phi*}(\cdot|s, a), P_\phi(\cdot|s, a))\right]
\end{equation}
\end{lemma}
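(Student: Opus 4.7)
The plan is to run a simulation-lemma-style telescoping argument, adapted to the merged reward-transition distribution $P_\phi(r, s' \mid s, a)$ used throughout the paper. I would first rewrite the left-hand side as an expectation of a $Q$-function difference taken under the same policy $\pi_{\theta^*}$,
\begin{equation*}
    J(\theta^*, \phi^*) - J(\theta^*, \phi) = \mathbb{E}_{s_0 \sim d_0,\, a_0 \sim \pi_{\theta^*}(\cdot | s_0)}\bigl[Q(s_0, a_0; \theta^*, \phi^*) - Q(s_0, a_0; \theta^*, \phi)\bigr],
\end{equation*}
using the $Q$ notation from Eq.~(\ref{J}). I would then apply the Bellman equation to both $Q$-values and insert the cross term $\mathbb{E}_{(r, s') \sim P_{\phi^*}(\cdot | s, a)}[r + \gamma V(s'; \theta^*, \phi)]$ so that each one-step gap splits into (i) a change-of-measure piece that differences $P_{\phi^*}$ against $P_\phi$ on a fixed integrand and (ii) a recursive piece involving $V(s'; \theta^*, \phi^*) - V(s'; \theta^*, \phi)$.

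For the change-of-measure piece I would invoke the Kantorovich--Rubinstein inequality, which controls $|\mathbb{E}_P[f] - \mathbb{E}_Q[f]|$ by $(\sup f - \inf f) \cdot TV(P, Q)$. Because rewards lie in $[0, 1]$ and values in $[0, 1/(1-\gamma)]$, the integrand $r + \gamma V(s'; \theta^*, \phi)$ has range at most $1/(1-\gamma)$, giving a per-step bound of $\tfrac{1}{1-\gamma} TV(P_{\phi^*}(\cdot | s, a), P_\phi(\cdot | s, a))$. The recursive piece $\gamma \mathbb{E}_{(r, s') \sim P_{\phi^*}(\cdot | s, a)}[V(s'; \theta^*, \phi^*) - V(s'; \theta^*, \phi)]$ is important because the outer measure at every unrolling step is $P_{\phi^*}$ composed with $\pi_{\theta^*}$, so iterating generates a rollout in the true MDP $M_{\phi^*}$.

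Unrolling the recursion then yields
\begin{equation*}
    J(\theta^*, \phi^*) - J(\theta^*, \phi) \leq \frac{1}{1-\gamma} \sum_{t=0}^{\infty} \gamma^t \, \mathbb{E}_{(s_t, a_t) \sim d^t_{\theta^*, \phi^*}}\bigl[TV(P_{\phi^*}(\cdot | s_t, a_t), P_\phi(\cdot | s_t, a_t))\bigr],
\end{equation*}
and collapsing the discounted sum via $d_{\theta^*, \phi^*}(s, a) = (1-\gamma) \sum_t \gamma^t d^t_{\theta^*, \phi^*}(s, a)$ contributes the second $1/(1-\gamma)$ factor, producing the claimed inequality.

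I do not anticipate any serious obstacle, since this is essentially the classical simulation lemma. The main piece of care is constant tracking: verifying that folding the reward into $P_\phi(r, s' | s, a)$ does not change the decomposition, and that the $[0, 1]$ reward range keeps the effective range of $r + \gamma V$ at $1/(1-\gamma)$, so that the final prefactor is $1/(1-\gamma)^2$ rather than $2/(1-\gamma)^2$.
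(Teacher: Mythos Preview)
Your proposal is correct and matches the paper's proof essentially step for step: the paper inserts the same cross term $\mathbb{E}_{(r,s')\sim P_{\phi^*}}[r+\gamma V(s';\theta^*,\phi)]$, unrolls the recursion so that every outer measure is $P_{\phi^*}\circ\pi_{\theta^*}$, and bounds the change-of-measure piece by subtracting a centering constant $c$ (the midpoint of the range of $r+\gamma V$), which is exactly your oscillation/``Kantorovich--Rubinstein'' bound in different words. The final collapse of $\sum_t \gamma^t d^t_{\theta^*,\phi^*}$ into $\tfrac{1}{1-\gamma}\,d_{\theta^*,\phi^*}$ to produce the second $1/(1-\gamma)$ factor is identical as well.
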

\begin{proof}
    \begin{equation}
        \begin{aligned}
            &\quad\ J(\theta^*, \phi^*) - J(\theta^*, \phi) \\
            &= \mathbb{E}_{(s, a) \sim \pi_{\theta^*} \circ d_0}\left[\mathbb{E}_{(r, s') \sim P_{\phi^*}}[r + \gamma V(s';\theta^*, \phi^*)] - \mathbb{E}_{(r, s') \sim P_{\phi}}[r + \gamma V(s';\theta^*, \phi)]\right]\\
            &= \mathbb{E}_{(s, a) \sim \pi_{\theta^*} \circ d_0}\left[\mathbb{E}_{(r, s') \sim P_{\phi^*}}[r + \gamma V(s';\theta^*, \phi)] - \mathbb{E}_{(r, s') \sim P_{\phi}}[r + \gamma V(s';\theta^*, \phi)]\right] + \\
            &\quad\ \mathbb{E}_{(s, a) \sim \pi_{\theta^*} \circ d_0}\left[\mathbb{E}_{s' \sim P_{\phi^*}}[\gamma V(s';\theta^*, \phi^*) - \gamma V(s';\theta^*, \phi)]\right] \\
            &= \mathbb{E}_{(s, a) \sim \pi_{\theta^*} \circ d_0}\left[\mathbb{E}_{(r, s') \sim P_{\phi^*}}[r + \gamma V(s';\theta^*, \phi)] - \mathbb{E}_{(r, s') \sim P_{\phi}}[r + \gamma V(s';\theta^*, \phi)]\right] + \\
            &\quad\ \gamma \mathbb{E}_{(s, a) \sim d^1_{\theta^*, \phi^*}}\left[\mathbb{E}_{(r, s') \sim P_{\phi^*}}[r + \gamma V(s';\theta^*, \phi^*)] - \mathbb{E}_{(r, s') \sim P_{\phi}}[r + \gamma V(s';\theta^*, \phi)]\right]
        \end{aligned}
    \end{equation}
    Note that $d^0_{\theta^*, \phi^*}(\cdot) = \pi_{\theta^*} \circ d_0(\cdot)$. By repeatedly applying the process shown above, we obtain:
    \begin{equation}
        \begin{aligned}
            &\quad\ J(\theta^*, \phi^*) - J(\theta^*, \phi) \\ 
            &= \frac{1}{(1-\gamma)} \mathbb{E}_{(s,a) \sim d_{\theta^*, \phi^*}}\left[\mathbb{E}_{(r, s') \sim P_{\phi^*}}[r + \gamma V(s';\theta^*, \phi)] - \mathbb{E}_{(r, s') \sim P_{\phi}}[r + \gamma V(s';\theta^*, \phi)]\right]
        \end{aligned}
    \end{equation}
    Now, we analyze the inner term of this expectation:
    \begin{equation}
    \begin{aligned}
        &\quad\ \mathbb{E}_{(r, s') \sim P_{\phi^*}}[r + \gamma V(s';\theta^*, \phi)] - \mathbb{E}_{(r, s') \sim P_{\phi}}[r + \gamma V(s';\theta^*, \phi)]\\
        &= \int_{\mathcal{S}, [0, 1]} (P_{\phi^*}(r, s'|s, a) - P_{\phi}(r, s'|s, a))(r + \gamma V(s'; \theta^*, \phi) - c)\ dr\ ds'\\
        &\leq \sup_{r, s'} |r + \gamma V(s'; \theta^*, \phi) - c| \times \int_{\mathcal{S}, [0, 1]} |P_{\phi^*}(r, s'|s, a) - P_{\phi}(r, s'|s, a)|\ dr\ ds'\\
        &\leq \frac{1}{2(1-\gamma)} \int_{\mathcal{S}, [0, 1]} |P_{\phi^*}(r, s'|s, a) - P_{\phi}(r, s'|s, a)|\ dr\ ds' \\
        &= \frac{1}{(1-\gamma)} TV(P_{\phi^*}(\cdot|s, a) - P_{\phi}(\cdot|s, a))
    \end{aligned}
    \end{equation}
    Here, the second inequality holds if we choose the constant $c=2/(1-\gamma)$. Combining the two formulas above, we arrive at the final conclusion.
\end{proof}

Now, we formally begin our proof of Theorem \ref{main}:
\begin{equation} \label{main_f}
    \begin{aligned}
        J(\theta^*, \phi^*) - J(\hat{\theta}, \phi^*) &= J(\theta^*, \phi^*) - \min_{\phi \in \Phi} J(\theta^*, \phi) +  \min_{\phi \in \Phi} J(\theta^*, \phi) - J(\hat{\theta}, \phi^*) \\
        &\leq J(\theta^*, \phi^*) - \min_{\phi \in \Phi} J(\theta^*, \phi) +  \min_{\phi \in \Phi} J(\hat{\theta}, \phi) - J(\hat{\theta}, \phi^*)
    \end{aligned}
\end{equation}
This inequality holds because $\hat{\theta}$ is an optimal solution to Eq. (\ref{core}). Based on the assumption that $\phi^* \in \Phi$ with probability at least $1-\delta/2$, we have $J(\theta^*, \phi^*) - J(\hat{\theta}, \phi^*) \leq J(\theta^*, \phi^*) - \min_{\phi \in \Phi} J(\theta^*, \phi)$ with probability at least $1-\delta/2$. Next, we aim to bound \( J(\theta^*, \phi^*) - \min_{\phi \in \Phi} J(\theta^*, \phi) \) by establishing an upper bound on \( J(\theta^*, \phi^*) - J(\theta^*, \phi) \) for all \( \phi \in \Phi \).

Starting from Lemma \ref{sim}, we have: ($f_\phi(s, a) = TV(P_{\phi*}(\cdot|s, a), P_\phi(\cdot|s, a))^2$)
\begin{equation} 
\begin{aligned}
    &\quad\  J(\theta^*, \phi^*) - J(\theta^*, \phi) \\
    & \leq \frac{1}{(1-\gamma)^2} \mathbb{E}_{(s, a) \sim d_{\theta^*, \phi^*}(\cdot)} \left[TV(P_{\phi*}(\cdot|s, a), P_\phi(\cdot|s, a))\right] \\
    & \leq \frac{1}{(1-\gamma)^2} \sqrt{\mathbb{E}_{(s, a) \sim d_{\theta^*, \phi^*}(\cdot)} \left[TV(P_{\phi*}(\cdot|s, a), P_\phi(\cdot|s, a))^2\right]} \\
    & \leq \frac{\sqrt{C_{\phi^*, \theta^*}}}{(1-\gamma)^2} \sqrt{\mathbb{E}_{(s, a) \sim d_{\mu, \phi^*}(\cdot)} \left[TV(P_{\phi*}(\cdot|s, a), P_\phi(\cdot|s, a))^2\right]} \\
    & = \frac{\sqrt{C_{\phi^*, \theta^*}}}{(1-\gamma)^2} \sqrt{\mathbb{E}_{(s, a) \sim \mathcal{D}_\mu} \left[f_\phi(s, a)\right] + (\mathbb{E}_{(s, a) \sim d_{\mu, \phi^*}(\cdot)} \left[f_\phi(s, a)\right] - \mathbb{E}_{(s, a) \sim \mathcal{D}_\mu} \left[f_\phi(s, a)\right])}
\end{aligned}
\end{equation}
Next, we bound $\mathbb{E}_{(s, a) \sim \mathcal{D}_\mu} \left[f_\phi(s, a)\right]$ and $\mathbb{E}_{(s, a) \sim d_{\mu, \phi^*}(\cdot)} \left[f_\phi(s, a)\right] - \mathbb{E}_{(s, a) \sim \mathcal{D}_\mu} \left[f_\phi(s, a)\right]$, separately.
\begin{equation}
\begin{aligned}
    &\quad\ \mathbb{E}_{(s, a) \sim \mathcal{D}_\mu} \left[TV(P_{\phi*}(\cdot|s, a), P_\phi(\cdot|s, a))^2\right] 
    \\ &\leq 2 \mathbb{E}_{(s, a) \sim \mathcal{D}_\mu} \big[TV(P_{\phi*}(\cdot|s, a), P_{\bar{\phi}}(\cdot|s, a))^2 +
    TV(P_{\phi}(\cdot|s, a), P_{\bar{\phi}}(\cdot|s, a))^2\big]
\end{aligned}
\end{equation}
According to Lemma \ref{flamb}, we further have:
\begin{equation} \label{main_first}
\begin{aligned}
    &\quad\  \mathbb{E}_{(s, a) \sim \mathcal{D}_\mu} \big[
    TV(P_{\phi}(\cdot|s, a), P_{\bar{\phi}}(\cdot|s, a))^2\big] \\
    & \leq -2 \log \mathbb{E}_{(s, a) \sim \mathcal{D}_\mu, (r, s')\sim P_{\bar{\phi}}(\cdot|s, a)}\left[\exp\left(-\frac{1}{2}\log(P_{\bar{\phi}}(r, s'|s, a)/P_{\phi}(r, s'|s, a))\right)\right] \\
    & \leq -2 \mathbb{E}_{(s, a) \sim \mathcal{D}_\mu, (r, s')\sim P_{\bar{\phi}}(\cdot|s, a)}\left[-\frac{1}{2}\log(P_{\bar{\phi}}(r, s'|s, a)/P_{\phi}(r, s'|s, a))\right] \\
    & = \mathbb{E}_{(s, a) \sim \mathcal{D}_\mu, (r, s')\sim P_{\bar{\phi}}(\cdot|s, a)}\left[\log(P_{\bar{\phi}}(r, s'|s, a)/P_{\phi}(r, s'|s, a))\right] \leq \epsilon
\end{aligned}
\end{equation}
Here, the second and third inequalities follow from Jensen's Inequality and the fact that \( \phi \in \Phi \). Similarly, since \( \phi^* \in \Phi \) with probability at least $1 - \delta/2$, we have $\mathbb{E}_{(s, a) \sim \mathcal{D}_\mu} \big[
    TV(P_{\phi^*}(\cdot|s, a), P_{\bar{\phi}}(\cdot|s, a))^2\big] \leq \epsilon$ and so $\mathbb{E}_{(s, a) \sim \mathcal{D}_\mu} \left[TV(P_{\phi*}(\cdot|s, a), P_\phi(\cdot|s, a))^2\right] \leq 4\epsilon$, with probability at least $1 - \delta/2$.

Finally, we can bound $\mathbb{E}_{(s, a) \sim d_{\mu, \phi^*}(\cdot)} \left[f_\phi(s, a)\right] - \mathbb{E}_{(s, a) \sim \mathcal{D}_\mu} \left[f_\phi(s, a)\right]$ through the Bernstein’s Concentration Inequality, since $\mathbb{E}_{(s, a) \sim \mathcal{D}_\mu} \left[f_\phi(s, a)\right]$ is a sample mean, whose expectation is $\mathbb{E}_{(s, a) \sim d_{\mu, \phi^*}(\cdot)} \left[f_\phi(s, a)\right]$. Assuming that the state-action pairs in $\mathcal{D}_\mu$ are independently sampled from $d_{\mu, \phi^*}(\cdot)$, and using the following properties: $0 \leq f_\phi(s, a) \leq 1$, $|\mathbb{E}_{(s, a) \sim d_{\mu, \phi^*}(\cdot)} \left[f_\phi(s, a)\right] - f_\phi(s, a)| \leq 1$, and $\text{Var}_{(s, a) \sim d_{\mu, \phi^*}(\cdot)}(f_\phi(s, a)) \leq \mathbb{E}_{(s, a) \sim d_{\mu, \phi^*}(\cdot)} \left[f_\phi(s, a)^2\right] \leq 1$, we have: ($\forall \phi \in \Phi$)
\begin{equation} \label{bers}
    \mathbb{E}_{(s, a) \sim d_{\mu, \phi^*}(\cdot)} \left[f_\phi(s, a)\right] - \mathbb{E}_{(s, a) \sim \mathcal{D}_\mu} \left[f_\phi(s, a)\right] \leq c \left( \sqrt{\frac{\log (2|\Phi|/\delta)}{N}} + \frac{\log (2|\Phi|/\delta)}{N}\right),
\end{equation}
with probability at least $1 - \delta/2$. Here, we apply Bernstein’s Concentration Inequality along with the Union Bound Inequality over the uncertainty set \( \Phi \).

To summarize, by applying the Union Bound Inequality to the following events: (1) $\phi^* \in \Phi$ and (2) Eq. (\ref{bers}) holds for all $\phi \in \Phi$, and combining Eqs. (\ref{main_f})-(\ref{bers}), we obtain:
\begin{equation}
    J(\theta^*, \phi^*) - J(\hat{\theta}, \phi^*) \leq \frac{\sqrt{C_{\phi^*, \theta^*}}}{(1-\gamma)^2}\sqrt{4\epsilon + c\left(\sqrt{\frac{\log(2|\Phi|/\delta)}{N}} + \frac{\log(2|\Phi|/\delta)}{N}\right)},
\end{equation}
with probability at least $1 - \delta$.

\section{Proof of Theorem \ref{cate}} \label{p_cate}

For tabular MDPs, the maximum likelihood estimator \( P_{\bar{\phi}}(\cdot|s,a) \) is given by the empirical distribution derived from the samples in \( \mathcal{D}_\mu \) at each state-action pair \( (s, a) \), i.e., \( P_{\hat{\phi}}(\cdot|s,a) \). Thus, we have: $\mathbb{E}_{(s, a) \sim \mathcal{D}_\mu, (r, s') \sim P_{\bar{\phi}}(\cdot|s, a)}\left[\log P_{\bar{\phi}}(r, s'|s, a) - \log P_{\phi^*}(r, s'|s, a)\right] = \mathbb{E}_{(s, a) \sim \mathcal{D}_\mu} \left[KL(P_{\hat{\phi}}(\cdot|s,a) || P_{\phi^*}(\cdot|s, a))\right]$. According to \cite{mardia2020concentration}, when $3 \leq K \leq \frac{N_{sa}C_0}{e} + 2$: 
\begin{equation}
    \mathbb{P}(KL(P_{\hat{\phi}}(\cdot|s,a) || P_{\phi^*}(\cdot|s, a)) \geq \epsilon') \leq C_1K(C_0N_{sa}/K)^{0.5K}e^{-N_{sa} \epsilon'}.
\end{equation}
Setting the right-hand side equal to \( \frac{\delta}{2 \mathcal{D}^\mu_{sa}} \) and applying the Union Bound Inequality over all \( (s, a) \in \mathcal{D}^\mu \), we can obtain that $\mathbb{P}(\cap_{(s,a) \in \mathcal{D}^\mu} X_{sa}) \geq 1-\delta/2$, where $X_{sa}$ represents the event:
\begin{equation}
    KL(P_{\hat{\phi}}(\cdot|s,a) || P_{\phi^*}(\cdot|s, a)) \leq \frac{1}{N_{sa}} \log \frac{2C_1K(C_0N_{sa}/K)^{0.5K}\mathcal{D}^{\mu}_{sa}}{\delta}.
\end{equation}
Further, $\cap_{(s,a) \in \mathcal{D}^\mu} X_{sa}$ implies:
\begin{equation}
\begin{aligned}
    \mathbb{E}_{(s, a) \sim \mathcal{D}_\mu} \left[KL(P_{\hat{\phi}}(\cdot|s,a) || P_{\phi^*}(\cdot|s, a))\right] & \leq \sum_{(s, a) \sim \mathcal{D}_\mu} \frac{N_{sa}}{N}\frac{1}{N_{sa}} \log \frac{2C_1K(C_0N_{sa}/K)^{0.5K}\mathcal{D}^{\mu}_{sa}}{\delta} \\
    & \leq \sum_{(s, a) \sim \mathcal{D}_\mu} \frac{1}{N} \log \frac{2C_1K(C_0\widetilde{N}/K)^{0.5K}\mathcal{D}^{\mu}_{sa}}{\delta} \\
    & = \frac{\mathcal{D}^{\mu}_{sa}}{N} \log \frac{2C_1K(C_0\widetilde{N}/K)^{0.5K}\mathcal{D}^{\mu}_{sa}}{\delta} = \epsilon
\end{aligned}
\end{equation}
Thus, $\mathbb{P}(\phi^* \in \Phi) = \mathbb{P}\left(\mathbb{E}_{(s, a) \sim \mathcal{D}_\mu} \left[KL(P_{\hat{\phi}}(\cdot|s,a) || P_{\phi^*}(\cdot|s, a))\right] \leq \epsilon \right) \geq 1- \delta/2$.

\section{Proof of Theorem \ref{diag}} \label{p_diag}

We begin by listing the key lemmas used in the proof. The first lemma is from \cite{laurent2000adaptive}, which claims:
\begin{lemma} \label{lm}
    Let $X$ be a $\chi^2$ statistic with $n$ degrees of freedom. For any positive $t$,
    \begin{equation}
        \begin{aligned}
            \mathbb{P}(X \leq n - 2\sqrt{nt}) \leq \exp(-t),\ \mathbb{P}(X \geq n + 2\sqrt{nt} + 2t) \leq \exp(-t).
        \end{aligned}
    \end{equation}
\end{lemma}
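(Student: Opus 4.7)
The plan is to prove the Laurent--Massart $\chi^2$ concentration inequalities via the classical Chernoff-bound argument on the centered moment generating function, handling the two tails separately. The lower tail is essentially routine, while the upper tail requires a Bernstein-type cumulant bound together with a carefully chosen Chernoff parameter.

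I would begin by representing $X = \sum_{i=1}^n Z_i^2$ with $Z_i$ i.i.d.\ standard normal, computing $\mathbb{E}[e^{\lambda(Z_i^2-1)}] = e^{-\lambda}(1-2\lambda)^{-1/2}$ for $\lambda < 1/2$, and thus obtaining the centered cumulant generating function $\psi(\lambda) := \log \mathbb{E}[e^{\lambda(X-n)}] = -n\lambda - (n/2)\log(1-2\lambda)$. The two tail bounds then follow by applying Markov's inequality to $e^{\lambda(X-n)}$ with $\lambda > 0$ for the upper tail and to $e^{-\lambda(X-n)}$ with $\lambda > 0$ for the lower tail.

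For the upper tail, the analytic key is the inequality $\psi(\lambda) \leq n\lambda^2/(1-2\lambda)$ for $\lambda \in (0,1/2)$, which I would justify from the power series $-u - \tfrac{1}{2}\log(1-2u) = \sum_{k \geq 2}(2u)^k/(2k)$ bounded by a geometric tail. Combining with Chernoff yields $\mathbb{P}(X - n \geq s) \leq \exp(-\lambda s + n\lambda^2/(1-2\lambda))$. Plugging in $s = 2\sqrt{nt} + 2t$ and choosing $\lambda^* = \sqrt{t/n}/(1 + 2\sqrt{t/n}) \in (0,1/2)$, so that $1 - 2\lambda^* = 1/(1+2\sqrt{t/n})$ and hence $\lambda^*/(1-2\lambda^*) = \sqrt{t/n}$, one verifies that the exponent telescopes to exactly $-t$. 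For the lower tail, the MGF $(1+2\lambda)^{-n/2}$ has no singularity for $\lambda > 0$, and $\log(1+u) \geq u - u^2/2$ immediately gives $\psi(-\lambda) = n\lambda - (n/2)\log(1+2\lambda) \leq n\lambda^2$. Chernoff then produces $\mathbb{P}(n - X \geq s) \leq \exp(-\lambda s + n\lambda^2)$, and the choice $\lambda = \sqrt{t/n}$ at $s = 2\sqrt{nt}$ makes the exponent exactly $-t$.

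The main obstacle is the upper tail: both the cumulant bound and the value of $\lambda^*$ are nonobvious to guess on first pass. If one tries instead to minimize $-\lambda s + n\lambda^2/(1-2\lambda)$ by differentiation, the algebra becomes unpleasant. A cleaner route I would take is the reparametrization $v = 2\lambda/(1-2\lambda)$, under which the Chernoff exponent simplifies to $\tfrac{v}{4(1+v)}(nv - 2s)$; the choice $v = 2\sqrt{t/n}$ is then the natural one, and the identity $nv - 2s = -2(\sqrt{nt} + 2t) = -(1+v)\cdot 2t/v$ makes the reduction to $-t$ essentially mechanical. Once both tails are in hand, the statement of the lemma follows directly, with no union bound or additional assumption needed.
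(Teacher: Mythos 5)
Your proof is correct, but it is worth noting that the paper itself does not prove this lemma at all: it is imported verbatim as Lemma 1 of Laurent and Massart (2000), cited without argument. What you have supplied is therefore a genuinely self-contained derivation rather than a reconstruction of anything in the paper. Your route is the standard Cram\'er--Chernoff one and it checks out in every essential step: the centered cumulant generating function $\psi(\lambda)=-n\lambda-\tfrac{n}{2}\log(1-2\lambda)$ is right, the series bound $\psi(\lambda)\le n\lambda^2/(1-2\lambda)$ on $(0,1/2)$ follows from $\sum_{k\ge 2}(2u)^k/(2k)\le \tfrac14\sum_{k\ge 2}(2u)^k=u^2/(1-2u)$, and with $a=\sqrt{t/n}$ and $\lambda^*=a/(1+2a)$ the exponent is $\tfrac{t}{1+2a}\bigl(1-2(1+a)\bigr)=-t$ exactly; the lower tail via $\log(1+u)\ge u-u^2/2$ and $\lambda=\sqrt{t/n}$ is likewise exact. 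One small correction to your closing remark: in the reparametrization $v=2\lambda/(1-2\lambda)$ the identity should read $nv-2s=-2(\sqrt{nt}+2t)=-(1+v)\cdot 4t/v$, not $-(1+v)\cdot 2t/v$; as written, substituting your version into $\tfrac{v}{4(1+v)}(nv-2s)$ would yield $-t/2$ rather than $-t$. Since your primary computation with $\lambda^*$ does not rely on that identity, this is a typo in an optional shortcut rather than a gap, but it should be fixed if the shortcut is the route you actually write up. Your observation that no union bound or extra assumption is needed is also correct, and the lower-tail bound holds vacuously when $2\sqrt{nt}>n$, so no case split is required there either.
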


The second lemma establishes a high-probability upper bound on the KL divergence between a univariate Gaussian distribution and its maximum likelihood estimate (MLE).
\begin{lemma} \label{kl_gau}
    Let $\{x_1, \cdots, x_n\}$ be independent random samples drawn from a univariate Gaussian distribution $\mathcal{N}(\cdot|\mu, \sigma^2)$. Define $\hat{\mu} = \sum_{i=1}^n x_i/n$ and $\hat{\sigma}^2 = \sum_{i=1}^n (x_i - \hat{\mu})^2/n$. Then, we have $\mathbb{P}\left(KL(\mathcal{N}(\cdot|\hat{\mu}, \hat{\sigma}^2)||\mathcal{N}(\cdot|\mu, \sigma^2)) \leq \epsilon'\right) \geq 1 - \delta'$, where $\epsilon' = \mathcal{O}\left(\frac{\log(1/\sigma')}{n}\right)$ as $n \rightarrow \infty$.
\end{lemma}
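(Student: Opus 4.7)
The plan is to reduce the statement to the closed-form KL between two univariate Gaussians and then concentrate the two random quantities that appear, using Lemma~\ref{lm}.

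First, I would write the standard identity
\begin{equation*}
    KL(\mathcal{N}(\cdot|\hat\mu, \hat\sigma^2)\,\|\,\mathcal{N}(\cdot|\mu, \sigma^2)) = \frac{1}{2}\log\frac{\sigma^2}{\hat\sigma^2} + \frac{\hat\sigma^2 + (\hat\mu - \mu)^2}{2\sigma^2} - \frac{1}{2},
\end{equation*}
and invoke the standard sampling facts that $Z := \sqrt{n}(\hat\mu-\mu)/\sigma \sim \mathcal{N}(0,1)$ is independent of $W := n\hat\sigma^2/\sigma^2 \sim \chi^2_{n-1}$. Setting $Y = W/n$ and $g(y) = -\tfrac{1}{2}\log y + \tfrac{y}{2} - \tfrac{1}{2}$, the KL divergence becomes exactly $g(Y) + \tfrac{Z^2}{2n}$. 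Note $g(1) = g'(1) = 0$ and $g''(y) = 1/(2y^2)$, so $g$ is locally quadratic at $1$; in particular $g(y) \leq (y-1)^2$ for $y \in [1/2, 3/2]$.

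Next, I would apply Lemma~\ref{lm} twice with $t = \log(4/\delta')$. Applied to $W \sim \chi^2_{n-1}$: with probability at least $1-\delta'/2$, $(n-1) - 2\sqrt{(n-1)t} \leq W \leq (n-1) + 2\sqrt{(n-1)t} + 2t$, which yields $|Y - 1| = \mathcal{O}(\sqrt{t/n})$ as $n \to \infty$. Applied to $Z^2 \sim \chi^2_1$: with probability at least $1-\delta'/2$, $Z^2 \leq 1 + 2\sqrt{t} + 2t = \mathcal{O}(t)$. For $n$ sufficiently large the first event forces $Y \in [1/2, 3/2]$, so the local quadratic bound on $g$ gives $g(Y) \leq (Y-1)^2 = \mathcal{O}(t/n)$, while trivially $Z^2/(2n) = \mathcal{O}(t/n)$. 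A union bound then delivers
\begin{equation*}
    KL(\mathcal{N}(\cdot|\hat\mu, \hat\sigma^2)\,\|\,\mathcal{N}(\cdot|\mu, \sigma^2)) \leq \mathcal{O}(t/n) = \mathcal{O}(\log(1/\delta')/n)
\end{equation*}
with probability at least $1-\delta'$, which is the claim.

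The main obstacle I foresee is the quadratic control of $g$ near $1$: because $g(y) \to +\infty$ as $y \to 0^+$, one must keep $Y$ bounded away from $0$ on the high-probability event. In the asymptotic regime $n \to \infty$ stated here this follows immediately from the lower chi-square tail bound, but for the non-asymptotic statement advertised in Appendix~\ref{p_diag} one would need to track constants explicitly so that $Y \geq 1/2$ is guaranteed once $n$ exceeds a threshold depending only on $\log(1/\delta')$, and then propagate those constants through the Taylor bound on $g$.
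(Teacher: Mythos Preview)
Your proposal is correct and follows essentially the same route as the paper: write the closed-form Gaussian KL, identify the two chi-square statistics $Z^2\sim\chi^2_1$ and $W\sim\chi^2_{n-1}$, apply Lemma~\ref{lm} to each with a union bound, and control the convex function $g(y)=\tfrac12(y-\log y-1)$ quadratically near $y=1$. The only cosmetic difference is that the paper argues via monotonicity of $g$ on $(0,1]$ and $[1,\infty)$ together with the Taylor expansion $g(1+t)=t^2/2+o(t^2)$, whereas you use the explicit envelope $g(y)\le(y-1)^2$ on $[1/2,3/2]$; both yield the same $\mathcal{O}(\log(1/\delta')/n)$ rate.
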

\begin{proof}
    Based on the definitions of KL divergence and univariate Gaussian distribution, we have:
    \begin{equation} \label{s1}
        KL(\mathcal{N}(\cdot|\hat{\mu}, \hat{\sigma}^2)||\mathcal{N}(\cdot|\mu, \sigma^2)) = \frac{1}{2} \left[\frac{\hat{\sigma}^2}{\sigma^2} - \log \frac{\hat{\sigma}^2}{\sigma^2} - 1 + \frac{(\mu - \hat{\mu})^2}{\sigma^2}\right]
    \end{equation}
    Let $X_1 = \frac{(\mu - \hat{\mu})^2}{\sigma^2/n}$, then $X_1 \sim \chi^2_1$. According to Lemma \ref{lm}, $\mathbb{P}(X \geq n + 2\sqrt{nt} + 2t) \leq \exp(-t),\ \forall t>0$. Let $\exp(-t) = \frac{\delta'}{2}$, we obtain:
    \begin{equation} \label{s2}
        \mathbb{P}\left(\frac{(\mu - \hat{\mu})^2}{\sigma^2} \leq \frac{1}{n} \left(1 + 2 \sqrt{\log (2/\delta')} + 2 \log(2/\delta')\right)\right) \geq 1 - \delta'/2
    \end{equation}

    Further, we observe that $\frac{\hat{\sigma}^2}{\sigma^2} = \frac{\sum_{i=1}^n (x_i - \hat{\mu})^2}{n \sigma^2} = \frac{X_2}{n}$, where $X_2 \sim \chi^2_{n-1}$. Applying the two formulas in Lemma \ref{lm} and letting $\exp(-t) = \delta'/4$, we have:
    \begin{equation} \label{complex}
        \frac{n-1}{n} - 2\sqrt{\frac{(n-1) \log(4/\delta')}{n^2}} \leq \frac{\hat{\sigma}^2}{\sigma^2} \leq \frac{n-1}{n} + 2\sqrt{\frac{(n-1) \log(4/\delta')}{n^2}} + 2 \frac{\log(4/\delta')}{n},
    \end{equation}
    with probability at least $1-\delta'/2$. The continuous function $f(x) = x - \log x - 1$ is monotonically decreasing for $0 < x \leq 1$ and increasing for $x > 1$. Thus, $f\left(\frac{\hat{\sigma}^2}{\sigma^2}\right) \leq \max\{f(x_1), f(x_2)\}$, with probability at least $1-\delta'/2$, where $x_1$ and $x_2$ correspond to the left-hand and right-hand sides of Eq. (\ref{complex}), respectively. Combining this result with Eqs. (\ref{s1}) and (\ref{s1}) and applying the Union Bound Inequality, we obtain:
    \begin{equation} \label{s3}
        KL(\mathcal{N}(\cdot|\hat{\mu}, \hat{\sigma}^2)||\mathcal{N}(\cdot|\mu, \sigma^2)) \leq \frac{1}{2}\left(\max\{f(x_1), f(x_2)\} + \frac{1}{n}\left(1 + 2 \sqrt{\log (2/\delta')}+ 2 \log(2/\delta')\right)\right),
    \end{equation}
    with probability at least $1-\sigma'$.

    Let $x_1 = 1 + t_1$ and $x_2 = 1+t_2$. Asymptotically, as $n \rightarrow \infty$, $t_1 \rightarrow 0$ and $t_2 \rightarrow 0$. Also, $f(1+t) = \frac{t^2}{2} + \smallO (t^2)$, as $t \rightarrow 0$. Applying this equation to Eq. (\ref{s3}), we arrive at the conclusion: $\mathbb{P}\left(KL(\mathcal{N}(\cdot|\hat{\mu}, \hat{\sigma}^2)||\mathcal{N}(\cdot|\mu, \sigma^2)) \leq \epsilon'\right) \geq 1 - \delta'$, where $\epsilon' = \mathcal{O}\left(\frac{\log(1/\sigma')}{n}\right)$ as $n \rightarrow \infty$.
\end{proof}

For a fixed \((s, a)\), both \( P_{\bar{\phi}}(\cdot \mid s, a) \) and \( P_{\phi^*}(\cdot \mid s, a) \) define Gaussian distributions with diagonal covariance matrices over a \( (d+1) \)-dimensional output space, which consists of the \( d \)-dimensional next state and the scalar reward. Thus, each dimension follows a univariate Gaussian distribution and the following equation holds:
\begin{equation}
    KL(P_{\bar{\phi}}(\cdot|s,a) || P_{\phi^*}(\cdot|s,a)) = \sum_{i=1}^{d+1} KL(\mathcal{N}(\cdot|\bar{\mu}_{sa}^i, \bar{\sigma}_{sa}^{2, i} ||\mathcal{N}(\cdot|\mu_{sa}^i, \sigma_{sa}^{2, i})
\end{equation}
Since $P_{\bar{\phi}}$ is an MLE of $P_{\phi^*}$, $\bar{\mu}_{sa}^i = \hat{\mu}_{sa}^i$ and $\bar{\sigma}_{sa}^{2, i}=\hat{\sigma}_{sa}^{2, i}$ are the sample mean and sample variance, respectively. Applying Lemma \ref{kl_gau} and letting $\delta' = \frac{\delta}{2\mathcal{D}_{sa}^\mu (d+1)}$, we obtain:
\begin{equation}
    \mathbb{P}\left(KL(\mathcal{N}(\cdot|\hat{\mu}_{sa}^i, \hat{\sigma}_{sa}^{2, i})||\mathcal{N}(\cdot|\mu_{sa}^i, \sigma_{sa}^{2, i})) \leq \epsilon'\right) \geq 1 - \delta',\ \epsilon' = \mathcal{O}\left(\frac{d+1}{N_{sa}}\log \frac{2\mathcal{D}_{sa}^\mu (d+1)}{\delta}\right)
\end{equation}
This implies that $\mathbb{P}\left(KL(P_{\bar{\phi}}(\cdot|s,a) || P_{\phi^*}(\cdot|s,a)) \leq (d+1)\epsilon'\right) \geq 1 - (d+1)\delta'$, based on which we can obtain:
\begin{equation}
\begin{aligned}
    \mathbb{E}_{(s,a)\sim\mathcal{D}_\mu} \left[KL(P_{\bar{\phi}}(\cdot|s,a) || P_{\phi^*}(\cdot|s,a)) \right] \leq \sum_{(s, a) \sim \mathcal{D}_\mu} \frac{N_{sa}}{N}(d+1)\epsilon' = \epsilon,
\end{aligned}
\end{equation}
with probability at least $1-\delta''$, where $\delta''=\mathcal{D}_{sa}^\mu(d+1)\delta'=\delta/2$ and $\epsilon = \mathcal{O}\left(\frac{\mathcal{D}_{sa}^\mu d^2}{N} \log\frac{\mathcal{D}_{sa}^\mu d}{\delta}\right)$. A non-asymptotic bound on $\mathbb{E}_{(s,a)\sim\mathcal{D}_\mu} \left[KL(P_{\bar{\phi}}(\cdot|s,a) || P_{\phi^*}(\cdot|s,a)) \right]$ can be similarly derived based on Eq. (\ref{s3}).

\section{Discussion on the Size of the Function Class} \label{disc}

A $\xi$-cover of a set $\mathcal{X}$ with respect to a metric $\rho$ is a set $\{x_1, \cdots, x_n\} \subset \mathcal{X}$ such that for each $x \in \mathcal{X}$, there exists some $x_i$ such that $\rho(x, x_i) \leq \xi$. The $\xi$-covering number $N(\xi; \mathcal{X}, \rho)$ is the cardinality of the smallest $\xi$-cover of $\mathcal{X}$. Based on this definition, we have the following lemma:
\begin{lemma} \label{cov_num}
    $N(\xi; \mathcal{X}, ||\cdot||_1) \leq {\lceil \frac{d}{\xi} \rceil + d - 1 \choose d - 1}$, where $\mathcal{X} = \{x \in \mathbb{R}^d \mid ||x||_1 = 1,\ x \succeq 0\}$.
\end{lemma}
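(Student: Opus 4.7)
My plan is to construct an explicit $\xi$-net by placing a uniform rational lattice on the simplex. Set $m := \lceil d/\xi\rceil$ and take
\begin{equation*}
\mathcal{C} \;=\; \left\{\frac{1}{m}(k_1,\ldots,k_d) \;:\; k_i \in \mathbb{Z}_{\geq 0},\ \sum_{i=1}^d k_i = m\right\} \;\subset\; \mathcal{X}.
\end{equation*}
I will then verify (i) that $|\mathcal{C}|$ matches the stated binomial bound, and (ii) that $\mathcal{C}$ is a $\xi$-cover of $\mathcal{X}$ in the $\ell_1$ metric.

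Step (i) is immediate from a stars-and-bars count: the number of weak compositions of $m$ into $d$ ordered parts is exactly $\binom{m+d-1}{d-1}$, which matches the right-hand side of the lemma. For step (ii), given $x \in \mathcal{X}$ I would round each coordinate down, setting $k_i := \lfloor m x_i\rfloor$ so that $0 \leq x_i - k_i/m < 1/m$. Because each floor operation loses strictly less than $1$ and $\sum_i x_i = 1$, the deficit $r := m - \sum_i k_i$ satisfies $0 \leq r \leq d-1$. I would then produce $y \in \mathcal{C}$ by picking any $r$-element index set $B$ and incrementing each selected $k_i$ by one.

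The step I expect to be the only real subtlety is obtaining the sharp $\ell_1$ bound: a direct triangle-inequality calculation gives at best $\|x-y\|_1 \lesssim (2d-1)/m$, which is off by a factor of roughly two and would force $m$ to scale like $2d/\xi$. The way I plan to recover the clean bound $\|x-y\|_1 \leq d/m$ is to exploit the ``conservation of mass'' identity $\sum_i(x_i - y_i) = 0$, which follows because both $x$ and $y$ lie on the simplex. Writing $A$ for the complement of $B$, every $i \in A$ contributes a non-negative deviation $x_i - y_i \in [0, 1/m)$, and every $i \in B$ contributes a non-positive deviation $x_i - y_i \in (-1/m, 0]$; balance then forces
\begin{equation*}
\|x - y\|_1 \;=\; 2\sum_{i \in A}(x_i - y_i) \;=\; 2\sum_{i \in B}(y_i - x_i) \;\leq\; \frac{2\min(|A|,|B|)}{m} \;\leq\; \frac{d}{m} \;\leq\; \xi,
\end{equation*}
where the last inequality uses $m = \lceil d/\xi\rceil$. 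Combining this covering guarantee with the stars-and-bars count from step (i) yields the claimed upper bound on $N(\xi;\mathcal{X},\|\cdot\|_1)$.
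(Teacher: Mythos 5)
Your proposal is correct and is essentially the paper's own argument: the same $1/m$-lattice net on the simplex with $m=\lceil d/\xi\rceil$, the same stars-and-bars count $\binom{m+d-1}{d-1}$, and the same round-down-then-redistribute construction of the covering point. The only difference is cosmetic: the ``conservation of mass'' refinement at the end is not actually needed, since every coordinate of $y$ (whether incremented or not) satisfies $|x_i-y_i|\le 1/m$, so the plain per-coordinate triangle inequality already yields $\|x-y\|_1\le d/m\le\xi$ — there is no factor-of-two loss to recover.
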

\begin{proof}
Let $k=\lceil\frac{d}{\xi}\rceil$ and $\{x_1, \cdots, x_n\}$ be the set of all points in $\mathcal{X}$ whose coordinates are integer multiples of $1/k$. The size of this set, i.e, $n$, corresponds to the number of ways to distribute $k$ units among $d$ coordinates, i.e., $k+d-1 \choose d-1$.

Now, we need to prove that $\forall x \in \mathcal{X}$, there exists some $x_i$ such that $||x - x_i||_1 \leq \xi$. Suppose that $g^j \leq x^j \leq g^j + 1/k$, where $j=1, \cdots, d$, $g^j$ are integer multiples of $1/k$, and $x^j$ is the $j$-th coordinate of $x$. Then, we have $\sum_{j=1}^dg^j \leq \sum_{j=1}^dx^j  = 1\leq \sum_{j=1}^d g^j + d/k$. This means that we can add $m$  units (each of size $1/k$) to the $d$ coordinates $g^{1:d}$ to construct a valid probability distribution $y$. Since $0 \leq m \leq d$, $y$ can be constructed to satisfy: $y \in \{x_1, \cdots, x_n\}$ and $|y^j - x^j| \leq 1/k, \forall j$, which implies $||x - y||_1 \leq d/k \leq \xi$.
\end{proof}

Next, we show that for tabular MDPs considered in Theorem \ref{cate}, $J(\theta^*, \phi^*) - J(\hat{\theta}, \phi^*) \leq \frac{\sqrt{C_{\phi^*, \theta^*}}}{(1-\gamma)^2}\sqrt{4\epsilon + c'\left(\sqrt{\frac{\log(2N_c/\delta)}{N}} + \frac{\log(2N_c/\delta)}{N}\right)}$, with probability at least $1 - \delta$, where $c'$ is a constant and $N_c$ is upper bounded by ${KN+K-1 \choose K-1}^{|\mathcal{S}||\mathcal{A}|}$.
\begin{proof}
 For a fixed $(s, a)$, $P_\phi(\cdot|s,a) \in \mathbb{R}^K$ is a categorical distribution. According to Lemma \ref{cov_num}, the $\frac{1}{N}$-covering number for $\{P_\phi(\cdot|s,a)\}$ in terms of the L1-norm is upper bounded by ${KN+K-1 \choose K-1}$. Then, we can construct a set $\{P_1, \cdots, P_{N_c}\}$, such that $\forall P_\phi$, there exists $P_i$ such that $||P_i(\cdot|s,a)-P_\phi(\cdot|s,a)||_1 \leq \frac{1}{N}$ \textbf{for all $(s, a) \in \mathcal{S} \times \mathcal{A}$}, and we have $N_c \leq {KN+K-1 \choose K-1}^{|\mathcal{S}||\mathcal{A}|}$.

 Repeating the derivation in Appendix \ref{p_main} until Eq. (\ref{main_first}), we can obtain:
 \begin{equation}
    J(\theta^*, \phi^*) - J(\hat{\theta}, \phi^*) \leq \frac{\sqrt{C_{\phi^*, \theta^*}}}{(1-\gamma)^2}\sqrt{4\epsilon + (\mathbb{E}_{(s, a) \sim d_{\mu, \phi^*}(\cdot)} \left[f_\phi(s, a)\right] - \mathbb{E}_{(s, a) \sim \mathcal{D}_\mu} \left[f_\phi(s, a)\right])},
\end{equation}
with probability at least $1-\delta/2$, where $f_\phi(s, a) = TV(P_{\phi*}(\cdot|s, a), P_\phi(\cdot|s, a))^2$. For any $P_\phi$ and corresponding $P_i$, we have:
\begin{equation}
\begin{aligned}
    |f_\phi(s, a) - f_i(s, a)| &=|TV(P_\phi(\cdot|s,a), P_{\phi^*}(\cdot|s,a))^2 - TV(P_i(\cdot|s,a), P_{\phi^*}(\cdot|s,a))^2| \\
    & \leq 2 |TV(P_\phi(\cdot|s,a), P_{\phi^*}(\cdot|s,a)) - TV(P_i(\cdot|s,a), P_{\phi^*}(\cdot|s,a))|\\
    & = \left|\sum_x \left(|P_\phi(x|s,a) - P_{\phi^*}(x|s,a)| - |P_i(x|s,a) - P_{\phi^*}(x|s,a)|\right)\right| \\
    & \leq \sum_x \left||P_\phi(x|s,a) - P_{\phi^*}(x|s,a)| - |P_i(x|s,a) - P_{\phi^*}(x|s,a)|\right| \\
    & \leq \sum_x \left|P_\phi(x|s,a) -P_i(x|s,a)\right| \leq 1/N
\end{aligned}
\end{equation}
Here, we apply the difference of squares identity for the first inequality and the triangle inequality for the third inequality. Based on the formula above, we obtain:
\begin{equation}
\begin{aligned}
    &\quad\ \mathbb{E}_{(s, a) \sim d_{\mu, \phi^*}(\cdot)} \left[f_\phi(s, a)\right] - \mathbb{E}_{(s, a) \sim \mathcal{D}_\mu} \left[f_\phi(s, a)\right] \\
    &\leq \mathbb{E}_{(s, a) \sim d_{\mu, \phi^*}(\cdot)} \left[f_i(s, a) + |f_\phi(s, a) - f_i(s, a)|\right] - \mathbb{E}_{(s, a) \sim \mathcal{D}_\mu} \left[f_i(s, a) - |f_i(s, a) - f_\phi(s, a)|\right]\\
    & \leq \mathbb{E}_{(s, a) \sim d_{\mu, \phi^*}(\cdot)} \left[f_i(s, a)\right] - \mathbb{E}_{(s, a) \sim \mathcal{D}_\mu} \left[f_i(s, a)\right] + 2/N
\end{aligned}
\end{equation}
Applying the Bernstein's Concentration Inequality along with the Union Bound Inequality over $\{P_1, \cdots, P_{N_c}\}$, we have:
\begin{equation} 
\begin{aligned}
    \mathbb{E}_{(s, a) \sim d_{\mu, \phi^*}(\cdot)} \left[f_\phi(s, a)\right] - \mathbb{E}_{(s, a) \sim \mathcal{D}_\mu} \left[f_\phi(s, a)\right] & \leq c \left( \sqrt{\frac{\log (2N_c/\delta)}{N}} + \frac{\log (2N_c/\delta)}{N}\right) + \frac{2}{N} \\
    & \leq c'\left( \sqrt{\frac{\log (2N_c/\delta)}{N}} + \frac{\log (2N_c/\delta)}{N}\right),
\end{aligned}
\end{equation}
with probability at least $1-\delta/2$. Thus, we arrive at the conclusion:
 \begin{equation}
    J(\theta^*, \phi^*) - J(\hat{\theta}, \phi^*) \leq \frac{\sqrt{C_{\phi^*, \theta^*}}}{(1-\gamma)^2}\sqrt{4\epsilon + c'\left( \sqrt{\frac{\log (2N_c/\delta)}{N}} + \frac{\log (2N_c/\delta)}{N}\right)},
\end{equation}
with probability at least $1-\delta$, and $N_c \leq {KN+K-1 \choose K-1}^{|\mathcal{S}||\mathcal{A}|}$.
\end{proof}

For MDPs with high-dimensional continuous state and action spaces, it is common practice to learn deep neural network-based world models to ensure that the function space \( \mathcal{M} \) adequately represents the true environment MDP and its maximum likelihood estimator \cite{DBLP:journals/corr/abs-2112-14523}. An upper bound on \( |\Phi| \) can be similarly derived using the approach outlined above, leveraging bounds on the covering number for deep neural networks \cite{DBLP:conf/icml/Shen24}. However, as this falls outside the primary focus of this paper -- developing a practical algorithm for robust offline MBRL -- we leave it as an important direction for future work.

\section{Derivation of Equation \eqref{manner3}} \label{d_manner3}

The derivation process is similar with the one in Appendix \ref{TD}. By substituting $(x_1, x_2)$ and $f_1(x_1, x_2)$ in Eq. \eqref{td_0} with $\left(\pi, \icol{\phi\\\lambda}\right)$ and $-J\left(\theta, \icol{\phi\\\lambda}\right)=-J(\theta, \phi)$ respectively, we obtain the total derivative of $J(\theta, \phi)$ with respect to $\theta$ as follows:
\begin{equation}
\begin{aligned}
    &\quad\ \ DJ\left(\theta, \icol{\phi\\\lambda}^*(\theta)\right) = \nabla_{\theta} J\left(\theta, \icol{\phi\\\lambda}^*\right) + \icol{d\phi^*/d\theta \\ d\lambda^*/d\theta}^T\nabla_{\icol{\phi\\\lambda}} J\left(\theta, \icol{\phi\\\lambda}^*\right) \\
    &\Rightarrow DJ\left(\theta, \phi^*(\theta)\right) = \nabla_{\theta} J\left(\theta, \phi^*\right) + \icol{d\phi^*/d\theta \\ d\lambda^*/d\theta}^T \icol{\nabla_{\phi} J\left(\theta, \phi^*\right) \\ 0}
\end{aligned}
\end{equation}
Here, $\icol{\phi\\\lambda}^*$ are the optimal primal and dual variables of the constrained problem $\min_{\phi \in \Phi} J(\theta, \phi)$. According to the first-order necessary condition for constrained optimization, we have:
\begin{equation}
    \nabla_\phi \mathcal{L}(\theta, \phi^*, \lambda^*) = 0,\ \lambda^* \nabla_\lambda\mathcal{L}(\theta, \phi^*, \lambda^*) = 0.
\end{equation}
Here, $\mathcal{L}(\theta, \phi, \lambda) = J(\theta, \phi) + \lambda\mathbb{E}_{(s, a) \sim \mathcal{D}_\mu}\left[KL(P_{\bar{\phi}}(\cdot|s, a)||P_{\phi}(\cdot|s, a))\right]$ is the Lagrangian and $\nabla_\lambda\mathcal{L}(\theta, \phi, \lambda) = \mathbb{E}_{(s, a) \sim \mathcal{D}_\mu}\left[KL(P_{\bar{\phi}}(\cdot|s, a)||P_{\phi}(\cdot|s, a))\right]$, so the second condition above corresponds to the complementary slackness. Then, we have:
\begin{equation}
\begin{aligned}
    &\frac{d}{d\theta} \left(\nabla_\phi \mathcal{L}(\theta, \phi^*, \lambda^*)\right) = \nabla^2_{\phi\theta} \mathcal{L}(\theta, \phi^*, \lambda^*) + \nabla^2_{\phi} \mathcal{L}(\theta, \phi^*, \lambda^*) \frac{d\phi^*}{d\theta} + \nabla^2_{\phi\lambda}\mathcal{L}(\theta, \phi^*, \lambda^*)\frac{d\lambda^*}{d\theta} = 0,\\
    & \frac{d}{d\theta} \left(\lambda^*\nabla_\lambda \mathcal{L}(\theta, \phi^*, \lambda^*)\right) = \nabla_\lambda \mathcal{L}(\theta, \phi^*, \lambda^*)\frac{d\lambda^*}{d\theta} + \lambda^*\frac{d}{d\theta}\left(\nabla_\lambda \mathcal{L}(\theta, \phi^*, \lambda^*)\right)\\
    &\qquad\qquad\qquad\qquad\ \ \ \ \ \ = \nabla_\lambda \mathcal{L}(\theta, \phi^*, \lambda^*)\frac{d\lambda^*}{d\theta} + \lambda^* \nabla^2_{\lambda\theta}\mathcal{L}(\theta, \phi^*, \lambda^*) + \\
    &\qquad\qquad\qquad\qquad\quad\quad\quad \lambda^*\nabla^2_{\lambda\phi}\mathcal{L}(\theta, \phi^*, \lambda^*) \frac{d\phi^*}{d\theta} + \lambda^*\nabla^2_{\lambda}\mathcal{L}(\theta, \phi^*, \lambda^*) \frac{d\lambda^*}{d\theta} = 0
\end{aligned}
\end{equation}
This implies that $M_1 \icol{d\phi^*/d\theta \\ d\lambda^*/d\theta} = -M_2$, where $M_2 = \icol{\nabla^2_{\phi\theta}\mathcal{L}(\theta, \phi^*, \lambda^*) \\ \lambda^* \nabla^2_{\lambda\theta}\mathcal{L}(\theta, \phi^*, \lambda^*)} = \icol{\nabla^2_{\phi\theta}\mathcal{L}(\theta, \phi^*, \lambda^*) \\ 0}$ and $M_1$ is defined as follows:
\begin{equation}
\begin{aligned}
    M_1 &= \begin{bmatrix}
            \nabla^2_{\phi} \mathcal{L}(\theta, \phi^*, \lambda^*) & \nabla^2_{\phi\lambda}\mathcal{L}(\theta, \phi^*, \lambda^*) \\ \lambda^*\nabla^2_{\lambda\phi}\mathcal{L}(\theta, \phi^*, \lambda^*) & \nabla_\lambda \mathcal{L}(\theta, \phi^*, \lambda^*)+\lambda^*\nabla^2_{\lambda}\mathcal{L}(\theta, \phi^*, \lambda^*)
          \end{bmatrix} \\
          &= \begin{bmatrix}
            \nabla^2_{\phi} \mathcal{L}(\theta, \phi^*, \lambda^*) & \nabla^2_{\phi\lambda}\mathcal{L}(\theta, \phi^*, \lambda^*) \\ \lambda^*\nabla^2_{\lambda\phi}\mathcal{L}(\theta, \phi^*, \lambda^*) & \nabla_\lambda \mathcal{L}(\theta, \phi^*, \lambda^*) 
          \end{bmatrix} = \begin{bmatrix}
            A & B \\
            \lambda^* B^T & C
          \end{bmatrix}
\end{aligned}   
\end{equation}
Assuming $A$ and $M_1$ are invertible, we have $\icol{d\phi^*/d\theta \\ d\lambda^*/d\theta} = -M_1^{-1}M_2$ and further:
\begin{equation}
    DJ\left(\theta, \phi^*(\theta)\right) = \nabla_{\theta} J\left(\theta, \phi^*\right) - M_2^T(M_1^T)^{-1} \icol{\nabla_{\phi} J\left(\theta, \phi^*\right) \\ 0}
\end{equation}
Applying the Schur complement of the block $A$ of the matrix $M_1^T$, i.e., $S = C - \lambda^*B^TA^{-1}B$, we can obtain: 
\begin{equation}
\begin{aligned}
    &\quad\ \ (M_1^T)^{-1} = \begin{bmatrix}
                            A^{-1} + \lambda^* A^{-1}BS^{-1}B^TA^{-1} & -\lambda^* A^{-1}BS^{-1} \\
                            -S^{-1}B^TA^{-1} & S^{-1}
                         \end{bmatrix} \\
    & \Rightarrow DJ\left(\theta, \phi^*(\theta)\right) = \nabla_{\theta} J\left(\theta, \phi^*\right) - \nabla^2_{\phi\theta}\mathcal{L}(\theta, \phi^*, \lambda^*)^T H(\theta, \phi^*, \lambda^*) \nabla_{\phi} J\left(\theta, \phi^*\right)
\end{aligned}        
\end{equation}
As in Appendix \ref{TD} and \cite{DBLP:conf/icml/FiezCR20}, by relacing $(\theta, \phi^*, \lambda^*)$ with $(\theta^k, \phi^k, \lambda^k)$, we obtain the gradient update for $\theta$ at iteration $k$, corresponding to the first line of Eq. (\ref{manner3}).

\section{Proof of Theorem \ref{derivatives}} \label{p_derivatives}

We begin the proof with the definition of the expected return:
\begin{equation}
\begin{aligned}
    J(\theta, \phi) & = \mathbb{E}_{\tau \sim P(\cdot;\theta, \phi)}\left[\sum_{j=0}^{h-1} \gamma^j r_j \right] = \sum_{j=0}^{h-1} \mathbb{E}_{\tau \sim P(\cdot;\theta, \phi)}\left[\gamma^j r_j \right] \\
    & = \sum_{j=0}^{h-1} \mathbb{E}_{\tau_j \sim P(\cdot;\theta, \phi)}\left[\gamma^j r_j \right] = \sum_{j=0}^{h-1} \int_{T_j} \gamma^j r_j P(\tau_j; \theta, \phi) d \tau_j,
\end{aligned}
\end{equation}
where $\tau_j = (s_0, a_0, r_0, \cdots, s_{j+1})$ and the third equality holds because the transitions after step $j$ do not affect the value of $r_j$ and can therefore be marginalized out. Based on this definition, we have:
\begin{equation}
\begin{aligned}
    & \nabla_\theta J(\theta, \phi) = \sum_{j=0}^{h-1} \int_{T_j} \gamma^j r_j \nabla_\theta P(\tau_j; \theta, \phi) d \tau_j = \sum_{j=0}^{h-1} \int_{T_j} \gamma^j r_j P(\tau_j; \theta, \phi) \nabla_\theta \log P(\tau_j; \theta, \phi) d \tau_j \\
    & = \sum_{j=0}^{h-1} \mathbb{E}_{\tau_j \sim P(\cdot; \theta, \phi)} \left[\gamma^j r_j \nabla_\theta \log P(\tau_j; \theta, \phi)\right] = \mathbb{E}_{\tau \sim P(\cdot; \theta, \phi)} \left[\sum_{j=0}^{h-1} \gamma^j r_j \nabla_\theta \log P(\tau_j; \theta, \phi)\right] \\
    & = \mathbb{E}_{\tau \sim P(\cdot; \theta, \phi)} \left[\sum_{j=0}^{h-1} \gamma^j r_j \left(\sum_{i=0}^j \nabla_\theta \log \pi_\theta(a_i|s_i)\right)\right] = \mathbb{E}_{\tau \sim P(\cdot; \theta, \phi)} \left[\sum_{j=0}^{h-1}\sum_{i=0}^j \gamma^j r_j  \nabla_\theta \log \pi_\theta(a_i|s_i)\right]\\
    & = \mathbb{E}_{\tau \sim P(\cdot; \theta, \phi)} \left[\sum_{i=0}^{h-1} \sum_{j=i}^{h-1} \gamma^j r_j \nabla_\theta \log \pi_\theta(a_i|s_i)\right] = \mathbb{E}_{\tau \sim P(\cdot; \theta, \phi)} \left[\nabla_\theta \Psi(\tau, \theta)\right]
\end{aligned}
\end{equation}
where the second to last equality follows from exchanging the order of summation over the two coordinates. Similarly, we can obtain:
\begin{equation}
    \begin{aligned}
        & \nabla_\phi J(\theta, \phi) = \sum_{j=0}^{h-1} \int_{T_j} \gamma^j r_j \nabla_\phi P(\tau_j; \theta, \phi) d \tau_j = \sum_{j=0}^{h-1} \int_{T_j} \gamma^j r_j P(\tau_j; \theta, \phi) \nabla_\phi \log P(\tau_j; \theta, \phi) d \tau_j \\
        & = \sum_{j=0}^{h-1} \mathbb{E}_{\tau_j \sim P(\cdot; \theta, \phi)} \left[\gamma^j r_j \nabla_\phi \log P(\tau_j; \theta, \phi)\right] = \mathbb{E}_{\tau \sim P(\cdot; \theta, \phi)} \left[\sum_{j=0}^{h-1} \gamma^j r_j \nabla_\phi \log P(\tau_j; \theta, \phi)\right] \\
        & = \mathbb{E}_{\tau \sim P(\cdot; \theta, \phi)} \left[\sum_{j=0}^{h-1}\sum_{i=0}^j \gamma^j r_j \nabla_\phi \log P_\phi(r_i, s_{i+1}|s_i, a_i)\right] \\
        & = \mathbb{E}_{\tau \sim P(\cdot; \theta, \phi)} \left[\sum_{i=0}^{h-1}\sum_{j=i}^{h-1} \gamma^j r_j \nabla_\phi \log P_\phi(r_i, s_{i+1}|s_i, a_i)\right] = \mathbb{E}_{\tau \sim P(\cdot; \theta, \phi)} \left[\nabla_\phi \Psi(\tau, \phi)\right]
    \end{aligned}
\end{equation}
Based on the definition of $\nabla_\phi J(\theta, \phi)$, we can get the second-order derivatives as follows:
\begin{equation}
    \begin{aligned}
        \nabla^2_\phi J(\theta, \phi) & = \nabla_\phi \mathbb{E}_{\tau \sim P(\cdot; \theta, \phi)} \left[\nabla_\phi \Psi(\tau, \phi)\right] =  \nabla_\phi \int_T \nabla_\phi \Psi(\tau, \phi) P(\tau;\theta, \phi) d \tau \\
        & = \int_T \nabla^2_\phi \Psi(\tau, \phi) P(\tau;\theta, \phi) + \nabla_\phi \Psi(\tau, \phi) \nabla_\phi P(\tau;\theta, \phi)^T d \tau \\
        & = \int_T \nabla^2_\phi \Psi(\tau, \phi) P(\tau;\theta, \phi) + \nabla_\phi \Psi(\tau, \phi) \nabla_\phi \log P(\tau;\theta, \phi)^T P(\tau;\theta, \phi) d \tau \\
        & = \mathbb{E}_{\tau \sim P(\cdot; \theta, \phi)} \left[\nabla^2_\phi \Psi(\tau, \phi) + \nabla_\phi \Psi(\tau, \phi) \nabla_\phi \log P(\tau;\theta, \phi)^T\right]
    \end{aligned}
\end{equation}
\begin{equation}
    \begin{aligned}
        \nabla^2_{\phi\theta} J(\theta, \phi) & = \nabla_\theta \mathbb{E}_{\tau \sim P(\cdot; \theta, \phi)} \left[\nabla_\phi \Psi(\tau, \phi)\right] =  \nabla_\theta \int_T \nabla_\phi \Psi(\tau, \phi) P(\tau;\theta, \phi) d \tau \\
        & = \int_T \nabla^2_{\phi\theta} \Psi(\tau, \phi) P(\tau;\theta, \phi) + \nabla_\phi \Psi(\tau, \phi) \nabla_\theta P(\tau;\theta, \phi)^T d \tau \\
        & = \int_T \nabla^2_{\phi\theta} \Psi(\tau, \phi) P(\tau;\theta, \phi) + \nabla_\phi \Psi(\tau, \phi) \nabla_\theta \log P(\tau;\theta, \phi)^T P(\tau;\theta, \phi) d \tau \\
        & = \mathbb{E}_{\tau \sim P(\cdot; \theta, \phi)} \left[\nabla^2_{\phi\theta} \Psi(\tau, \phi) + \nabla_\phi \Psi(\tau, \phi) \nabla_\theta \log P(\tau;\theta, \phi)^T\right] \\
       & = \mathbb{E}_{\tau \sim P(\cdot; \theta, \phi)} \left[\nabla_\phi \Psi(\tau, \phi) \nabla_\theta \log P(\tau;\theta, \phi)^T\right]
    \end{aligned}
\end{equation}
Here, the last equality holds because $\nabla_\phi \Psi(\tau, \phi)$ is not a function of $\theta$.

Finally, based on Theorem \ref{derivatives} and the definition of $\mathcal{L}(\theta, \phi, \lambda)$ (in Section \ref{method}), we can get the following derivatives, which are used in Eqs. (\ref{manner1}) - (\ref{manner3}):
\begin{equation} \label{l_dev}
    \begin{aligned}
        \nabla_\phi \mathcal{L}(\theta, \phi, \lambda) = \nabla_\phi J(\theta, \phi) - \lambda \mathbb{E}_{(s, a, r, s') \sim P_{\bar{\phi}}\circ\mathcal{D}_\mu(\cdot)} \left[\nabla_\phi \log P_\phi(r, s'|s, a)\right],\qquad\qquad\\
         \nabla^2_{\phi} \mathcal{L}(\theta, \phi, \lambda) = \nabla^2_{\phi} J(\theta, \phi) - \lambda \mathbb{E}_{(s, a, r, s') \sim P_{\bar{\phi}}\circ\mathcal{D}_\mu(\cdot)} \left[\nabla_\phi^2 \log P_\phi(r, s'|s, a)\right], \qquad\qquad \\
         \nabla^2_{\phi\theta} \mathcal{L}(\theta, \phi, \lambda) = \nabla^2_{\phi\theta} J(\theta, \phi), \nabla_{\lambda} \mathcal{L}(\theta, \phi, \lambda) = \mathbb{E}_{(s, a) \sim \mathcal{D}_\mu}\left[KL(P_{\bar{\phi}}(\cdot|s, a) || P_{\phi}(\cdot|s, a)) - \epsilon \right],\\
         \nabla_{\phi\lambda}^2 \mathcal{L}(\theta, \phi, \lambda) = -\mathbb{E}_{(s, a, r, s') \sim P_{\bar{\phi}}\circ\mathcal{D}_\mu(\cdot)} \left[\nabla_\phi \log P_\phi(r, s'|s, a)\right]. \qquad\qquad\qquad
    \end{aligned}
\end{equation}

\section{Approximation Errors for the Second-Order Derivatives} \label{app_err}

\textbf{Firstly, we analyze the approximation error of $\mathbb{E}_{\tau \sim P(\cdot; \theta, \phi)} \left[\nabla^2_\phi \Psi(\tau, \phi)\right]$:}
\begin{equation}
    \begin{aligned}
        &\quad\ \mathbb{E}_{\tau \sim P(\cdot; \theta, \phi)} \left[\nabla^2_\phi \Psi(\tau, \phi)\right] \\ 
        &= \mathbb{E}_{\tau \sim P(\cdot; \theta, \phi)} \left[\sum_{i=0}^{h-1}\left(\sum_{j=i}^{h-1} \gamma^j r_j\right) \nabla^2_\phi\log P_\phi(r_i, s_{i+1}|s_i, a_i)\right] \\
        &= \mathbb{E}_{\tau \sim P(\cdot; \theta, \phi)} \left[\sum_{i=0}^{h-1}\left(\sum_{j=i}^{h-1} \gamma^j r_j\right) \nabla_\phi\left(\frac{\nabla_\phi P_\phi(r_i, s_{i+1}|s_i, a_i)}{P_\phi(r_i, s_{i+1}|s_i, a_i)}\right)\right] \\
        &=\mathbb{E}_{\tau \sim P(\cdot; \theta, \phi)} \left[\sum_{i=0}^{h-1}\left(\sum_{j=i}^{h-1} \gamma^j r_j\right) \left(-\frac{\nabla_\phi P_\phi(r_i, s_{i+1}|s_i, a_i) \nabla_\phi P_\phi(r_i, s_{i+1}|s_i, a_i)^T}{P_\phi(r_i, s_{i+1}|s_i, a_i)^2}\right)\right] + \\
        &\quad\  \mathbb{E}_{\tau \sim P(\cdot; \theta, \phi)} \left[\sum_{i=0}^{h-1}\left(\sum_{j=i}^{h-1} \gamma^j r_j\right) \frac{\nabla^2_\phi P_\phi(r_i, s_{i+1}|s_i, a_i)}{P_\phi(r_i, s_{i+1}|s_i, a_i)}\right] \\
        &=\mathbb{E}_{\tau \sim P(\cdot; \theta, \phi)} \left[\sum_{i=0}^{h-1}\left(\sum_{j=i}^{h-1} \gamma^j r_j\right) \left(-F(s_i, a_i, r_i, s_{i+1};\phi) + \frac{\nabla^2_\phi P_\phi(r_i, s_{i+1}|s_i, a_i)}{P_\phi(r_i, s_{i+1}|s_i, a_i)}\right)\right]
    \end{aligned}
\end{equation}
Thus, the approximation error that arise when using Eq. \eqref{fim} is:
\begin{equation}
    \begin{aligned}
        &\quad\ \mathbb{E}_{\tau \sim P(\cdot; \theta, \phi)} \left[\sum_{i=0}^{h-1}\left(\sum_{j=i}^{h-1} \gamma^j r_j\right)  \frac{\nabla^2_\phi P_\phi(r_i, s_{i+1}|s_i, a_i)}{P_\phi(r_i, s_{i+1}|s_i, a_i)}\right] \\
        & = \int_T P(\tau; \theta, \phi) \sum_{i=0}^{h-1}\left(\sum_{j=i}^{h-1} \gamma^j r_j\right)  \frac{\nabla^2_\phi P_\phi(r_i, s_{i+1}|s_i, a_i)}{P_\phi(r_i, s_{i+1}|s_i, a_i)} d \tau \\
        & = \sum_{i=0}^{h-1} \int_T P(\tau; \theta, \phi) \left(\sum_{j=i}^{h-1} \gamma^j r_j\right)  \frac{\nabla^2_\phi P_\phi(r_i, s_{i+1}|s_i, a_i)}{P_\phi(r_i, s_{i+1}|s_i, a_i)} d \tau 
    \end{aligned}
\end{equation}
\begin{equation}
    \begin{aligned}
        &\quad\ \sum_{i=0}^{h-1} \int_T P(\tau; \theta, \phi) \left(\sum_{j=i+1}^{h-1} \gamma^j r_j\right)  \frac{\nabla^2_\phi P_\phi(r_i, s_{i+1}|s_i, a_i)}{P_\phi(r_i, s_{i+1}|s_i, a_i)} d \tau \\
        & = \sum_{i=0}^{h-1} \int_{T_{i-1} \times \mathcal{A}} P(\tau_{i-1}, a_i; \theta, \phi) d \tau_{i-1} d a_i \int_{\mathcal{S} \times [0, 1]} \nabla^2_\phi P_\phi(r_i, s_{i+1}|s_i, a_i) d r_i d s_{i+1} R_i \\
        & = \sum_{i=0}^{h-1} \int_{T_{i-1} \times \mathcal{A}} P(\tau_{i-1}, a_i; \theta, \phi) d \tau_{i-1} d a_i \nabla^2_\phi \left(\int_{\mathcal{S} \times [0, 1]}  P_\phi(r_i, s_{i+1}|s_i, a_i) d r_i d s_{i+1}\right) R_i = 0
    \end{aligned}
\end{equation}
Here, $R_i = \int_{T^{i}}P(\tau^i|\tau_i; \theta, \phi)\left(\sum_{j=i+1}^{h-1} \gamma^j r_j\right) d \tau^{i}$ and $\tau^{i} = (a_{i+1}, r_{i+1}, \cdots, s_h)$ represents the trajectory segment following $\tau_i$. Combining the two equations above, we have the approximation error as follows:
\begin{equation}
    \begin{aligned}
        &\quad\ \mathbb{E}_{\tau \sim P(\cdot; \theta, \phi)} \left[\sum_{i=0}^{h-1}\left(\sum_{j=i}^{h-1} \gamma^j r_j\right)  \frac{\nabla^2_\phi P_\phi(r_i, s_{i+1}|s_i, a_i)}{P_\phi(r_i, s_{i+1}|s_i, a_i)}\right] 
        \\ 
        & = \sum_{i=0}^{h-1} \int_T P(\tau; \theta, \phi) \gamma^i r_i  \frac{\nabla^2_\phi P_\phi(r_i, s_{i+1}|s_i, a_i)}{P_\phi(r_i, s_{i+1}|s_i, a_i)} d \tau \\
        & = \sum_{i=0}^{h-1} \int_{T_{i-1} \times \mathcal{A}} P(\tau_{i-1}, a_i; \theta, \phi) d \tau_{i-1} d a_i \nabla^2_\phi \left(\int_{\mathcal{S} \times [0, 1]}  \gamma^i r_i P_\phi(r_i, s_{i+1}|s_i, a_i) d r_i d s_{i+1}\right) R'_i\\
        & = \sum_{i=0}^{h-1} \nabla^2_\phi \left(\int_{\mathcal{S} \times [0, 1]}  \gamma^i r_i P_\phi(r_i, s_{i+1}|s_i, a_i) d r_i d s_{i+1}\right),
    \end{aligned}
\end{equation}
where $R'_i = \int_{T^{i}}P(\tau^i|\tau_i; \theta, \phi) d \tau^{i} = 1$. Thus, in environments with sparse rewards (where $r(s, a)=0$ for most $(s, a)$), the approximation error of $\mathbb{E}_{\tau \sim P(\cdot; \theta, \phi)} \left[\nabla^2_\phi \Psi(\tau, \phi)\right]$ can be negligible.

\textbf{Secondly, we analyze the approximation error of $\mathbb{E}_{(s, a, r, s') \sim P_{\bar{\phi}}\circ\mathcal{D}_\mu(\cdot)} \left[\nabla^2_\phi \log P_\phi(r, s'|s, a)\right]$:}
\begin{equation}
    \begin{aligned}
        &\quad\  \mathbb{E}_{(s, a, r, s') \sim P_{\bar{\phi}}\circ\mathcal{D}_\mu(\cdot)} \left[\nabla^2_\phi \log P_\phi(r, s'|s, a)\right]\\
        & = \mathbb{E}_{(s, a, r, s') \sim P_{\bar{\phi}}\circ\mathcal{D}_\mu(\cdot)} \left[\nabla_\phi\left(\frac{\nabla_\phi P_\phi(r, s'|s, a)}{P_\phi(r, s'|s, a)}\right)\right] \\
        & = \mathbb{E}_{(s, a, r, s') \sim P_{\bar{\phi}}\circ\mathcal{D}_\mu(\cdot)} \left[-\frac{\nabla_\phi P_\phi(r, s'|s, a) \nabla_\phi P_\phi(r, s'|s, a)^T}{P_\phi(r, s'|s, a)^2} + \frac{\nabla_\phi^2 P_\phi(r, s'|s, a)}{P_\phi(r, s'|s, a)}\right] \\
        & = \mathbb{E}_{(s, a, r, s') \sim P_{\bar{\phi}}\circ\mathcal{D}_\mu(\cdot)} \left[-F(s, a, r, s'; \phi) + \frac{\nabla_\phi^2 P_\phi(r, s'|s, a)}{P_\phi(r, s'|s, a)}\right]
    \end{aligned}
\end{equation}
Thus, the approximation error is:
\begin{equation}
\begin{aligned}
    &\quad\ \mathbb{E}_{(s, a, r, s') \sim P_{\bar{\phi}}\circ\mathcal{D}_\mu(\cdot)} \left[\frac{\nabla_\phi^2 P_\phi(r, s'|s, a)}{P_\phi(r, s'|s, a)}\right] \\
    &= \mathbb{E}_{(s, a) \sim \mathcal{D}_\mu(\cdot)} \left[\int_{\mathcal{S} \times [0, 1]}P_{\bar{\phi}}(r, s'|s, a)\frac{\nabla_\phi^2 P_\phi(r, s'|s, a)}{P_\phi(r, s'|s, a)}drds'\right],
\end{aligned}
\end{equation}
which equals 0 if $P_{\bar{\phi}} = P_\phi$. This holds approximately since \( P_\phi \) is constrained to remain close to \( P_{\bar{\phi}} \).

\section{Analysis of Time Complexity} \label{ATC}

\textbf{First, we justify the sample-based estimator of \( A = \nabla^2_{\phi} \mathcal{L}(\theta, \phi, \lambda) \), as given in Eq. \eqref{A_hat}.} According to Eqs. \eqref{j_dev}, \eqref{l_dev}, and \eqref{fim}, $\mathcal{L}(\theta, \phi, \lambda)$ can be approximated as follows:
\begin{equation} \label{LA}
\begin{aligned}
    \nabla^2_{\phi} \mathcal{L}(\theta, \phi, \lambda) =& \mathbb{E}_{\tau \sim P(\cdot;\theta, \phi)} \left[\nabla_\phi \Psi(\tau, \phi) \nabla_\phi \log P(\tau; \theta, \phi)^T + \nabla_\phi^2 \Psi(\tau, \phi)\right]  \\
    &- \lambda \mathbb{E}_{(s, a, r, s') \sim P_{\bar{\phi}}\circ\mathcal{D}_\mu(\cdot)} \left[\nabla_\phi^2 \log P_\phi(r, s'|s, a)\right] \\
    \approx & \mathbb{E}_{\tau \sim P(\cdot;\theta, \phi)} \left[\nabla_\phi \Psi(\tau, \phi) \nabla_\phi \log P(\tau; \theta, \phi)^T - \sum_{i=0}^{h-1}\left(\sum_{j=i}^{h-1} \gamma^j r_j\right) F(s_i, a_i, r_i, s_{i+1};\phi)\right]  \\
    &+ \lambda \mathbb{E}_{(s, a, r, s') \sim P_{\bar{\phi}}\circ\mathcal{D}_\mu(\cdot)} \left[F(s, a, r, s';\phi)\right]
\end{aligned}   
\end{equation}
where $F(s, a, r, s';\phi) = \nabla_\phi \log P_\phi(r, s'|s, a) \nabla_\phi \log P_\phi(r, s'|s, a)^T$. Then, we can apply unbiased estimators for each of the three terms in Eq. \eqref{LA}:
\begin{equation}
\begin{aligned}
    &\quad\ \mathbb{E}_{\tau \sim P(\cdot;\theta, \phi)} \left[\nabla_\phi \Psi(\tau, \phi) \nabla_\phi \log P(\tau; \theta, \phi)^T\right] \\
    &\approx \frac{1}{m} \sum_{i=1}^m \nabla_\phi \Psi(\tau(i), \phi) \nabla_\phi \log P(\tau(i); \theta, \phi)^T = UV^T, \\
    &\quad\ \lambda \mathbb{E}_{(s, a, r, s') \sim P_{\bar{\phi}}\circ\mathcal{D}_\mu(\cdot)} \left[F(s, a, r, s';\phi)\right] \\
    & \approx \frac{\lambda}{M} \sum_{i=1}^M \nabla_\phi \log P_\phi(r_i, s'_i|s_i, a_i) \nabla_\phi \log P_\phi(r_i, s'_i|s_i, a_i)^T = ZZ^T,\\
    & \quad\ \mathbb{E}_{\tau \sim P(\cdot;\theta, \phi)} \left[\sum_{i=0}^{h-1}\left(\sum_{j=i}^{h-1} \gamma^j r_j\right) F(s_i, a_i, r_i, s_{i+1};\phi)\right] \\
    & = \sum_{t=0}^{h-1}\mathbb{E}_{\tau \sim P(\cdot;\theta, \phi)} \left[\left(\sum_{j=t}^{h-1} \gamma^j r_j\right) F(s_t, a_t, r_t, s_{t+1};\phi)\right] \\
    & = h \mathbb{E}_{t \sim \text{Uniform}(0, h-1), \tau \sim P(\cdot;\theta, \phi)} \left[\left(\sum_{j=t}^{h-1} \gamma^j r_j\right) F(s_t, a_t, r_t, s_{t+1};\phi)\right]  \approx XY^T.
\end{aligned}
\end{equation}
Thus, \textbf{$\hat{A} = UV^T -XY^T + ZZ^T$ is an unbiased estimator of the Fisher-information-matrix-based approximation of $\nabla^2_{\phi} \mathcal{L}(\theta, \phi, \lambda)$}, where $U, V \in \mathbb{R}^{N_\phi \times m}$; $X, Y, Z \in \mathbb{R}^{N_\phi \times M}$; $m$ and $M$ represent the number of sampled trajectories and sampled transitions, respectively. Please check Section \ref{rombrl} for the definitions of $U, V, X, Y, Z$. Similarly, we can obtain the unbiased estimator of $\nabla^2_{\phi\theta} \mathcal{L}(\theta, \phi, \lambda)$:  
\begin{equation}
\begin{aligned}
    \nabla^2_{\phi\theta} \mathcal{L}(\theta, \phi, \lambda) &= \nabla^2_{\phi\theta} J(\theta, \phi) = \mathbb{E}_{\tau \sim P(\cdot;\theta, \phi)} \left[\nabla_\phi \Psi(\tau, \phi) \nabla_\theta \log P(\tau; \theta, \phi)^T\right] \\
    & \approx \frac{1}{m} \sum_{i=1}^m \nabla_\phi \Psi(\tau(i), \phi) \nabla_\theta \log P(\tau(i); \theta, \phi)^T = UW^T,
\end{aligned}
\end{equation}
where $W \in \mathbb{R}^{N_\theta \times m}$ and the $i$-th column of $W$ is $\nabla_\theta \log P(\tau(i); \theta, \phi)$.

\textbf{Next, we analyze the time complexity of estimating the total derivative in Eq. \eqref{manner3}, i.e., $\nabla_\theta J(\theta, \phi) - \nabla^2_{\phi\theta} \mathcal{L}(\theta, \phi, \lambda)^TH(\theta, \phi, \lambda)\nabla_\phi J(\theta, \phi)$ without applying the Woodbury matrix identity to invert $\hat{A}$.} We note that (1) $H(\theta, \phi, \lambda) = A^{-1} + \lambda A^{-1}BS^{-1}B^TA^{-1}$, $S = C - \lambda B^TA^{-1}B$, $B=\nabla^2_{\phi\lambda}\mathcal{L}(\theta, \phi, \lambda)$, $C = \nabla_\lambda \mathcal{L}(\theta, \phi, \lambda)$; (2) sample-based (unbiased) estimators $\hat{A}, \hat{B}, \hat{C}, \nabla_\theta\hat{J}(\theta, \phi), \nabla_\phi \hat{J}(\theta, \phi)$ are used in place of the corresponding expectation terms; (3) we analyze the time complexity after getting $\hat{B}, \hat{C}, \nabla_\theta\hat{J}(\theta, \phi), \nabla_\phi \hat{J}(\theta, \phi), U, V, W, X, Y, Z$, since computing these quantities is unavoidable and the computational cost of these terms only scales linearly with $N_\theta$ and $N_\phi$; (4) As a common practice, $\hat{A}$ is computed as $UV^T -XY^T + ZZ^T + cI$ to ensure its invertibility.

The time complexity of multiplying a \( p \times k \) matrix with a \( k \times q \) matrix is \( \mathcal{O}(pkq) \), while the complexity of inverting a \( p \times p \) matrix is \( \mathcal{O}(p^\omega) \), where \( \omega \) ranges from 2 to approximately 2.373. \textbf{Denote $TC(\cdot)$ as the time complexity of computing a certain term based on existing terms.} Then, $TC(\hat{A}) = \mathcal{O}(mN_\phi^2 + MN_\phi^2)$, $TC(\hat{A}^{-1}) = \mathcal{O}(N_\phi^\omega)$, $TC(\hat{S}) = \mathcal{O}(N_\phi^2)$, $TC(\hat{S}^{-1}) = \mathcal{O}(1)$. With $\hat{A}^{-1}, \hat{B}, \hat{S}^{-1}, \nabla_\phi \hat{J}(\theta, \phi)$, we can compute $M_1 = \hat{H}(\theta, \phi, \lambda) \nabla_\phi \hat{J}(\theta, \phi)$ by recursively performing matrix multiplications from right to left. Thus, we have $TC(M_1) = \mathcal{O}(N_\phi^2)$, and similarly, $TC(WU^TM_1) = \mathcal{O}(mN_\phi + mN_\theta)$. \textbf{To sum up, the time complexity to get an estimate of $\nabla_\theta J(\theta, \phi) - \nabla^2_{\phi\theta} \mathcal{L}(\theta, \phi, \lambda)^TH(\theta, \phi, \lambda)\nabla_\phi J(\theta, \phi)$ without applying the Woodbury matrix identity is $\mathcal{O}(MN_\phi^2 + N_\phi^\omega + mN_\theta)$.}

\textbf{Last, we introduce how to calculate $\hat{A}^{-1}$ by recursively applying the Woodbury matrix identity and how it can decrease the time complexity.} To be specific, we have:
\begin{equation} \label{wb_1}
    \begin{aligned}
        \hat{A}^{-1} &= (UV^T + ZZ^T - XY^T + cI)^{-1} = (M_2 + UV^T)^{-1} \\
        & = M_2^{-1} - M_2^{-1}U(I+V^TM_2^{-1}U)^{-1}V^TM_2^{-1},
    \end{aligned}
\end{equation}
where the last equality is the direct result of applying the Woodbury matrix identity \footnote{The Woodbury matrix identity states that $(A + UCV)^{-1} = A^{-1} - A^{-1}U(C^{-1} + VA^{-1}U)^{-1}VA^{-1}$, where $A$ is $n \times n$, $C$ is $k \times k$, $U$ is $n \times k$, and $V$ is $k \times n$. The notations used here are independent of those in the main content.}. Repeating such a process, we can obtain:
\begin{equation} \label{wb_2}
    \begin{aligned}
        M_2^{-1} &= (ZZ^T - XY^T + cI)^{-1} = (M_3 + ZZ^T)^{-1} \\
        & = M_3^{-1} - M_3^{-1}Z(I+Z^TM_3^{-1}Z)^{-1}Z^TM_3^{-1}, \\
        M_3^{-1} 
        &= (cI - XY^T)^{-1} = \frac{1}{c}I + \frac{1}{c}X(I - \frac{1}{c}Y^TX)^{-1}\left(\frac{1}{c}Y^T\right)\\
        & = \frac{1}{c}I + \frac{1}{c}X(cI - Y^TX)^{-1}Y^T. \\
    \end{aligned}
\end{equation}
\textbf{Note that we do not compute or store $\hat{A}^{-1}$.} Instead, we directly compute the multiplication of $\hat{A}^{-1}$ with another matrix. For example, to compute $\hat{A}^{-1}\hat{B}$, we need to compute $M_2^{-1} \hat{B}$ and $M_2^{-1} U$, and $TC(M_2^{-1} \hat{B})$ is dominated by $TC(M_2^{-1} U)$. Then, to compute $M_2^{-1} U$, we need $M_3^{-1} U$ and $M_3^{-1} Z$, and $TC(M_3^{-1} U)$ is dominated by $TC(M_3^{-1} Z)$. We can calculate $M_3^{-1} Z = \frac{1}{c}Z + \frac{1}{c}X(cI - Y^TX)^{-1}Y^TZ$ by first computing $(cI - Y^TX)^{-1}$ and then recursively performing matrix multiplications from right to left, so $TC(M_3^{-1} Z) = \mathcal{O}(M^2 N_\phi + M^3)$. With $M_3^{-1} Z$ and $M_3^{-1} U$, we can compute $M_2^{-1}U$. Still by recursively performing matrix multiplications from right to left, we get $TC(M_2^{-1}U) = \mathcal{O}(M^2 N_\phi + M^2m)$. Finally, we can get $\hat{A}^{-1}\hat{B}$ based on $M_2^{-1}U$ and $M_2^{-1}\hat{B}$ in $\mathcal{O}(m^2N_\phi)$ time. \textbf{To sum up, the entire process of computing $\hat{A}^{-1}\hat{B}$ requires $\mathcal{O}(M^2N_\phi)$ time.}

Similarly, the time complexity of computing $\hat{A}^{-1}\nabla_\phi \hat{J}(\theta, \phi)$ is also $\mathcal{O}(M^2N_\phi)$. Given $\hat{A}^{-1}\hat{B}$, $TC(\hat{S}) = \mathcal{O}(N_\phi)$ and $TC(\hat{S}^{-1}) = \mathcal{O}(1)$. With $\hat{A}^{-1}\hat{B}$, $\hat{A}^{-1}\nabla_\phi \hat{J}(\theta, \phi)$, and $\hat{S}^{-1}$, we can compute $M_1 = \hat{H}(\theta, \phi, \lambda) \nabla_\phi \hat{J}(\theta, \phi) = \hat{A}^{-1}\nabla_\phi\hat{J}(\theta, \phi) + \lambda \hat{A}^{-1}\hat{B}\hat{S}^{-1}\hat{B}^T\hat{A}^{-1}\nabla_\phi\hat{J}(\theta, \phi)$ by recursively performing matrix multiplications from right to left, thus $TC(M_1) = \mathcal{O}(N_\phi)$. Finally, we have $TC(WU^TM_1) = \mathcal{O}(mN_\phi + mN_\theta)$, and \textbf{the overall time complexity to get an estimate of $\nabla_\theta J(\theta, \phi) - \nabla^2_{\phi\theta} \mathcal{L}(\theta, \phi, \lambda)^TH(\theta, \phi, \lambda)\nabla_\phi J(\theta, \phi)$ when applying the Woodbury matrix identity is $\mathcal{O}(M^2N_\phi + mN_\theta)$, which is linear with the number of parameters in the policy and world model.}

\section{Algorithm Details} \label{algo_details}

In this section, we first present the pseudo code of a vanilla version of our algorithm, as a summary of Sections \ref{method} and \ref{rombrl}. Then, we introduce modifications in the actual implementation, for improved sample efficiency and training stability.

\begin{algorithm*}[t]
	\caption{Vanilla ROMBRL}\label{alg:1}
	\begin{algorithmic}[1]
	    \STATE \textbf{Input:} offline dataset -- $\mathcal{D}_\mu$, learning rates -- $\eta_\theta, \eta_\phi, \eta_\lambda$ with $\eta_\phi \gg \eta_\lambda \gg \eta_\theta$, \# of training iterations -- $K$, uncertainty range -- $\epsilon$
            \STATE Obtain $\bar{\phi} \in \arg \max_{\phi} \mathbb{E}_{(s, a, r, s') \sim \mathcal{D}_{\mu}} \left[\log P_\phi(s', r| s, a) \right]$
            \STATE Initialize $\theta^1$, $\phi^1$, and $\lambda^1$ with $\lambda^1 > 0$
            \FOR{$k=1 \cdots K$}
            \STATE Sample trajectories from $P(\cdot; \theta^k, \phi^k)$ and state transitions from $P_{\bar{\phi}}\circ\mathcal{D}_\mu(\cdot)$
            \STATE $\theta^{k+1} \leftarrow \theta^{k} + \eta_\theta \left[\nabla_\theta \hat{J}(\theta^k, \phi^k) - \nabla^2_{\phi\theta} \hat{\mathcal{L}}(\theta^k, \phi^k, \lambda^k)^T\hat{H}(\theta^k, \phi^k, \lambda^k)\nabla_\phi \hat{J}(\theta^k, \phi^k)\right]$
            \STATE $\phi^{k+1} \leftarrow \phi^{k} - \eta_\phi \nabla_\phi \hat{\mathcal{L}}(\theta^k, \phi^k, \lambda^k)$
            \STATE $\lambda^{k+1} \leftarrow [\lambda^k +  \eta_\lambda \nabla_\lambda \hat{\mathcal{L}}(\theta^k, \phi^k, \lambda^k)]^+$
            \ENDFOR
	\end{algorithmic} 
\end{algorithm*}

In Algorithm \ref{alg:1}, Lines 6 -- 8 correspond to the learning dynamics in Eq. \eqref{manner3}, where $\hat{H}(\theta^k, \phi^k, \lambda^k) = \hat{A}^{-1} + \lambda^k \hat{A}^{-1}\hat{B}\hat{S}^{-1}\hat{B}^T\hat{A}^{-1}$, $\hat{S} = \hat{C} - \lambda^k \hat{B}^T\hat{A}^{-1}\hat{B}$, $\hat{A}=\nabla^2_{\phi} \hat{\mathcal{L}}(\theta^k, \phi^k, \lambda^k)$, $\hat{B}=\nabla^2_{\phi\lambda}\hat{\mathcal{L}}(\theta^k, \phi^k, \lambda^k)$, $\hat{C} = \nabla_\lambda \hat{\mathcal{L}}(\theta^k, \phi^k, \lambda^k)$. As a common practice, we use sample-based (unbiased) estimators in place of corresponding expectation terms. The definitions of these terms are available in Eqs. \eqref{j_dev}, \eqref{fim}, and \eqref{l_dev}. Additionally, Eqs. \eqref{wb_1} and \eqref{wb_2} present how to efficiently compute $\hat{A}^{-1}$ using the Woodbury matrix identity. 

Algorithm \ref{alg:1} differs from typical offline MBRL algorithms, such as \cite{DBLP:conf/icml/SunZJLYY23, DBLP:journals/corr/abs-2410-11234}, in several key aspects. \textbf{First,} they learn an ensemble of world models through supervised learning before the MBRL process, as in Line 2 of Algorithm \ref{alg:1}. However, unlike our approach, the world models are not updated alongside the policy during MBRL. \textbf{Second,} they introduce an additional penalty term to the predicted reward from \( P_{\bar{\phi}} \). This penalty is large when there is high variance among the predictions from different ensemble members, discouraging the agent from visiting uncertain regions in the state-action space. \textbf{Third,} at each training iteration, they randomly sample a set of states from \( \mathcal{D}_\mu \) as starting points for imaginary rollouts. During these rollouts, the policy \( \pi_\theta \) interacts with \( P_{\bar{\phi}} \) for a short horizon of length \( l \), collecting transitions into a replay buffer for RL. The use of short-horizon rollouts helps mitigate compounding errors caused by inaccurate world model predictions. Finally, they train \( \pi_\theta \) for multiple epochs using transitions stored in the buffer with Soft Actor-Critic (SAC) \cite{DBLP:conf/icml/HaarnojaZAL18, chen2023multiagentdeepcoveringskill}, an off-policy RL algorithm.

In Algorithm \ref{alg:1}, we propose a co-training scheme for the policy and world model to enhance robustness, addressing uncertainties arising from inaccurate world model predictions or insufficient coverage of \( \mathcal{D}_\mu \). This approach serves as an alternative to using a world model ensemble and ensemble-based reward penalties, which lack strong theoretical justification. 
However, as noted in the third point above, off-policy training can improve sample efficiency, and short-horizon rollouts are generally preferred in model-based RL to mitigate compounding errors. \textbf{In this case, we propose several modifications to Algorithm \ref{alg:1} to enhance its empirical performance, resulting in Algorithm \ref{alg:2}.}

\begin{algorithm*}[t]
	\caption{ROMBRL}\label{alg:2}
	\begin{algorithmic}[1]
	    \STATE \textbf{Input:} offline dataset -- $\mathcal{D}_\mu$, \# of training iterations -- $K$, uncertainty range -- $\epsilon$, learning rates -- $\eta_\theta, \eta_\phi, \eta_\lambda$, \# of training epochs -- $E_\upsilon, E_\theta, E_\phi, E_\lambda$, update rate for the target network -- $\iota$
            \STATE Obtain $\bar{\phi} \in \arg \max_{\phi} \mathbb{E}_{(s, a, r, s') \sim \mathcal{D}_{\mu}} \left[\log P_\phi(s', r| s, a) \right]$
            \STATE Initialize $\upsilon$, $\bar{\upsilon}$, $\theta^1$, $\phi^1$, and $\lambda^1$ with $\lambda^1 > 0$
            \STATE Initialize the replay buffer: $\mathcal{D} \leftarrow \emptyset$, which is a queue with a limited capacity
            \FOR{$k=1 \cdots K$}
            \STATE Sample truncated rollouts $\mathcal{D}_{\text{on}} \sim P(\cdot; \theta^k, \phi^k)$ with initial states randomly drawn from $\mathcal{D}_\mu$
            \STATE Update $\mathcal{D}$ with $\mathcal{D}_{\text{on}}$
            \STATE Train $Q_\upsilon$ and $V_\upsilon$ for $E_\upsilon$ epochs, according to Eq. \eqref{critic}, using samples from $\mathcal{D}$
            \STATE Update the target network: $\bar{\upsilon} \leftarrow \iota v + (1-\iota)\bar{\upsilon}$
            \STATE $\theta' \leftarrow \theta^k, \phi' \leftarrow \phi^k, \lambda' \leftarrow \lambda^k$
            \FOR{$e_\theta=1 \cdots E_\theta$}
            \STATE Sample rollouts from $\mathcal{D}_{\text{on}}$ and state transitions from $P_{\bar{\phi}}\circ\mathcal{D}_\mu(\cdot)$ for gradient estimation
            \STATE $\theta' \leftarrow \theta' + \eta_\theta \left[\nabla_\theta \hat{J}(\theta', \phi^k) - \nabla^2_{\phi\theta} \hat{\mathcal{L}}(\theta', \phi^k, \lambda^k)^T\hat{H}(\theta', \phi^k, \lambda^k)\nabla_\phi \hat{J}(\theta', \phi^k)\right]$
            \ENDFOR
            \FOR{$e_\phi=1 \cdots E_\phi$}
            \STATE Sample rollouts from $\mathcal{D}_{\text{on}}$ and state transitions from $P_{\bar{\phi}}\circ\mathcal{D}_\mu(\cdot)$ for gradient estimation
            \STATE $\phi' \leftarrow \phi' - \eta_\phi \nabla_\phi \hat{\mathcal{L}}(\theta^k, \phi', \lambda^k)$
            \ENDFOR
            \FOR{$e_\lambda = 1 \cdots E_\lambda$}
            \STATE Sample state transitions from $P_{\bar{\phi}}\circ\mathcal{D}_\mu(\cdot)$ for gradient estimation
            \STATE $\lambda' \leftarrow [\lambda' +  \eta_\lambda \nabla_\lambda \hat{\mathcal{L}}(\theta^k, \phi^k, \lambda')]^+$
            \ENDFOR
            \STATE $\theta^{k+1} \leftarrow \theta', \phi^{k+1} \leftarrow \phi', \lambda^{k+1} \leftarrow \lambda'$
            \ENDFOR
	\end{algorithmic} 
\end{algorithm*}

Given that the underlying world model is continuously updated, maintaining a large replay buffer and computing policy gradients based on outdated, off-policy trajectories, as done in SAC, becomes infeasible. \textbf{To improve sample efficiency, instead of updating the policy and world model only once per learning iteration, as done in Lines 6 and 7 of Algorithm \ref{alg:1}, we can update them for multiple epochs using the same batch of on-policy samples.} Following PPO \cite{DBLP:journals/corr/SchulmanWDRK17, DBLP:journals/corr/abs-2405-16386}, 
we apply gradient masks to both the policy and world model to regulate parameter updates based on outdated data. Specifically, the policy gradient mask for $(s_t, a_t)$ within a trajectory $\tau$ is defined as:  
\[
m^\pi_t(\tau) =
\begin{cases}
1, & \text{if } r_\theta(s_t, a_t) \hat{A}(s_t, a_t; \tau) \leq \text{clip}(r_\theta(s_t, a_t), 1 - \epsilon_c, 1 + \epsilon_c) \hat{A}(s_t, a_t; \tau) \\
0, & \text{otherwise}
\end{cases}
\]
where $\epsilon_c>0$ is the clipping rate, \( r_\theta(s_t, a_t) = \frac{\pi_\theta(a_t | s_t)}{\pi_{\theta^k}(a_t | s_t)} \) is the importance sampling ratio, and \( \hat{A}(s_t, a_t; \tau) \) is the generalized advantage estimation (GAE), computed based on a value function and the trajectory sample $\tau$. The world model gradient mask $m^P_t(\tau)$ can be similarly defined by replacing $r_\theta(s_t, a_t)$ with \( r_\phi(s_t, a_t, r_t, s_{t+1}) = \frac{P_\phi(r_t, s_{t+1} | s_t, a_t)}{P_{\phi^k}(r_t, s_{t+1} | s_t, a_t)} \) and substituting $\hat{A}(s_t, a_t; \tau)$ with $\hat{A}(s_t, a_t, r_t, s_{t+1}; \tau)$. In particular, given a trajectory $\tau = (s_0, a_0, r_0, s_1, \cdots, s_l, a_{l})$, the GAE at time step $t$ can be computed as follows:
\begin{equation}
\begin{aligned}
    \hat{A}(s_t, a_t; \tau) = \sum_{i=t}^{l-1} (\gamma \zeta)^{i-t} \left[r_i + \gamma V_\upsilon(s_{i+1}) - V_\upsilon(s_i)\right],\qquad\qquad\ \\
    \hat{A}(s_t, a_t, r_t, s_{t+1}; \tau) = \sum_{i=t}^{l-1} (\gamma \zeta)^{i-t}\left[ r_i + \gamma Q_\upsilon(s_{i+1}, a_{i+1}) - Q_\upsilon(s_i, a_i)\right].
\end{aligned}
\end{equation}
Here, $\zeta \in (0, 1]$ is a hyperparameter\footnote{When $\zeta=1$, $\hat{A}(s_t, a_t; \tau) = -V_\upsilon(s_t) + r_t + \cdots + \gamma^{l-1-t}r_{l-1} + \gamma^{l-t} V_{\upsilon}(s_l)$ and $\hat{A}(s_t, a_t, r_t, s_{t+1}; \tau) = -Q_\upsilon(s_t, a_t) + r_t + \cdots + \gamma^{l-1-t}r_{l-1} + \gamma^{l-t} Q_{\upsilon}(s_l, a_l)$.}; $\hat{A}(s_t, a_t, r_t, s_{t+1}; \tau)$ serves as an analogy to $\hat{A}(s_t, a_t; \tau)$, since for the world model $P_\phi(r_t, s_{t+1}|s_t, a_t)$, $(s_t, a_t)$ is the "state" and $(r_t, s_{t+1})$ is the "action". \textbf{Notably, with the value functions, we can use truncated rollouts $\tau$ (with a horizon $l < h$) to compute policy/model gradients by replacing the return-to-go $\sum_{j=t}^{h-1}\gamma^j r_j$ with $\gamma^t \hat{A}(s_t, a_t; \tau)$ or $\gamma^t \hat{A}(s_t, a_t, r_t, s_{t+1}; \tau)$, effectively mitigating compounding errors that arise from long-horizon model predictions.} Unlike the policy, the value functions can be trained using off-policy samples from a replay buffer. The objectives at iteration $k$ are given by: ($V_{\bar{\upsilon}}$ is the target value network.)
\begin{equation} \label{critic}
\begin{aligned}
    \min_{V_\upsilon} \mathbb{E}_{s \sim \mathcal{D}}\left[(V_\upsilon(s) - \mathbb{E}_{a \sim \pi_{\theta^k}(\cdot|s)}[Q_\upsilon(s ,a)])^2\right],\qquad\quad \\
    \min_{Q_\upsilon} \mathbb{E}_{(s, a) \sim \mathcal{D}}\left[(Q_\upsilon(s, a) - \mathbb{E}_{(r, s') \sim P_{\phi^k}(\cdot|s, a)}[r + V_{\bar{\upsilon}}(s')])^2\right].
\end{aligned}
\end{equation}
As in PPO, we apply gradient masks to Eq. (\ref{manner3}) to ensure that only selected state-action pairs contribute to the policy/model update, while the gradient information from all other pairs is masked out. Specifically, $\nabla_\phi J(\theta^k, \phi^k)$ in the 2nd line of Eq. (\ref{manner3}) is substituted with:
{\small
\begin{equation} \label{eff_1}
    \nabla_\phi J(\theta^k, \phi') = \mathbb{E}_{\tau \sim P(\cdot; \theta^k, \phi^k)} \left[\sum_{t=0}^{l-1} m^P_t(\tau) \gamma^t \hat{A}(r_{t}, s_{t+1}|s_t, a_t; \tau) \nabla_\phi \log P_{\phi'}(r_t, s_{t+1}|s_t, a_t)\right]
\end{equation}
}
Also, $\nabla_\theta J(\theta^k, \phi^k)$ and $\nabla^2_{\phi\theta} \mathcal{L}(\theta^k, \phi^k, \lambda^k)$ in the 1st line of Eq. (\ref{manner3}) are substituted with:
{\small
\begin{equation} \label{eff_2}
\begin{aligned}
    \nabla_\theta J(\theta', \phi^k) = \mathbb{E}_{\tau \sim P(\cdot; \theta^k, \phi^k)} \left[\sum_{t=0}^{l-1} m_t^\pi(\tau) \gamma^t \hat{A}(s_t, a_t; \tau) \nabla_\theta \log \pi_{\theta'}(a_t|s_t)\right], \qquad\qquad\quad\\
    \nabla^2_{\phi\theta} \mathcal{L}(\theta', \phi^k, \lambda^k) = \nabla^2_{\phi\theta} J(\theta', \phi^k) = \mathbb{E}_{\tau \sim P(\cdot; \theta^k, \phi^k)} \left[\nabla_\phi \Psi(\tau, \phi^k)\left( \sum_{t=0}^{l-1} m^\pi_t(\tau) \nabla_\theta\log \pi_{\theta'}(a_t|s_t)\right)^T\right] 
\end{aligned}
\end{equation}
}

\textbf{To sum up, we propose using a multi-epoch update mechanism enabled by gradient masks to improve sample efficiency and adopting truncated rollouts based on value functions to mitigate compounding errors.} We provide the detailed pseudo code as Algorithm \ref{alg:2}. Definitions of the gradient updates in Lines 13, 17, 21 are available in Eqs. \eqref{manner3}, \eqref{j_dev}, \eqref{fim}, \eqref{l_dev}, \eqref{wb_1}, \eqref{wb_2}, \eqref{eff_1}, and \eqref{eff_2}.

\section{Details of the Tokamak Control Tasks} \label{DetTCT}

\begin{table}[htbp]
\centering
\caption{The state and action spaces of the tokamak control tasks.}
\begin{tabular}{|c||c|}
\hline
\multicolumn{2}{|c|}{STATE SPACE} \\
\hline
{Scalar States} & \makecell{$\beta_N$, Internal Inductance, Line Averaged Density,\\ Loop Voltage, Stored Energy} \\
\hline
{Profile States} & \makecell{Electron Density, Electron Temperature, Pressure,\\ Safety Factor, Ion Temperature, Ion Rotation} \\
\hline
\multicolumn{2}{c}{}\\[-0.5em]
\hline
\multicolumn{2}{|c|}{ACTION SPACE} \\
\hline
{Targets} & {Current Target, Density Target} \\
\hline
{Shape Variables} & \makecell{Elongation, Top Triangularity, Bottom Triangularity, Minor Radius, \\Radius and Vertical Locations of the Plasma Center} \\
\hline
{Direct Actuators} & \makecell{Power Injected, Torque Injected, Total Deuterium Gas Injection,\\ Total ECH Power, Magnitude and Sign of the Toroidal Magnetic Field} \\
\hline
\end{tabular}
\label{table:7}
\end{table}

Nuclear fusion is a promising energy source to meet the world’s growing demand. It involves fusing the nuclei of two light atoms, such as hydrogen, to form a heavier nucleus, typically helium, releasing energy in the process. The primary challenge of fusion is confining a plasma, i.e., an ionized gas of hydrogen isotopes, while heating it and increasing its pressure to initiate and sustain fusion reactions. The tokamak is one of the most promising confinement devices. It uses magnetic fields acting on hydrogen atoms that have been ionized (given a charge) so that the magnetic fields can exert a force on the moving particles \cite{1512794}. 

The authors of \cite{DBLP:journals/corr/abs-2404-12416} trained an ensemble of deep recurrent probabilistic neural networks as a surrogate dynamics model for the DIII-D tokamak, a device located in San Diego, California, and operated by General Atomics, using a large dataset of operational data from that device. A typical shot (i.e., episode) on DIII-D lasts around 6-8 seconds, consisting of a one-second ramp-up phase, a multi-second flat-top phase, and a one-second ramp-down phase. The DIII-D also features several real-time and post-shot diagnostics that measure the magnetic equilibrium and plasma parameters with high temporal resolution. \textbf{The authors demonstrate that the learned model predicts these measurements for entire shots with remarkable accuracy. Thus, we use this model as a "ground truth" simulator for tokamak control tasks.} Specifically, we generate a dataset of 111305 transitions for offline RL by replaying actuator sequences from real DIII-D operations through the ensemble of dynamics models. Policy evaluation is conducted using a noisy version of this data-driven simulator to assess deployment robustness.

The state and action spaces for the tokamak control tasks are outlined in Table \ref{table:7}. For detailed physical explanations of them, please refer to \cite{abbate2021data, DBLP:conf/l4dc/CharABBCCEMRKS23, ariola2008magnetic}. The state space consists of five scalar values and six profiles which are discretized measurements of physical quantities along the minor radius of the toroid. After applying principal component analysis \cite{mackiewicz1993principal}, the pressure profile is reduced to two dimensions, while the other profiles are reduced to four dimensions each. In total, the state space comprises 27 dimensions. The action space includes direct control actuators for neutral beam power, torque, gas, ECH power, current, and magnetic field, as well as target values for plasma density and plasma shape, which are managed through a lower-level control module. Altogether, the action space consists of 14 dimensions. \textbf{Following guidance from fusion experts, we select a subset of the state space: $\beta_N$ and all profile states, as policy inputs, spanning 23 dimensions. The policy is trained to control five direct actuators: power, torque, total ECH power, the magnitude and sign of the toroidal magnetic field. For the remaining actuators listed in Table \ref{table:7}, we replay the corresponding sequences from recorded DIII-D operations.}

For evaluation, we select 9 high-performance reference shots from DIII-D, which span an average of 251 time steps, and use the trajectories of Ion Rotation, Electron Density, and $\beta_N$ within these shots as targets for three tracking tasks. Specifically, $\beta_N$ is the normalized ratio between plasma pressure and magnetic pressure, a key quantity serving as a rough economic indicator of efficiency. \textbf{Since the tracking targets vary over time, we also include the target quantity as part of the policy input.} The reward function for each task is defined as the negative squared tracking error of the corresponding quantity (i.e., rotation, density, or $\beta_N$) at each time step. Notably, for policy learning, the reward function is provided rather than learned from the offline dataset as in D4RL tasks; and the dataset (for offline RL) does not include the reference shots.

\section{Additional Ablation and Sensitivity Analysis}
\label{app:sensitivity}

\subsection{Sensitivity to Uncertainty Radius ($\epsilon$)}
We evaluated ROMBRL on \texttt{hopper-medium-replay} under different noise injection levels ($\delta \in \{0.05, 0.1, 0.2 \}$) with varying uncertainty radii $\epsilon \in \{1, 5, 10, 20, 50\}$. The results are plotted in Figure~\ref{fig:epsilon_sensitivity}.

We observe a clear correlation between the optimal $\epsilon$ and the noise intensity. At low noise levels ($0.05$), a small radius is sufficient. However, as the testing noise increases to $0.2$, using a small $\epsilon$ (e.g., $\epsilon=1$) results in sub-optimal performance, as the constrained uncertainty set is too narrow to encompass the actual dynamics mismatch. Increasing $\epsilon$ (e.g., to $20$ or $50$) effectively recovers robustness. This validates the intuition that a larger uncertainty set is required to defend against larger environmental perturbations.

\begin{figure}[h]
    \centering
    \includegraphics[width=0.6\linewidth]{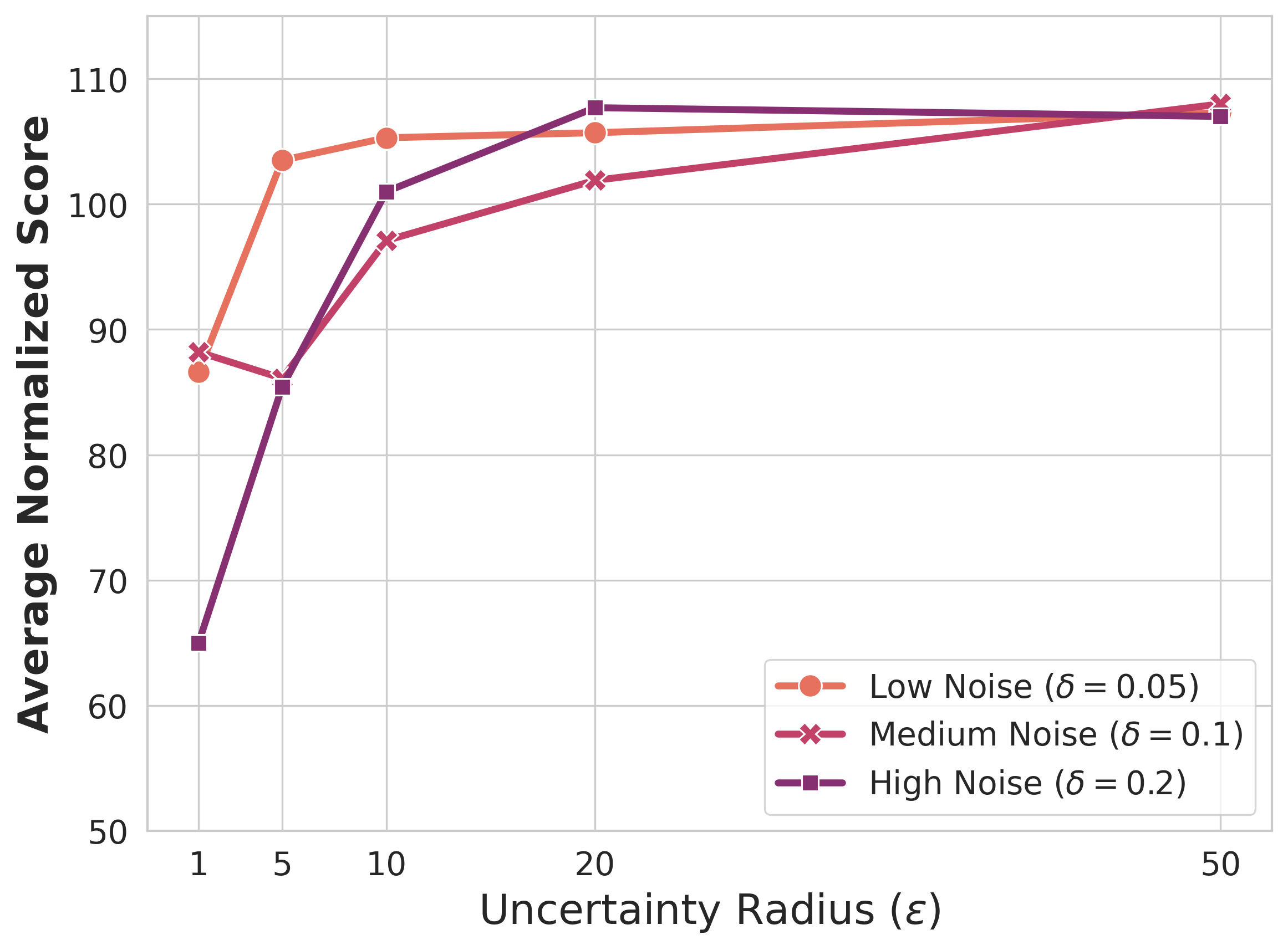} 
    \caption{\textbf{Sensitivity Analysis of Uncertainty Radius ($\epsilon$).} Performance under varying testing noise levels. Larger noise levels require a larger $\epsilon$ to maintain robustness, while an overly small $\epsilon$ leads to performance degradation.}
    \label{fig:epsilon_sensitivity}
\end{figure}

\subsection{Verification of Two-Timescale Learning Rates}
Our theoretical analysis in Section~\ref{method} suggests that the follower (world model) must converge significantly faster than the leader (policy) to ensure the leader observes a valid best response. To verify this, we conducted a grid search over policy learning rates $\eta_\theta$ and model learning rates $\eta_\phi$ on \texttt{hopper-medium-replay} under 5\% noise injections. The results are visualized in Figure~\ref{fig:lr_heatmap}.

The experiment strictly aligns with our theoretical derivation. We observe a sharp performance collapse (score dropping to $\approx 3.3$) when the condition $\eta_\theta \ge \eta_\phi$ occurs (e.g., $\eta_\theta=10^{-2}, \eta_\phi=10^{-2}$). High performance is consistently achieved only in the region where the two-timescale rule $\eta_\phi \gg \eta_\theta$ is satisfied (e.g., $\eta_\phi=10^{-2}, \eta_\theta=10^{-4}$), ensuring stable convergence of the Stackelberg game.

\begin{figure}[h]
    \centering
    \includegraphics[width=0.5\linewidth]{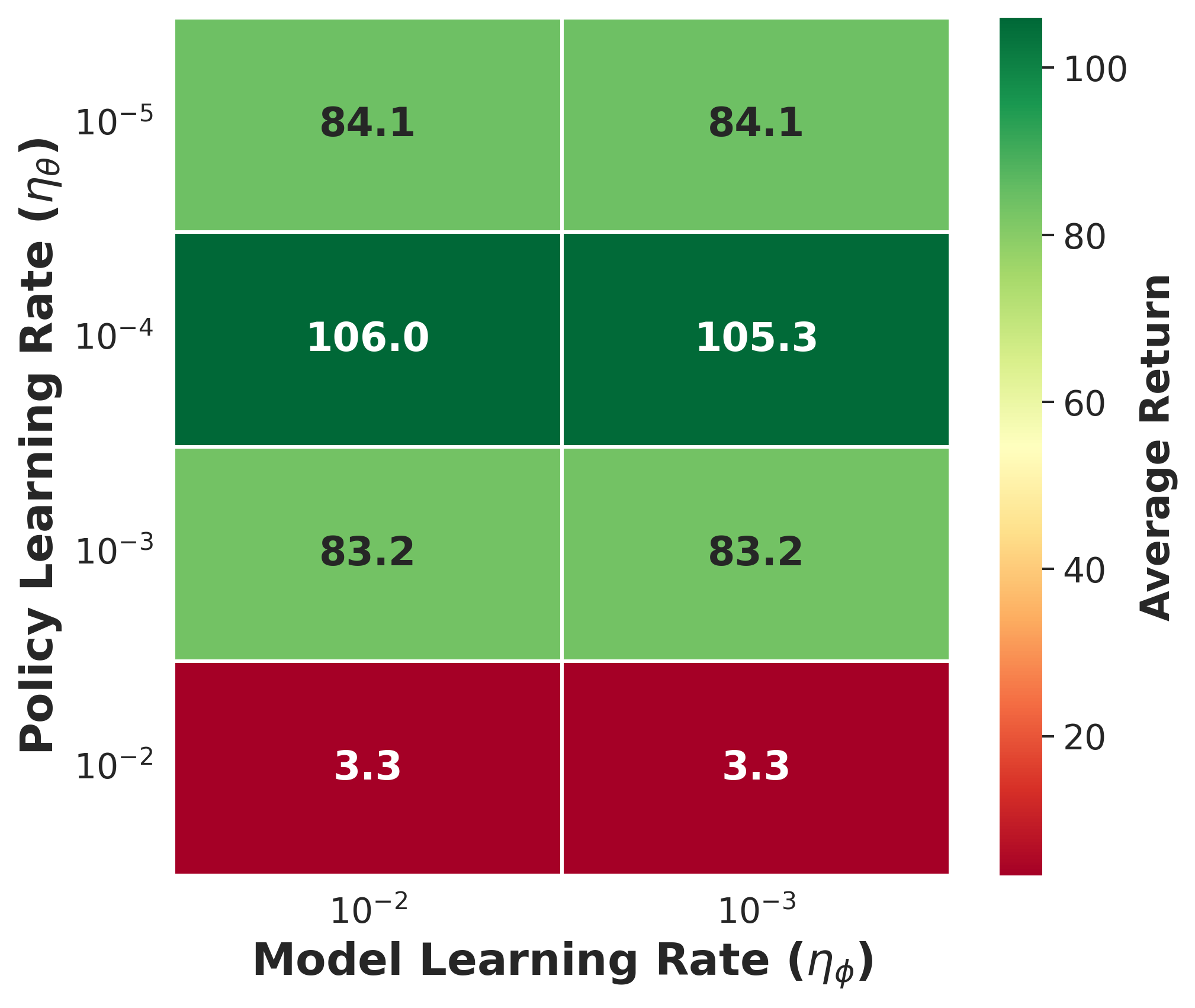}
    \caption{Impact of Learning Rates. The heatmap shows average returns for different combinations of $\eta_\theta$ and $\eta_\phi$. Performance collapses when $\eta_\theta \ge \eta_\phi$, verifying the necessity of the two-timescale update rule.}
    \label{fig:lr_heatmap}
\end{figure}

\section{Experimental Setup Details}
\label{sec:appendix_setup_details}

To ensure statistical validity and reproducibility, all reported results are the mean and standard deviation over \textbf{5 independent random seeds}. We adhered to a principle of fair comparison for hyperparameter selection. For the D4RL MuJoCo tasks, all baselines were configured using the recommended hyperparameters provided in the official OfflineRL-Kit repository~\cite{}. For our method, ROMBRL, its base architectural hyperparameters are inherited from MOBILE, while its core robustness parameter, the uncertainty radius $\epsilon$, was \textbf{kept fixed at 10 for all experiments} without any task-specific tuning.

For the more challenging Tokamak Control tasks, no algorithm-specific hyperparameter tuning was performed for any method, including our own. Instead, to maintain a consistent and unbiased evaluation setting, we adopted the hyperparameters from a recognized difficult task within the MuJoCo suite, Walker2d-med-exp~\cite{}. This choice was motivated by its medium-expert data quality, which most closely resembles the dataset characteristics of the Tokamak tasks.

\begin{table}[h!]
  \centering
  \caption{Comparison of computational cost (wall-clock time per training epoch) and GPU memory usage, averaged over the \texttt{-expert} D4RL tasks. All experiments were conducted on a single NVIDIA Tesla V100 GPU.}
  \vspace{0.10in}
  \label{tab:appendix_costs}
  \begin{tabular}{lccc}
    \toprule
    \textbf{Metric} & \textbf{ROMBRL (ours)} & \textbf{MOBILE} & \textbf{RAMBO} \\
    \midrule
    Runtime (ms/epoch) & 31.85 & \textbf{17.97} & 28.12 \\
    Memory Usage (MB)  & 1597  & \textbf{1586}  & 1609  \\
    \bottomrule
  \end{tabular}
\end{table}

\section{Additional Experimental Results}
\label{sec:appendix_additional_results}

In this section, we provide supplementary experimental results to address the valuable feedback from the reviewers. These include a comparative analysis of computational costs, an evaluation of our algorithm's robustness under a higher noise intensity, and detailed training curves for all D4RL MuJoCo tasks.

\subsection{Computational Cost Analysis}

To address concerns regarding computational efficiency, we provide a comparison of the wall-clock runtime per training epoch and the peak GPU memory usage. The experiments were conducted on a single NVIDIA Tesla V100-SXM2-32GB GPU, with results averaged over the expert-level tasks for Hopper, Walker2d, and HalfCheetah. As shown in Table~\ref{tab:appendix_costs}, the memory usage of ROMBRL is comparable to both MOBILE and RAMBO. While ROMBRL's runtime is higher than that of MOBILE, it is comparable to RAMBO, another baseline specifically designed for robustness. This indicates that the computational overhead of our method is within a reasonable range for robust offline RL algorithms. Additionally, our approach requires no additional components during deployment, and this strategy performs comparably to all other baselines in real-world tasks.

\subsection{Robustness under Higher Noise Levels}

While our main experiments use a 5\% noise level, which is a common assumption for sensor measurement noise in real-world systems, we conducted further experiments to evaluate robustness under more significant perturbations. Table~\ref{tab:appendix_10percent_noise} presents a comparison on all Hopper tasks with the deployment noise level increased to 10\%. The results demonstrate that ROMBRL maintains its strong performance and continues to outperform strong baselines, highlighting its resilience against increased environmental uncertainty.

\begin{table}[h!]
  \centering
  \caption{Performance comparison on all Hopper tasks under a higher intensity of deployment noise (10\% Gaussian noise). For each task, the ``2nd-Best Algorithm'' is the baseline that achieved the second-highest score under the original 5\% noise condition (as detailed in Table~1). This comparison evaluates how previous top contenders perform under more significant perturbations. Results are averaged over 3 seeds.}
  \vspace{0.10in}
  \label{tab:appendix_10percent_noise}
  \begin{tabular}{lcc}
    \toprule
    \textbf{Hopper Task} & \textbf{ROMBRL (ours)} & \textbf{2nd-Best Algorithm} \\
    \midrule
    hopper-random       & \textbf{31.3 (0.7)} & 22.3 (1.0) [RAMBO]   \\
    hopper-medium       & \textbf{105.0 (0.0)}& 104.7 (0.6) [RORL] \\
    hopper-medium-replay& \textbf{108.0 (1.0) }      & 99.7 (4.8) [MOBILE]  \\
    hopper-medium-expert& \textbf{112.2 (0.8)}& 108.1 (7.9) [COMBO]  \\
    \bottomrule
  \end{tabular}
\end{table}

\subsection{Training Curves}

To provide a comprehensive view of the learning dynamics, this section presents the training curves for both the D4RL MuJoCo and the Tokamak Control benchmarks. Figure~\ref{fig:1} illustrates the progression of evaluation scores on all D4RL MuJoCo tasks. Similarly, Figure~\ref{fig:3} shows the learning curves for the three challenging Tokamak Control tasks. For both benchmarks, all policies are evaluated in their respective perturbed environments to assess deployment robustness throughout the training process.

\begin{figure*}[t]
    \centering
    \includegraphics[width=\linewidth, height=4.5in]{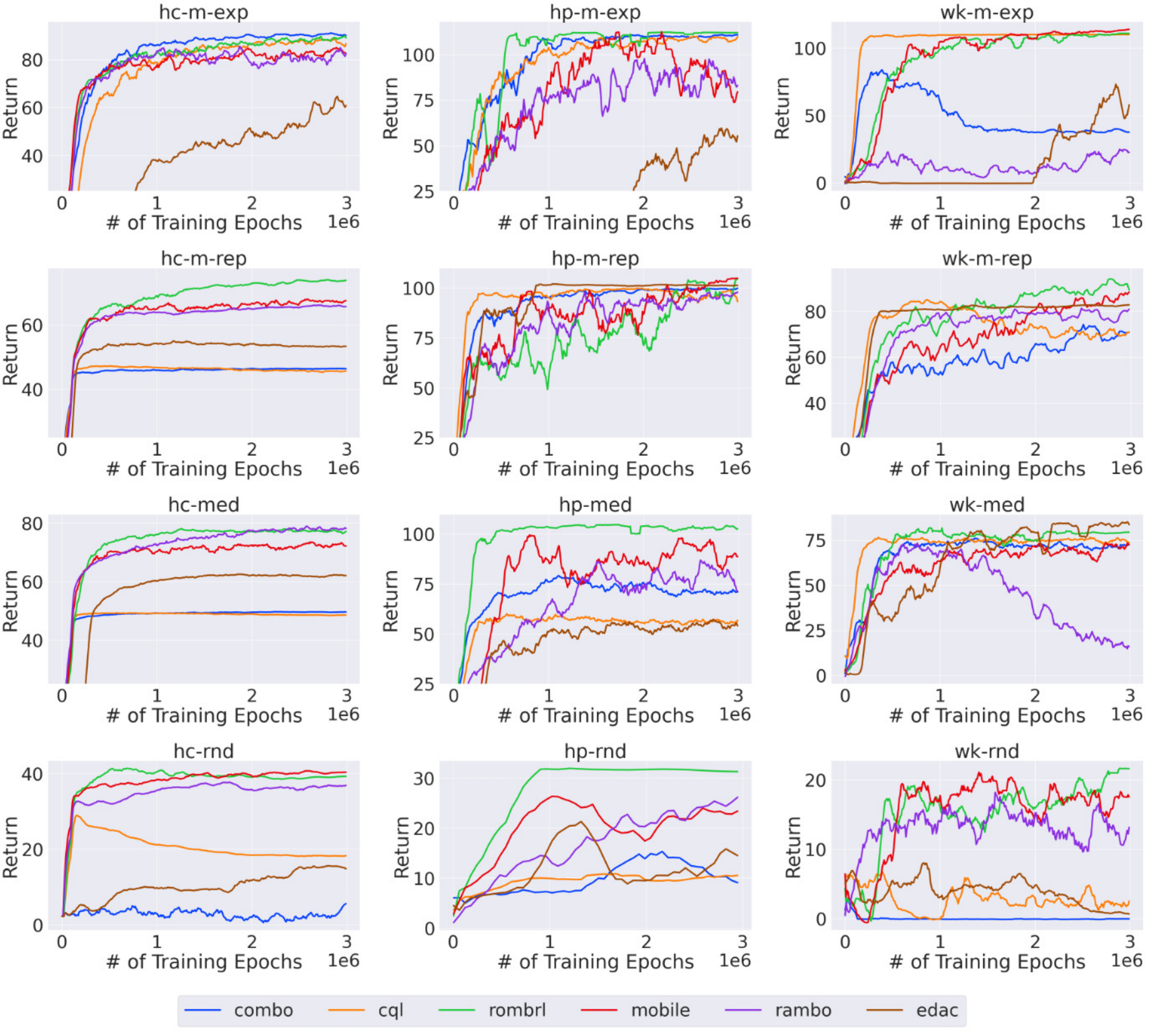} 
    \vspace{-.1in}
    \caption{Evaluation results on D4RL MuJoCo. The figure shows the progression of evaluation scores over training epochs for the proposed algorithm and baseline methods. Solid lines indicate the average performance across multiple random seeds. For clarity of presentation, the curves have been smoothed using a sliding window, and confidence intervals are omitted.}
    \vspace{-.1in}
    \label{fig:1}
\end{figure*}

\begin{figure*}[h]
    \centering
    \includegraphics[width=\linewidth, height=1.4in]{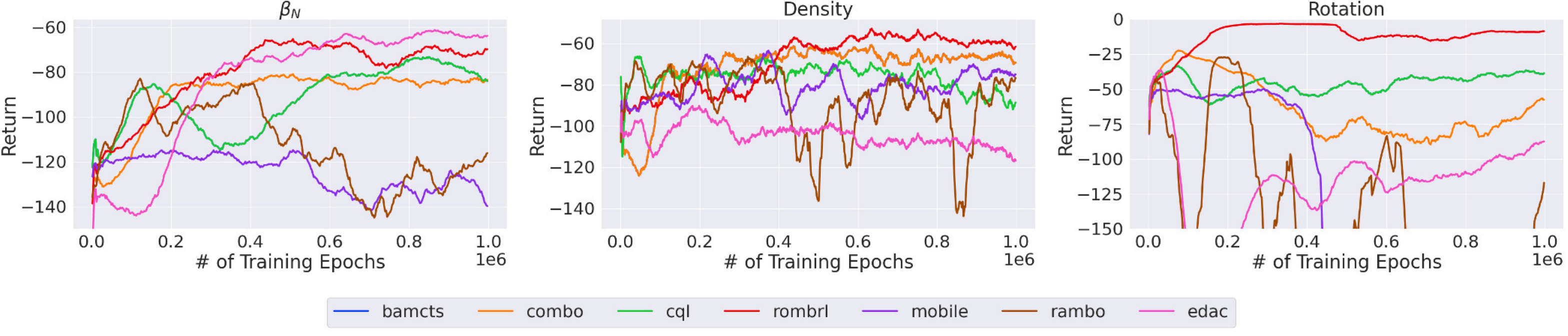} 
    \vspace{-.1in}
    \caption{Evaluation results on Tokamak Control tasks. The figure shows the progression of episodic tracking errors over training epochs for the proposed algorithm and baseline methods. Solid lines indicate the average performance across multiple random seeds. For clarity of presentation, the curves have been smoothed using a sliding window, and confidence intervals are omitted.}
    \label{fig:3}
\end{figure*}


\end{document}